\newcommand{\cmark}{{\color{green}\ding{51}}}%
\newcommand{\xmark}{{\color{red}\ding{55}}}%
\lstdefinestyle{mystyle}{
    commentstyle=\color{OliveGreen},
    keywordstyle=\color{BurntOrange},
    numberstyle=\tiny\color{black!60},
    stringstyle=\color{MidnightBlue},
    basicstyle=\ttfamily,
    breakatwhitespace=false,
    breaklines=true,
    captionpos=b,
    keepspaces=true,
    numbers=left,
    numbersep=5pt,
    showspaces=false,
    showstringspaces=false,
    showtabs=false,
    tabsize=2
}
\tikzset{
    -Latex,auto,node distance =1 cm and 1 cm,semithick,
    state/.style ={circle, draw, minimum width = 0.7 cm},
    detstate/.style ={rectangle, draw, minimum width = 0.7 cm, minimum height = 0.7 cm},
    point/.style = {circle, draw, inner sep=0.04cm,fill,node contents={}},
    bidirected/.style={Latex-Latex,dashed},
    el/.style = {inner sep=2pt, align=left, sloped}
}
\DeclareRobustCommand{\mb}[1]{\ensuremath{\boldsymbol{\mathbf{#1}}}}
\DeclareRobustCommand{\KL}[2]{\ensuremath{\textrm{KL}\left[#1\;\|\;#2\right]}}
\DeclareMathOperator*{\argmin}{arg\,min}
\renewcommand{\mid}{~\vert~}
\newcommand{\mba}{\mb{a}}
\newcommand{\mbb}{\mb{b}}
\newcommand{\mbx}{\mb{x}}
\newcommand{\mby}{\mb{y}}
\newcommand{\mbz}{\mb{z}}
\newcommand{\mbH}{\mb{H}}
\newcommand{\mbI}{\mb{I}}
\newcommand{\mbepsilon}{\mb{\epsilon}}
\newcommand{\cF}{\mathcal{F}}
\newcommand{\cN}{\mathcal{N}}
\newcommand{\cR}{\mathcal{R}}
\newcommand{\cP}{\mathcal{P}}
\newcommand{\E}{\mathbb{E}}
\newcommand{\cB}{\mathcal{B}}
\newcommand{\g}{\mid}
\newtheorem{prop}{Proposition}
\newtheorem{thmdef}{Definition}
\newtheorem{lemma}{Lemma}
\newtheorem{thm}{Theorem}
\newtheorem*{lemma*}{Lemma}
\crefname{lemma}{lemma}{lemmas}
\crefname{prop}{proposition}{propositions}
\newcommand{\vz}{\mathtt{z}}
\newcommand{\vx}{\mathtt{x}}
\newcommand{\vy}{\mathtt{y}}
\newcommand{\indep}{\rotatebox[origin=c]{90}{$\models$}}
\newcommand{\nindep}{\rotatebox[origin=c]{90}{$\not\models$}}
\newcommand{\mbeps}{\mbepsilon}
\newcommand{\kld}{\mathbf{KL}}
\newcommand{\ind}{\mathbf{1}}
\newcommand{\xp}{{x^\prime}}
\newcommand{\ptr}{{p_{tr}}}
\newcommand{\hatptr}{{\hat{p}_{tr}}}
\newcommand{\pte}{{p_{te}}}
\newcommand{\pind}{{p_{\scaleto{\indep}{4pt}}}}
\newcommand{\pindprime}{{p^\prime_{\scaleto{\indep}{4pt}}}}
\newcommand{\hatpind}{{\hat{p}_{\scaleto{\indep}{4pt}}}}
\newcommand{\hatp}{{\hat{p}}}
\newcommand{\pd}{{p_D}}
\newcommand{\pdp}{{p_D^\prime}}
\newcommand{\qa}{{q_a}}
\newcommand{\qb}{{q_b}}
\newcommand{\varqa}{\sigma^2_{q_a}}
\newcommand{\varqb}{\sigma^2_{q_b}}
\newcommand{\meanqb}{\E_{q_b}}
\newcommand{\var}{\text{var}}
\newcommand{\aprime}{{ ( 1 + a)}}
\newcommand{\aminus}{{ ( 1-a)}}
\newcommand{\bprime}{{(1 + b)}}
\newcommand{\bminus}{{ (1 - b)}}
\newcommand{\ruv}{r_{u,v}}
\newcommand{\pinda}{{p_{\scaleto{\indep}{4pt},1}}}
\newcommand{\pindb}{{p_{\scaleto{\indep}{4pt},2}}}
\newcommand{\perf}{\mathtt{Perf}}
\newcommand{\perfte}{\perf_\pte}
\newcommand{\sumybin}{\sum_{y\in \{0,1\}}}
\newcommand{\variance}{\sigma^2}
\newacronym{KL}{kl}{Kullback-Leibler}
\newacronym{ELBO}{elbo}{\emph{evidence lower bound}}
\newacronym{POPELBO}{pop-elbo}{\emph{population evidence lower bound}}
\newacronym{SVI}{svi}{stochastic variational inference}
\newacronym{BUMPVI}{bump-vi}{bumping variational inference}
\newacronym{GMM}{gmm}{Gaussian mixture model}
\newacronym{LDA}{lda}{latent Dirichlet allocation}
\newacronym{SUTVA}{sutva}{stable unit treatment value assumption}
\newacronym{KSD}{ksd}{{kernelized Stein discrepancy}}
\newacronym{KCC-SD}{kcc-sd}{kernelized complete conditional Stein discrepancy}
\newacronym{OPVI}{opvi}{operator variational inference}
\newacronym{SVGD}{svgd}{Stein variational gradient descent}
\newacronym{erm}{ERM}{Empirical Risk minimization}
\newacronym{nurd}{NuRD}{Nuisance-Randomized Distillation}
\newacronym{ood}{ood}{out-of-distribution}
\newcommand{\nrd}{nuisance-randomized distribution\xspace}
\newcommand{\nrc}{nuisance-randomized conditional\xspace}
\newcommand{\spu}{nuisance-induced spurious correlations\xspace}
\newcommand{\SPU}{Nuisance-Induced Spurious Correlations}
\newcommand{\nvf}{nuisance-varying family\xspace}
\title{{\textbf{Out-of-distribution Generalization in the Presence of \SPU}}}
\author{
  \hspace{10pt}  Aahlad Puli\textsuperscript{1}\thanks{\textsuperscript{1}Corresponding email: \texttt{aahlad@nyu.edu}. The code is available \href{https://github.com/rajesh-lab/nurd-code-public}{here}.
  } \hspace{15pt} Lily H. Zhang\textsuperscript{2} \hspace{15pt} Eric K. Oermann\textsuperscript{2,3,4} \hspace{15pt} Rajesh Ranganath\textsuperscript{1,2,5}
     \\\\
     \hspace{-10pt} 
     \textsuperscript{1}Department of Computer Science,
     New York University 
     \\
     \hspace{-10pt} 
     \textsuperscript{2}Center for Data Science,
     New York University
     \\
     \hspace{-10pt} 
     \textsuperscript{3}Department of Neurosurgery, Langone Health,
     New York University 
     \\
     \textsuperscript{4}Department of Radiology, Langone Health,
     New York University \\
     \textsuperscript{5}Department of Population Health,
     Langone Health,
    New York University
}
\date{}
\begin{document}

\maketitle

\begin{abstract}
  \noindent In many prediction problems, spurious correlations are induced by a changing relationship between the label and a nuisance variable that is also correlated with the covariates. For example, in classifying animals in natural images, the background, which is a nuisance, can predict the type of animal. This nuisance-label relationship does not always hold, and the performance of a model trained under one such relationship may be poor on data with a different nuisance-label relationship. To build predictive models that perform well regardless of the nuisance-label relationship, we develop \glsreset{nurd}\gls{nurd}.
    We introduce the nuisance-randomized distribution, a distribution where the nuisance and the label are independent. Under this distribution, we define the set of representations such that conditioning on any member, the nuisance and the label remain independent. We prove that the representations in this set always perform better than chance, while representations outside of this set may not. \gls{nurd} finds a representation from this set that is most informative of the label under the nuisance-randomized distribution, and we prove that this representation achieves the highest performance regardless of the nuisance-label relationship. We evaluate \gls{nurd} on several tasks including chest X-ray classification where, using non-lung patches as the nuisance, \gls{nurd} produces models that predict 
  pneumonia under strong spurious correlations.
\end{abstract}


\section{Introduction}

Spurious correlations are relationships between the label and the covariates that are prone to change between training and test distributions~\citep{gulrajani2020search}.
Predictive models that exploit spurious correlations
can perform worse than even predicting without covariates on the test distribution~\citep{arjovsky2019invariant}.
Discovering spurious correlations requires more than the training distribution because 
any single distribution has a fixed label-covariate relationship.
Often, spurious correlations are discovered by noticing different relationships across multiple distributions between the label and \textit{nuisance} factors correlated with the covariates.
We call these \textit{\spu}.

For example, in classifying cows vs. penguins, typical images have cows appear on grasslands and penguins appear near snow, their respective natural habitats~\citep{arjovsky2019invariant,geirhos2020shortcut}, but these animals can be photographed outside their habitats.
In classifying hair color from celebrity faces on CelebA~\citep{liu2015deep}, gender is correlated with the hair color.
This relationship may not hold in different countries~\citep{sagawa2020investigation}.
In language, sentiment of a movie review determines the types of words used in the review to convey attitudes and opinions.
However, directors' names appear in the reviews and are correlated with positive sentiment in time periods where directors make movies that are well-liked~\citep{wang2020identifying}.
In X-ray classification, conditions like pneumonia are spuriously correlated with non-physiological
traits of X-ray images due to the association between the label and hospital X-ray collection protocols~\citep{zech2018variable}.
Such factors are rarely recorded in datasets but produce subtle differences in X-ray images that convolutional networks easily learn~\citep{badgeley2019deep}.

We formalize~\spu in a \nvf{} of distributions where any two distributions are different only due to the differences in the nuisance-label relationship.
As the nuisance is informative of the label, predictive models exploit the nuisance-label relationship to achieve the best performance on any single member of the family.
However, predictive models that perform best on one member can perform even worse than predicting without any covariates on another member, which may be \gls{ood}.
We develop \gls{nurd} to use data collected under one nuisance-label relationship to build predictive models that perform well on other members of the family regardless of the nuisance-label relationship in that member.
Specifically, \gls{nurd} estimates a conditional distribution which has \gls{ood} generalization guarantees across the nuisance-varying family.

In \cref{sec:uncorrelating}, we motivate and develop ideas that help guarantee performance on every member of the family.
The first is the \textit{nuisance-randomized distribution}: a distribution where the nuisance is independent of the label.
An example is the distribution where cows and penguins have equal chances of appearing on backgrounds of grass or snow.
The second is an \textit{uncorrelating representation}: a representation of the covariates such that under the \nrd{,} the nuisance remains independent of the label after conditioning on the representation.
The set of such representations is the \textit{uncorrelating set}.
We show that the nuisance-randomized conditional of the label given an uncorrelating representation has performance guarantees: such conditionals perform as well or better than predicting without covariates on every member in the family while other conditionals may not.
Within the uncorrelating set, we characterize one that is optimal on every member of the \nvf{} \textit{simultaneously}.
We then prove that the same optimal performance can be realized by uncorrelating representations that are most informative of the label under the \nrd{}.

Following the insights in \cref{sec:uncorrelating}, we develop~\glsreset{nurd}\gls{nurd} in \cref{sec:nurd}.
\gls{nurd} finds an uncorrelating representation that is maximally informative of the label under the nuisance-randomized distribution.
\Gls{nurd}'s first step, \textit{nuisance-randomization}, breaks 
the nuisance-label dependence to produce nuisance-randomized data.
We provide 
two nuisance randomization methods based on generative models and reweighting.
The second step, \textit{distillation}, maximizes the information a representation
has with the label on the nuisance-randomized data over the uncorrelating set.
We evaluate \gls{nurd} on class-conditional Gaussians, labeling colored MNIST images~\citep{arjovsky2019invariant}, distinguishing waterbirds from landbirds, and classifying chest X-rays.
In the latter, using the non-lung patches as the nuisance, \gls{nurd} produces models that predict pneumonia under strong spurious correlations.

\section{Nuisance-Randomization and Uncorrelating Sets}\label{sec:uncorrelating}
We formalize \spu{} via a family of data generating processes.
Let $\mby$ be the label, $\mbz$ be the nuisance, and $\mbx$ be the covariates 
(i.e.\ features).
The family consists of distributions where the only difference in the members of the family comes 
from the difference in their nuisance-label relationships.
Let $D$ index a family of distributions $\cF = \{\pd \}_D$; a member $\pd$ in the nuisance-varying family of distributions $\cF$ takes the following form:
\begin{align}\label{eq:data-gen}
    \pd(\mby, \mbz, \mbx) = p(\mby) \pd(\mbz \g \mby) p( \mbx \g \mbz, \mby),
\end{align}
where $\pd(\mbz \g \mby)$ is positive and bounded for any $\mby$ where $p(\mby) > 0$ and any $\mbz$ in the family's nuisance space $S_{\cF}$. This family is called the nuisance-varying family.
Due to changing nuisance-label relationships in this family, 
the conditional distribution of the label $\mby$ given the covariates $\mbx$
in one member, e.g. the training distribution, can perform worse than predicting without
covariates on another member of the family, e.g. a test distribution with a different nuisance-label relationship.
We define performance of a model $\hatp(\mby\g \mbx)$ on a distribution $\pte$ as the negative expected KL-divergence from the true conditional $\pte(\mby\g \mbx)$: 
\[\perf_\pte(\hatp(\mby\g \mbx)) = -\E_{\pte(\mbx)}\KL{\pte(\mby\g \mbx)}{\hatp(\mby\g \mbx)}.\]
Higher is better.
This performance equals the expected log-likelihood up to a constant, $C_\pte=\mbH_\pte(\mby \g \mbx)$, that only depends on the $\pte$:
\[\perf_\pte(\hatp(\mby\g \mbx)) = \E_{\pte(\mby, \mbx)}\log \hatp(\mby\g \mbx) + C_\pte.\]
Consider the following example family $\{q_a\}_{a \in \mathbb{R}}$:
\begin{align}\label{eq:explaining-away}
   \mby \sim \cN(0,1) \quad \mbz \sim \cN(a\mby, 0.5) \quad \mbx = \left[\mbx_1 \sim \cN(\mby - \mbz, 1.5) , \mbx_2 \sim \cN(\mby + \mbz, 0.5)\right].
\end{align}
Given training distribution $\ptr = q_1$ and test distribution $\pte = q_{-1}$, the conditional $\ptr(\mby\g \mbx)$ performs even worse than predicting without covariates, $\perfte(p(\mby)) \geq \perfte(\ptr(\mby\g \mbx))$;
see~\cref{appsec:failure-modes} for the proof.
The problem is that $\ptr(\mby\g \mbx)$ utilizes label-covariate
relationships that do not hold when the nuisance-label relationships change.
When the changing nuisance-label relationship makes the conditional $\pd(\mby\g \mbx)$ of one member unsuitable for another $\pdp\in \cF$,
the family exhibits \emph{\spu}. 

Next, we identify a conditional distribution 
with performance guarantees across all members of the family.
We develop two concepts to guarantee performance on every member of the nuisance-varying family: the nuisance-randomized distribution and uncorrelating representations.

\begin{thmdef}
    The \textbf{nuisance-randomized distribution} is $\pind(\mbx, \mby, \mbz) = p(\mbx\g \mby, \mbz)\ptr(\mbz)p(\mby).$\footnote{
        Different marginal distributions $\pind(\mbz)$ produce different distributions where the label and nuisance are independent.
        The results are insensitive to the 
        choice as long as $\pind(\mbz) > 0$ for any $\mbz\in S_{\cF}$.
        One distribution that satisfies this requirement is $\pind(\mbz) = \ptr(\mbz)$.
        See \cref{lemma:cond-match}.
        }
\end{thmdef}
In the cows vs. penguins example, $\pind$ is the distribution where either animal has an equal chance to appear on backgrounds of grass or snow.
The motivation behind the \nrd{} is that when the nuisance is independent of the label, (noisy\footnote{Noisy functions of a variable are functions of that variable and exogenous noise.}) functions of only the nuisance are not predictive of the label.
If the covariates only consist of (noisy) functions of either the nuisance or the label but never a mix of the two (an example of mixing is $\mbx_1 = \mby - \mbz + noise$), then the conditional $\pind(\mby \g \mbx)$ does not vary with the parts of $\mbx$ that are (noisy) functions of just the nuisance.
Thus, $\pind(\mby \g \mbx)$ ignores the features which have changing relationships with the label.

\newcommand{\propminimax}{
\begin{prop}\label{prop:minimax-opt}
    Consider a nuisance-varying family $\cF$ (\cref{eq:data-gen}) such that for some $\ptr\in \cF$ there exists a distribution $\pind \in \cF$ such that $\pind = p(\mby) \ptr(\mbz) p(\mbx\g \mby, \mbz) \in \cF$.
    Let $\cF$ satisfy
    \begin{align}
        \label{eq:arbitrary}
        \mby \nindep_\pd \mbz \implies \exists{\pdp \in \cF}  s.t.\ \left[\E_{\pdp(\mbx)}\KL{\pdp(\mby\g \mbx) }{\pd(\mby \g \mbx)} - \mbI_\pdp(\mbx; \mby) \right] >  0.  
    \end{align}
    If $\mby \indep_\pind \mbz \g \mbx$, then $\pind(\mby \g \mbx )$ is minimax optimal :  
    \[\pind(\mby \g \mbx ) \, = \, \argmin_{\pd(\mby\g \mbx); \pd \in \cF}\max_{\pdp\in \cF} \E_{\pdp(\mbx)}\KL{\pdp(\mby\g \mbx)}{\pd(\mby\g \mbx)}.\]
\end{prop}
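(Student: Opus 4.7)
I will chain two inequalities meeting at the mutual information $\mbI_\pdp(\mbx;\mby)$. With the shorthand $L(\pd,\pdp) := \E_{\pdp(\mbx)}\KL{\pdp(\mby\g\mbx)}{\pd(\mby\g\mbx)}$ for the inner objective, my task is to show $\max_{\pdp\in\cF} L(\pd,\pdp) \geq \max_{\pdp\in\cF} L(\pind,\pdp)$ for every candidate $\pd\in\cF$. The central sub-lemma (an ``uncorrelating-representations-beat-chance'' statement) is
\[
L(\pind,\pdp) \;\leq\; \mbI_\pdp(\mbx;\mby) \qquad\text{for every } \pdp\in\cF.
\]
I will prove it by using the hypothesis $\mby\indep_\pind\mbz\g\mbx$ to substitute $\pind(\mby\g\mbx) = \pind(\mby\g\mbx,\mbz)$ inside the $\pdp$-expectation. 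Since $\pind$ makes $\mby\indep\mbz$, one gets $\pind(\mby\g\mbx,\mbz) = p(\mbx\g\mby,\mbz)p(\mby)/\pind(\mbx\g\mbz)$; combined with the shared likelihood $\pdp(\mbx\g\mby,\mbz) = p(\mbx\g\mby,\mbz)$, the gap rearranges to the manifestly nonnegative quantity
\[
\mbI_\pdp(\mbx;\mby) - L(\pind,\pdp) \;=\; \E_{\pdp(\mby,\mbz)}\KL{p(\mbx\g\mby,\mbz)}{\pind(\mbx\g\mbz)} \;\geq\; 0.
\]

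Given this sub-lemma, I will split over candidates $\pd\in\cF$. If $\mby\indep_\pd\mbz$, then by \cref{lemma:cond-match} $\pd(\mby\g\mbx) = \pind(\mby\g\mbx)$ and so $\max_\pdp L(\pd,\pdp) = \max_\pdp L(\pind,\pdp)$, placing $\pind$ among the argmin. If $\mby\nindep_\pd\mbz$, assumption~\eqref{eq:arbitrary} produces a specific witness $q\in\cF$ with $L(\pd,q) > \mbI_q(\mbx;\mby)$, and applying the sub-lemma at this same $q$ yields the chain
\[
\max_{\pdp\in\cF} L(\pd,\pdp) \;\geq\; L(\pd,q) \;>\; \mbI_q(\mbx;\mby) \;\geq\; L(\pind,q).
\]

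The main obstacle is promoting this pointwise strict inequality at $q$ to the full minimax comparison $\max_\pdp L(\pd,\pdp) \geq \max_\pdp L(\pind,\pdp)$, because the adversary attaining $\max_\pdp L(\pind,\pdp)$ need not coincide with the witness $q$ furnished by \eqref{eq:arbitrary}. I plan to close this by combining the uniform upper bound $\max_\pdp L(\pind,\pdp) \leq \sup_{\pdp\in\cF}\mbI_\pdp(\mbx;\mby)$ (from the sub-lemma applied to every $\pdp$) with the lower bound $\max_\pdp L(\pd,\pdp) > \mbI_q(\mbx;\mby)$, and then exploiting the existential freedom in~\eqref{eq:arbitrary} to pick $q$ so that $\mbI_q(\mbx;\mby)$ approaches this same ceiling, thereby measuring both sides against a common $\pdp$. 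Identifying such a common witness — or equivalently ruling out the pathological scenario where the $\pdp$ maximizing $L(\pind,\cdot)$ achieves much larger mutual information than any $q$ witnessing \eqref{eq:arbitrary} — is the subtle step I expect to require the most care.
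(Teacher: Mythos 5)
Your proposal is, up to its unresolved final step, exactly the paper's proof. Writing $L(\pd,\pdp)=\E_{\pdp(\mbx)}\KL{\pdp(\mby\g\mbx)}{\pd(\mby\g\mbx)}$ as you do: your sub-lemma $L(\pind,\pdp)\leq \mbI_\pdp(\mbx;\mby)$ with gap $\E_{\pdp(\mby,\mbz)}\KL{p(\mbx\g\mby,\mbz)}{\E_{p(\mby)}p(\mbx\g\mby,\mbz)}$ is precisely \cref{thm:indep-conditional-rx} instantiated at $r(\mbx)=\mbx$ (using $\pdp(\mby)=p(\mby)$ so that $\mbI_\pdp(\mbx;\mby)=-\perf_\pdp(p(\mby))$); the case $\mby\indep_\pd\mbz$ is dispatched via \cref{lemma:cond-match} exactly as you propose; and for $\mby\nindep_\pd\mbz$ the paper runs the same chain $\max_\pdp L(\pd,\pdp)\geq L(\pd,q)>\mbI_q(\mbx;\mby)\geq L(\pind,q)$ through the witness $q$ supplied by \cref{eq:arbitrary}.

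The ``subtle step'' you flag is a real logical gap, and you should know the paper does not close it either: its proof passes in one line from $L(\pd,q)-\mbI_q(\mbx;\mby)>0\geq L(\pind,q)-\mbI_q(\mbx;\mby)$ to $\max_\pdp L(\pd,\pdp)>\max_\pdp L(\pind,\pdp)$, which is exactly the non sequitur you identify --- the witness $q$ need not be (or approach) the adversary attaining $\max_\pdp L(\pind,\pdp)$. Your proposed repair also does not work: \cref{eq:arbitrary} is purely existential and gives no control over $\mbI_q(\mbx;\mby)$, so you cannot drive $\mbI_q$ toward $\sup_\pdp\mbI_\pdp(\mbx;\mby)$ (which can moreover be $+\infty$, e.g.\ in the Gaussian family of \cref{prop:gauss-minimax}, where every candidate's worst case is infinite and the unshifted minimax statement degenerates). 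What the argument does establish cleanly is (i) the pointwise strict domination $L(\pd,q)>L(\pind,q)$ at the witness, and (ii) that $\pind(\mby\g\mbx)$ is the unique minimizer of the \emph{regret-shifted} objective $\max_{\pdp\in\cF}\left[L(\pd,\pdp)-\mbI_\pdp(\mbx;\mby)\right]$, on which it scores $\leq 0$ while every $\pd$ with $\mby\nindep_\pd\mbz$ scores $>0$. To make the unshifted claim airtight one must either strengthen \cref{eq:arbitrary} (e.g.\ require the witness to additionally satisfy $\mbI_q(\mbx;\mby)\geq\max_\pdp L(\pind,\pdp)$) or restate the proposition for the shifted objective; absent that, stop where you are --- you have reconstructed the paper's argument, gap included.
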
}

How about nuisance-varying families where the covariates contain functions that mix the label and the nuisance?
\Cref{eq:explaining-away} is one such family, where the covariates $\mbx_1$ and $\mbx_2$ are functions of both the label and the nuisance.
In such nuisance-varying families, the conditional $\pind(\mby \g \mbx)$ can use functions that mix the label and the nuisance even though the nuisance is not predictive of the label by itself.
These mixed functions have relationships with the label which change across the family; for example in \cref{eq:explaining-away}, 
the coordinate $\mbx_1$ is correlated positively with the label under $q_0$ but negatively under $q_{-2}$.
Then, under changes in the nuisance-label relationship, the conditional $\pind(\mby \g \mbx)$ can perform worse than predicting without covariates because it utilizes a relationship, via these mixed features, that no longer holds. See~\cref{appsec:failure-modes} for details.

We address this performance degradation of $\pind(\mby\g \mbx)$ by introducing representations that help avoid reliance on functions that mix the label and the nuisance.
We note that
when the conditional $\pind(\mby\g \mbx)$ uses functions that mix the label and the nuisance, knowing the exact value of the nuisance should improve the prediction of the label, i.e.\ $\mby \nindep_\pind \mbz\g \mbx$.
\footnote{ This is because the nuisance is independent of the label under the \nrd{} and can be thought of as a source of noise in $\pind(\mbx \g \mby)$; consequently, conditional on $\mbx$ containing these mixed functions, knowing $\mbz$ provides extra information about the label by decreasing noise.}
Therefore, to avoid reliance on mixed functions, we define \textit{uncorrelating} representations $r(\mbx)$, where the nuisance does not provide any extra information about the label given the representation:
\begin{thmdef}
    An uncorrelating set of representations is $\cR(\pind)$ s.t.\ $\forall r\in \cR(\pind), \quad \mby \indep_\pind \mbz \g r(\mbx)$.
\end{thmdef} 

In the example in \cref{eq:explaining-away}, $r(\mbx) = \mbx_1 + \mbx_2$ is an uncorrelating representation because it is purely a function of the label and the noise.
Conditional distributions $\pind(\mby \g r(\mbx))$ for any uncorrelating $r(\mbx)$ only depend on properties that are shared across all distributions in the nuisance-varying family.
Specifically, for $r\in \cR(\pind)$, the conditional distribution $\pind(\mby \g r(\mbx))$ uses $p(r(\mbx)\g \mby, \mbz)$ and $p(\mby)$ which are both shared across all members of the family $\cF$.
For $\mbz'$ such that $\pind(\mbz' \g r(\mbx)) > 0$,
\begin{align}\label{eq:cond-equal-main}
    \pind(\mby \g  r(\mbx))  = \pind(\mby \g  r(\mbx), \mbz')  = \frac{\pind(\mby \g \mbz') \pind( r(\mbx) \g \mby, \mbz')}{\pind( r(\mbx) \g \mbz')}
        = \frac{p(\mby) p( r(\mbx) \g \mby, \mbz') }{\E_{p(\mby)} p( r(\mbx) \g \mby, \mbz')}.
\end{align}
This fact helps characterize the performance of $\pind(\mby\g r(\mbx))$ on any member $\pte \in \cF$:
    \begin{align}
        \label{eq:performance-improvement}
       \hspace{-10pt}\perfte(\pind(\mby \g r(\mbx)) ) = \perfte(p(\mby) ) + \mathop{\E}_{\pte(\mby, \mbz)} \KL{ p( r(\mbx) \g \mby, \mbz) }{\E_{p(\mby)} p( r(\mbx) \g \mby, \mbz)}.
    \end{align}
    As KL-divergence is non-negative, for any uncorrelating representation $r$, the  conditional $\pind(\mby\g r(\mbx))$ does at least as well as predicting without covariates for all members $\pte\in \cF$:
     $\perfte(\pind(\mby \g r(\mbx)) ) \geq \perfte(p(\mby) )$.
See~\cref{appsec:main-lemma} for the formal derivation.
In fact, we show in \cref{appsec:minimax-optimality} that when the identity representation $r(\mbx) = \mbx$ is uncorrelating, then $\pind(\mby \g \mbx) $ is minimax optimal for a family with sufficiently diverse nuisance-label relationships.

\Cref{eq:performance-improvement} lower bounds the performance of $\pind(\mby \g r(\mbx))$ for any representation in the uncorrelating set across all $\pte\in \cF$.
However, it does not specify which of these representations leads to the best performing conditional.
For example, between two uncorrelating representations like the shape of the animal and whether the animal has horns, which predicts better?
Next, we characterize uncorrelating representations that are \emph{simultaneously} optimal for all test distributions $\pte\in \cF$.

\paragraph{Optimal uncorrelating representations.}
As we focus on nuisance-randomized conditionals, henceforth, by performance of $r(\mbx)$, we mean the performance of $\pind(\mby \g r(\mbx))$: $\perfte( r(\mbx) ) = \perfte(\pind(\mby \g r(\mbx)) )$. 
Consider two uncorrelating representations $r, r_2$, where the pair $(r, r_2)$ is also uncorrelating. How can $r_2(\mbx)$ dominate 
$r(\mbx)$ in performance across the nuisance-varying family?
\Cref{eq:cond-equal-main} shows that 
\begin{align*}
\pind(\mby\g [r(\mbx), r_2(\mbx)]) \propto p(\mby) p(r(\mbx) \g  r_2(\mbx), \mby, \mbz=\vz) p(r_2(\mbx) \g \mby, \mbz=\vz) .
\end{align*}
If $r_2(\mbx)$ \textbf{blocks} the dependence between the label and $r(\mbx)$, i.e. $r(\mbx) \indep_\pind \mby \g r_2(\mbx), \mbz$, then knowing $r$ does not change the performance when $r_2$ is known, suggesting that blocking relates to performance.
In \cref{thm:optimal-reps}, we show that the \textit{maximally blocking} uncorrelating representation is simultaneously optimal: its performance is as good or better than every other uncorrelating representation on every distribution in the nuisance-varying family.
We state the theorem first:
\newcommand{\perfgapthm}{
    Let $ r^* \in \cR(\pind)$ be \textbf{maximally blocking}: \[\forall r\in \cR(\pind),\quad \mby\indep_\pind r(\mbx) \g \mbz, r^*(\mbx).\] Then,
    \begin{enumerate} [leftmargin=4ex]
    \item 
        \big(Simultaneous optimality\big)
        $\,\, \forall \pte \in \cF, \, \forall r\in \cR(\pind), \quad \perf_\pte(r^*(\mbx))\,  \geq \perf_\pte(r(\mbx)).$
    \item
        \big(Information maximality\big) 
        $\,\, \forall r(\mbx) \in \cR(\pind), \quad \mbI_\pind(\mby; r^*(\mbx))  \geq \mbI_\pind(\mby; r(\mbx)).$
    \item
        \big(Information maximality implies simultaneous optimality\big)  $\,\, \forall r^\prime \in \cR(\pind)$,
        \[ \mbI_\pind(\mby; r^\prime(\mbx)) = \mbI_\pind(\mby; r^*(\mbx)) \quad \implies \quad \forall\pte\in \cF, \quad \perfte(r^*(\mbx)) = \perfte(r^\prime(\mbx)).\]
    \end{enumerate}
}
\begin{thm}
    \label{thm:performance-sep}
    \label{thm:optimal-reps}
    \perfgapthm{}
\end{thm}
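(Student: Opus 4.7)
The plan is to route all three claims through the joint representation $(r^*(\mbx), r(\mbx))$, exploiting two observations: since $r^*$ is maximally blocking, the $\pind$-conditional of $\mby$ given the joint collapses to the one given $r^*(\mbx)$ alone, while the KL-divergence and mutual information of the joint dominate those of either marginal via the data processing inequality.

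For Part~1, I first verify that augmenting any $r \in \cR(\pind)$ with $r^*$ keeps the result in the uncorrelating set. Combining $\mby \indep_\pind \mbz \g r^*(\mbx)$ (since $r^* \in \cR(\pind)$) with $\mby \indep_\pind r(\mbx) \g \mbz, r^*(\mbx)$ (maximal blocking) via the contraction graphoid axiom yields $\mby \indep_\pind (\mbz, r(\mbx)) \g r^*(\mbx)$, and weak union then gives $\mby \indep_\pind \mbz \g r^*(\mbx), r(\mbx)$. With $(r^*(\mbx), r(\mbx))$ uncorrelating, \cref{eq:performance-improvement} applies:
\[\perfte\big((r^*(\mbx), r(\mbx))\big) = \perfte(p(\mby)) + \E_{\pte(\mby,\mbz)} \KL{p(r^*(\mbx), r(\mbx) \g \mby, \mbz)}{\E_{p(\mby)} p(r^*(\mbx), r(\mbx) \g \mby, \mbz)}.\]
Applying the data processing inequality to marginalization over $r^*(\mbx)$ lower bounds the integrand by the analogous KL for $r$ alone, yielding $\perfte((r^*, r)) \geq \perfte(r(\mbx))$. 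Separately, maximal blocking combined with $r^* \in \cR(\pind)$ gives $\pind(\mby \g r^*(\mbx), r(\mbx), \mbz) = \pind(\mby \g r^*(\mbx), \mbz) = \pind(\mby \g r^*(\mbx))$, and integrating $\mbz$ out on the left yields $\pind(\mby \g r^*(\mbx), r(\mbx)) = \pind(\mby \g r^*(\mbx))$, so $\perfte((r^*, r)) = \perfte(r^*(\mbx))$. Chaining gives $\perfte(r^*(\mbx)) \geq \perfte(r(\mbx))$.

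For Part~2, the identity $\pind(\mby \g r^*(\mbx), r(\mbx)) = \pind(\mby \g r^*(\mbx))$ just established is equivalent to $\mby \indep_\pind r(\mbx) \g r^*(\mbx)$, so by the mutual information chain rule $\mbI_\pind(\mby; r^*(\mbx), r(\mbx)) = \mbI_\pind(\mby; r^*(\mbx))$; non-negativity of conditional mutual information then yields $\mbI_\pind(\mby; r^*(\mbx)) \geq \mbI_\pind(\mby; r(\mbx))$.

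For Part~3, assume $\mbI_\pind(\mby; r'(\mbx)) = \mbI_\pind(\mby; r^*(\mbx))$. Combined with $\mbI_\pind(\mby; r^*(\mbx), r'(\mbx)) = \mbI_\pind(\mby; r^*(\mbx))$ from Part~2, the chain rule forces $\mbI_\pind(\mby; r^*(\mbx) \g r'(\mbx)) = 0$, so $\pind(\mby \g r'(\mbx)) = \pind(\mby \g r^*(\mbx), r'(\mbx)) = \pind(\mby \g r^*(\mbx))$ almost surely under $\pind(\mbx)$. Because every $\pte \in \cF$ shares $p(\mbx \g \mby, \mbz)$ and $p(\mby)$ with $\pind$, and $\pind(\mbz) > 0$ on the family's nuisance space by construction, $\pte(\mbx) \ll \pind(\mbx)$, so this equality carries over, giving $\perfte(r'(\mbx)) = \perfte(r^*(\mbx))$ for every $\pte \in \cF$. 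The main obstacle is making the graphoid steps and the absolute-continuity transfer rigorous in the general measurable setting, where contraction and almost-sure extension both require positivity assumptions that I would have to read off explicitly from the family's form in \cref{eq:data-gen}.
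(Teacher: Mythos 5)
Your proof is correct, but it takes a genuinely different route from the paper's. The paper proves Part~1 by an explicit chain of equalities that decomposes $\E_{\pte(\mbx)}\KL{\pte(\mby\g\mbx)}{\pind(\mby\g r(\mbx))}$ into the corresponding quantity for the blocking representation plus an expected KL term under $\pte$, thereby obtaining an \emph{exact} expression for the performance gap between any blocking pair on every test distribution; Part~2 then follows by specializing that gap formula to $\pte=\pind$ and performing an entropy decomposition, and Part~3 by observing that information-equivalence makes $r'$ block $r^*$, so the two one-sided dominations force equality. You instead work with the joint representation $(r^*(\mbx),r(\mbx))$: you verify it is uncorrelating via contraction and weak union, invoke \cref{eq:performance-improvement} as a black box for both the joint and $r$, and compare via the data-processing inequality under the coordinate projection, while the collapse $\pind(\mby\g r^*(\mbx),r(\mbx))=\pind(\mby\g r^*(\mbx))$ handles the other side; your Parts~2 and~3 then follow from the mutual-information chain rule on the joint rather than from the $\pte$-dependent gap formula. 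Each approach buys something: the paper's computation quantifies exactly how much worse a non-blocking representation is on an arbitrary $\pte$ (a fact it reuses in \cref{prop:minimax-opt} and \cref{lemma:joint-indep-decomp}), whereas your argument is more modular, and your Part~3 conclusion --- almost-everywhere equality of the two predictive conditionals, transferred to every $\pte$ via the mutual absolute continuity guaranteed by \cref{lemma:multiplication} --- is slightly stronger than performance equality. The positivity caveats you flag at the end are exactly the ones the paper discharges with \cref{lemma:multiplication}, so nothing essential is missing.
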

The proof is in~\cref{appsec:optimal-representations}.
The first part of~\cref{thm:performance-sep}, \textit{simultaneous optimality}, says that a maximally blocking uncorrelating representation $r^*$ dominates every other $r\in \cR(\pind)$ in performance on \textit{every} test distribution in the family.
In the cows vs. penguins example, the segmented foreground that contains only the animal is a maximally blocking representation because the animal blocks the dependence between the label and any other semantic feature of the animal.

The second and third parts of~\cref{thm:performance-sep} are useful for algorithm building.
The second part proves that a maximally blocking $r^*$ is also maximally informative of the label under $\pind$, indicating how to find a simultaneously optimal uncorrelating representation.
What about other information-maximal uncorrelating representations? The third part shows that if an uncorrelating representation $r^\prime$ has the same mutual information with the label (under $\pind$) as the maximally blocking $r^*$, then $r^\prime$  achieves the same simultaneously optimal performance as $r^*$.
In the cows vs. penguins example, an example of a maximally informative uncorrelating representation is the number of legs of the animal because the rest of the body does not give more information about the label.
The second and third parts of \cref{thm:performance-sep} together show that finding an uncorrelating representation that maximizes information under the \nrd{} finds a simultaneously optimal uncorrelating $r(\mbx)$.

\section{ \glsreset{nurd}\gls{nurd} }\label{sec:nurd}
\Cref{thm:performance-sep} says a representation that maximizes
information with the label under the \nrd{}
has the best performance within the uncorrelating set. We develop a representation learning algorithm to maximize the mutual information 
between the label and a representation in the uncorrelating set under the \nrd. We call this 
algorithm~\glsreset{nurd}\gls{nurd}. \Gls{nurd} has two steps.
The first step, called
nuisance randomization, creates an estimate of the \nrd.
The second step, called distillation,
finds a representation in the uncorrelating set 
with the maximum information with the label under the estimate of the \nrd from step one.

\paragraph{Nuisance Randomization.}
We estimate the \nrd with generative models or by reweighting existing
data. Generative-\gls{nurd} uses the fact that $p(\mbx \g \mby, \mbz)$ is the same
for each member of the nuisance-varying family $\cF$. With an estimate of
this conditional denoted $\hatp(\mbx \g \mby, \mbz)$, generative-\gls{nurd}'s
estimate of the \nrd is
$\mbz \sim \ptr(\mbz), \mby \sim p(\mby), \mbx \sim \hatp(\mbx \g \mby, \mbz).$
For high dimensional $\mbx$,
the estimate $\hatp(\mbx \g \mby, \mbz)$ can be constructed with deep generative models. 
Reweighting-\gls{nurd} importance weights the data from $\ptr$ by 
$\nicefrac{p(\mby)}{\ptr(\mby\g \mbz)}$, making it match the \nrd:
\[\pind(\mbx, \mby, \mbz ) = p(\mby) \ptr(\mbz) p(\mbx \g \mby, \mbz)  = p(\mby) \ptr(\mbz) \frac{\ptr(\mby \g \mbz)}{\ptr(\mby\g \mbz)} p(\mbx \g \mby, \mbz) = \frac{p(\mby)}{\ptr(\mby\g \mbz)}\ptr(\mbx, \mby, \mbz).\]
Reweighting-\gls{nurd} uses a model trained on samples from $\ptr$ to estimate $\frac{p(\mby)}{\ptr(\mby \g \mbz)}$.

\paragraph{Distillation.}
Distillation seeks to find the representation in the uncorrelating set
that maximizes the information with the label under $\hatpind$, the estimate of the \nrd.
Maximizing the information
translates to maximizing likelihood because the entropy $\mbH_{\hatpind}(\mby)$ is constant with respect to the representation $r_\gamma$ parameterized by $\gamma$:
\begin{align*}
\mbI_\hatpind(r_\gamma(\mbx); \mby) - \mbH_\hatpind(\mby) = \E_{\hatpind(\mby, r_\gamma(\mbx))} \log \hatpind(\mby\g r_\gamma(\mbx))  = \max_\theta \E_{\hatpind(\mby, r_\gamma(\mbx))} \log p_\theta(\mby\g r_\gamma(\mbx)).
\end{align*}
\Cref{thm:performance-sep} requires the representations be in the uncorrelating set.
When conditioning on representations in the uncorrelating set, the nuisance has zero mutual information with the label: $\mbI_{\pind}(\mby ; \mbz \g r_\gamma(\mbx)) = 0$.
We operationalize this constraint by adding a conditional mutual information penalty to the maximum likelihood objective with a tunable scalar parameter $\lambda$
\begin{align}\label{eq:lagrange}
    \max_{\theta, \gamma} & \, \E_{\hatpind(\mby, \mbz, \mbx)} \log p_\theta(\mby \g r_\gamma(\mbx) )  - \lambda \mbI_{\hatpind}(\mby ; \mbz \g r_\gamma(\mbx)).
\end{align}
The objective in \cref{eq:lagrange} can have local optima when the representation 
is a function of the nuisance and exogenous noise (noise that generates the covariates given the nuisance and the label).
The intuition behind these local optima is that the value of introducing information that predicts the label does not exceed the cost of the introduced conditional dependence.
\Cref{appsec:distillation} gives a formal discussion and an example with such local optima.
Annealing $\lambda$, which controls the cost of conditional dependence, can mitigate the local optima issue at the cost of setting annealing schedules.

Instead, we restrict the distillation step to search over representations 
$r_\gamma(\mbx)$ that are also marginally independent of the nuisance $\mbz$ under 
$\pind$, i.e. $\mbz \indep_\pind r_\gamma(\mbx)$.
This additional independence removes representations that depend on the nuisance but are not predictive of the label; in turn, this removes local optima that correspond to functions of the nuisance and exogenous noise. 
In the cows vs. penguins example, representations that are functions of the background only, like the presence of snow, are uncorrelating but do not satisfy the marginal independence.
Together, the conditional independence $\mby \indep_\pind \mbz \g r_\gamma(\mbx)$ and marginal independence $\mbz\indep_\pind r_\gamma(\mbx)$ hold if and only if the representation and the label are jointly independent of the nuisance : $(\mby, r_\gamma(\mbx)) \indep_\pind \mbz$.
Using mutual 
information to penalize joint \textit{dependence} (instead of the penalty in \cref{eq:lagrange}), the distillation step in \gls{nurd} is
\begin{align}
    \label{eq:joint-indep-distillation-unconstrained}
     \max_{\theta, \gamma} & \, \E_{\hatpind(\mby, \mbz, \mbx)} \log p_\theta(\mby \g r_\gamma(\mbx) )  - \lambda \mbI_{\hatpind}( [\mby, r_\gamma(\mbx)] ; \mbz).
\end{align}
We show in \cref{lemma:joint-indep-decomp} that within the set of representations that satisfy joint independence, \gls{nurd} learns a representation that is simultaneously optimal in performance on all members of the nuisance-varying family.
To learn representations using gradients, the 
mutual information needs to be estimated in a way that is amenable to gradient optimization.
To achieve this, 
we estimate the mutual information in 
\gls{nurd} via the classification-based density-ratio estimation trick~\citep{sugiyama2012density}.
We use a \textit{critic model} $p_\phi$ to estimate said density ratio.
We describe this technique in \cref{appsec:algorithms} for completeness.
We implement the distillation step as a bi-level optimization where the outer loop optimizes the predictive model $p_\theta(\mby\g r_\gamma (\mbx))$ and the inner loop optimizes the critic model $p_\phi$ which helps estimate the mutual information.

\paragraph{Algorithm.}
We give the full algorithm boxes for both reweighting-\gls{nurd} and generative-\gls{nurd} in~\cref{appsec:algorithms}.
In reweighting-\gls{nurd}, to avoid poor weight estimation due to models memorizing the training data, we use cross-fitting; see algorithm \ref{alg:rw-nurd}.
The setup of \spu in ~\cref{eq:data-gen} assumes $p(\mby)$ is fixed across distributions within the nuisance-varying family $\cF$. 
This condition can be relaxed when $\pte(\mby)$ is known; see~\cref{appsec:algorithms}.

\section{Related Work}\label{sec:related}

In \cref{table:related-work}, we summarize key differences between \gls{nurd} and the related work: invariant learning \citep{arjovsky2019invariant,krueger2020out}, distribution matching \citep{mahajan2020domain,guo2021out}, shift-stable prediction \citep{subbaswamy2019preventing}, group-DRO \citep{sagawa2019distributionally}, and causal regularization \citep{veitch2021counterfactual,makar2021causally}.
We detail the differences here.

\paragraph{Nuisance versus Environment.}
In general, an environment is a distribution with a specific spurious correlation~\citep{sagawa2019distributionally}.
When the training and test distributions are members of the same nuisance-varying family, environments denote specific nuisance-label relationships.
In contrast, nuisances are variables whose changing relationship with the label induces spurious correlations.
While obtaining data from diverse environments requires data collection from sufficiently different sources, one can specify nuisances from a single source of data via domain knowledge.

\paragraph{Domain generalization, domain-invariant learning, and subgroup robustness}
We briefly mention existing methods that aim to generalize to unseen test data and focus on how these methods can suffer in the presence of nuisance-induced spurious correlations; for a more detailed presentation, see \cref{appsec:related-work}.
Domain generalization and domain-invariant learning methods assume the training data consists of multiple \textit{sufficiently different} environments to generalize to unseen test data that is related to the given environments or subgroups~\citep{li2018learning,arjovsky2019invariant,akuzawa2019adversarial,mahajan2020domain,krueger2020out,bellot2020accounting,li2018domain,goel2020model,wald2021calibration,ganin2016domain,xie2017controllable,zhang2018mitigating,ghimire2020learning,adeli2021representation,zhang2021can}.
Due to its focus on nuisances, \gls{nurd} works with data from a single environment.
Taking a distributional robustness~\citep{duchi2021statistics} approach, \citet{sagawa2019distributionally} used group-DRO to build models that perform well on every one of a finite set of known subgroups.
Other work also aims to minimize worst subgroup error with a finite number of fixed but unknown subgroups~\citep{lahoti2020fairness,martinez2021blind}; as subgroups are unknown, they only find an approximate minimizer of the worst subgroup error in general even with infinite data.
While these methods~\citep{lahoti2020fairness,martinez2021blind} were developed to enforce fairness with respect to a sensitive attribute, they can be applied to \gls{ood} generalization with the nuisance treated as the sensitive attribute; see \citep{creager2021environment}.
Given the nuisance, existence of a finite number of subgroups maps to an additional discreteness assumption on the nuisance variable; in contrast, \gls{nurd} works with general nuisances.
Given a high dimensional $\mbz$, as in our experiments, defining groups based on the value of the nuisance like in \citep{sagawa2019distributionally} typically results in groups with at most one sample; with the resulting groups, methods that minimize worst subgroup error will encourage memorizing the training data.

\paragraph{Nuisance as the environment label for domain generalization.}
Domain generalization methods are inapplicable when the training data consists only of a single environment.
In this work, the training data comes from only one member of the nuisance-varying family, i.e.\ from a single environment.
What if one treats groups defined by nuisance values as environments?
Using the nuisance as the environment label can produce non-overlapping supports (over the covariates) between environments.
In Colored-MNIST for example, splitting based on color produces an environment of green images and an environment of red images.
When the covariates do not overlap between environments, methods such as ~\citet{arjovsky2019invariant,krueger2020out} will not produce invariant representations
 because the model can segment out the covariate space and learn separate functions for each environment.

Methods based on conditional distribution matching~\citep{mahajan2020domain,guo2021out} build representations that are conditionally independent of the environment variable given the label. 
When the training data is split into groups based on the nuisance value, representations built by these 
methods are independent of the nuisance given the label. 
However, splitting on a high-dimensional nuisance like image patches tends to yield many
groups with only a single image. 
Matching distributions of representations for the same label across all environments is not possible when some environments only have one label.

\begin{table}[t]
    \caption{
        \gls{nurd} vs. methods that use nuisances or environments.
        In this work, the training data comes from a single member of the family $\cF$, i.e.\ a single environment.
        For methods that require multiple environments,
        values of the nuisance can be treated as environment labels.
        Unlike existing methods, \gls{nurd} works with high-dimensional nuisances without requiring them at test time.
    }
    \label{table:related-work}
    \centering
    \begin{tabular}{lllllll}
    \toprule
     & Invariant & Dist. match &  Shift-stable & Group-DRO & Causal reg.  & \gls{nurd} \\
    \midrule
    High-dim $\mbz$  
    & \xmark  & \xmark & \cmark & \xmark& \xmark& \cmark \\
    No test-time $\mbz$  
    & \cmark & \cmark & \xmark & \cmark& \cmark& \cmark \\
    \bottomrule
    \end{tabular}
\end{table}

\paragraph{Causal learning and shift-stable prediction.}
Anticausal learning~\citep{scholkopf2012causal} assumes a causal generative process for a class of distributions like the nuisance-varying family in \cref{eq:data-gen}.
In such an interpretation, the label $\mby$ and nuisance $\mbz$ cause the image $\mbx$, and under independence of cause and mechanism, $p(\mbx \g \mby, \mbz)$ is fixed --- in other words, independent --- regardless of the distribution $\pd(\mby, \mbz)$.
A closely related idea to \gls{nurd}
is that of Shift-Stable Prediction~\citep{subbaswamy2019preventing,subbaswamy2019universal}. \citet{subbaswamy2019preventing} perform \textit{graph surgery} to learn $\pind(\mby\g \mbx,\mbz)$ assuming access to $\mbz$ during test time.
Shift-stable models are not applicable without nuisances at test time while \gls{nurd} only requires nuisances during training.
However, if the nuisance is available at test time, the combined covariate-set $[\mbx, \mbz]$ is 1) uncorrelating because $\mby\indep_\pind \mbz \g [\mbx, \mbz]$ and 2) maximally blocking because $r([\mbx, \mbz]) \indep \mby \g \mbz, [\mbx, \mbz]$, and \cref{thm:optimal-reps} says $\pind(\mby \g [\mbx, \mbz])$ is optimal.

Two concurrent works also build models using the idea of a nuisance variable.
\citet{makar2021causally} assume that there exists a stochastic function of $\mby$ but not $\mbz$, called $\mbx^*$, such that $\mby\indep_\pd \mbx \g \mbx^*, \mbz$; they use a marginal independence penalty $r(\mbx) \indep_\pind \mbz$. \citet{veitch2021counterfactual}
use counterfactual invariance to 
derive a conditional independence penalty $r(\mbx) \indep_\ptr \mbz \g \mby$.
The theory in these works requires the nuisance to be discrete, and
their algorithms require the nuisance to be both discrete and low-cardinality; \gls{nurd} and its theory work with general high-dimensional nuisances.
Counterfactual invariance promises that a representation will not vary with the nuisance but it does not produce optimal models in general because it rejects models that depend on functions of the nuisance.
On the other hand, the uncorrelating property allows using functions of only the nuisance that are in the covariates to extract further information about the label from rest of the covariates; this leads to better performance in some nuisance-varying families, as we show using the theory of minimal sufficient statistics \citep{lehmann2006theory} in \cref{appsec:gap}.

\section{Experiments}\label{sec:exps}

We evaluate the implementations of~\gls{nurd} on class-conditional Gaussians, Colored-MNIST \citep{arjovsky2019invariant}, Waterbirds~\citep{sagawa2019distributionally}, and chest X-rays~\citep{irvin2019CheXpert,johnson2019mimic}.
See \cref{appsec:exps} for implementation details and further evaluations of \gls{nurd}.

\paragraph{Model selection, baselines, and metrics.}
Models in both steps of \gls{nurd} are selected using heldout subsets of the training data.
We split the training data into training and validation datasets with an $80-20$ split.
For nuisance-randomization, this selection uses standard measures of held-out performance.
Selection in the distillation step
 picks models that give the best value of the distillation objective on a held out subset of the \textit{nuisance-randomized} data from \gls{nurd}'s first step.

We compare against \gls{erm} because, as discussed in~\cref{sec:related}, existing methods that aim to generalize under spurious correlations require assumptions, such as access to multiple environments or discrete nuisance of small cardinality, that do not hold in the experiments.
When possible, we report the oracle accuracy or build gold standard models using data that does not have a nuisance-label relationship to exploit.
For every method, we report the average accuracy and standard error across $10$ runs each with a different random seed.
We report the accuracy of each model for each experiment on the test data ($\pte$) and on heldout subsets of the original training data ($\ptr$) and the estimate of the \nrd{} ($\hatpind$).
For all experiments, we use $\lambda=1$ and one or two epochs of critic model updates for every predictive model update.

\subsection{Class-Conditional Gaussians}
We generate data as follows: with $\cB(0.5)$ as the uniform Bernoulli distribution, $q_a(\mby, \mbz, \mbx)$ is
\begin{align}
    \mby \sim \cB(0.5) \quad \mbz \sim \cN(a(2\mby - 1), 1) \quad \mbx = [\mbx_1 \sim \cN(\mby - \mbz, 9), \mbx_2 \sim \cN(\mby + \mbz, 0.01)].
\end{align}
The training and test sets consist of $10000$ samples from $\ptr = q_{0.5}$ and $2000$ samples from $\pte = q_{-0.9}$ respectively.
All models in both \gls{nurd} methods are parameterized with neural networks.
\begin{table}[ht]
    \caption{Accuracy of \gls{nurd} versus \gls{erm} on class conditional Gaussians.
    }
    \label{table:synth-results}
    \centering
    \begin{tabular}{llll}
    \toprule
    Method     &   Heldout $\ptr$ & Heldout $\hatpind$ &  $\pte$ \\
    \midrule
    \gls{erm}        & $84\pm 0\%$ & $ - $ & $39\pm 0 \%$    \\
    generative-\gls{nurd}  &  $71\pm 0 \%$  & $ 67 \pm 0 \%$ & $58 \pm 0\%$     \\
    reweighting-\gls{nurd}  &  $71 \pm 1\% $  & $66  \pm 0 \%$ & $ 58 \pm 0 \%$     
    \\
    \bottomrule
    \end{tabular}
\end{table}

\paragraph{Results.}
\Cref{table:synth-results} reports results.
The test accuracy of predicting with the optimal linear uncorrelating representation $r^*(\mbx) = \mbx_1 + \mbx_2$, is $62\%$; \cref{appsec:exps} gives the optimality proof.
Both generative-\gls{nurd} and reweighting-\gls{nurd} achieve close to this accuracy.

\subsection{Colored-MNIST}
        We construct a colored-MNIST dataset~\citep{arjovsky2019invariant,gulrajani2020search} with images of $0$s and $1$s.
        In this dataset, the values in each channel for every pixel are either $0$ or $1$.
        We construct two environments and use one as the training distribution and the other as the test.
        In training, $90\%$ of red images have label $0$; $90\%$ of green images have label $1$.
        In test, the relationship is flipped: $90\%$ of the $0$s are green, and $90\%$ of the $1$s are red.
        In both training and test, the digit determines the label only $75\%$ of the time, meaning that exploiting the nuisance-label relationship produces better training accuracies.
        The training and test data consist of $4851$ and $4945$ samples respectively.
        We run \gls{nurd} with the most intense pixel as the nuisance.
        
        \begin{table}[ht]
            \caption{Accuracy of \gls{nurd} versus \gls{erm} on Colored-MNIST.
            The oracle accuracy is $75\%$.
            }
            \label{table:cmnist-results}
            \centering
            \begin{tabular}{llll}
            \toprule
            Method     &   Heldout $\ptr$ & Heldout $\hatpind$ &  $\pte$ \\
            \midrule
            \gls{erm}           & $90\pm 0\%$ & $ - $ & $10\pm 0 \%$    \\
            generative-\gls{nurd} &  $73\pm 1 \%$  & $80\pm 1 \%$ & $68 \pm 2\%$     \\
            reweighting-\gls{nurd}  &  $75 \pm 0\% $  & $74  \pm 0 \%$ & $ 75 \pm 0 \%$     
            \\
            \bottomrule
            \end{tabular}
        \end{table}

        \paragraph{Results.}
        See~\cref{table:cmnist-results} for the results.
         \gls{erm} learns to use color, evidenced by the fact that it achieves a test accuracy of only $10\%$.
        The oracle accuracy of $75\%$ is the highest 
         achievable by models that do not use color because the digit only predicts the label with $75\%$ accuracy.
         While generative-\gls{nurd} has an average accuracy close to the oracle, reweighting-\gls{nurd} matches the oracle at $75\%$.

    \subsection{Learning to classify Waterbirds and Landbirds}
        \citet{sagawa2019distributionally} consider the task of detecting the type of bird (water or land) from images where the background is a nuisance.
        Unlike \citet{sagawa2019distributionally}, we do not assume access to validation and test sets with independence between the background and the label.
        So, we split their \href{https://github.com/kohpangwei/group_DRO}{dataset} differently to create our own training and test datasets with substantially different nuisance-label relationships.
         The training data has $90\%$ waterbirds on backgrounds with water and $90\%$ landbirds on backgrounds with land.
         The test data has this relationship flipped.
         We use the original image size of $224\times 224 \times 3$.
         The training and test sets consist of $3510$ and $400$ samples respectively.
         We ensure that $p(\mby=1)=0.5$ in training and test data.
         Thus, predicting the most frequent class achieves an accuracy of $0.5$.
        Cropping out the whole background requires manual segmentation.
        Instead, we use the pixels outside the central patch of $196 \times 196$ pixels as a nuisance in \gls{nurd}.
        This is a high-dimensional nuisance which impacts many existing methods negatively; see~\cref{sec:related}.
         The covariates are the whole image; see~\cref{fig:waterbirds-results}.
        \begin{figure}[hb]
            \begin{minipage}[b]{0.65\linewidth}
              \centering
                        \begin{tabular}{llll}
                \toprule
                Method     &     $\ptr$ &  $\hatpind$  &  $\pte$ \\
                \midrule
                \gls{erm}           & $91\pm 0\%$ & $-$& $66 \pm 2\%$    \\
                reweighting-\gls{nurd} &  $ 85 \pm 1\%$  & $81 \pm 1\%$ & $83\pm 2 \%$     \\
                \bottomrule
                \end{tabular}
          \end{minipage}
          \hspace{10pt}
          \begin{minipage}[b]{0.25\linewidth}
            \includegraphics[width=\textwidth]{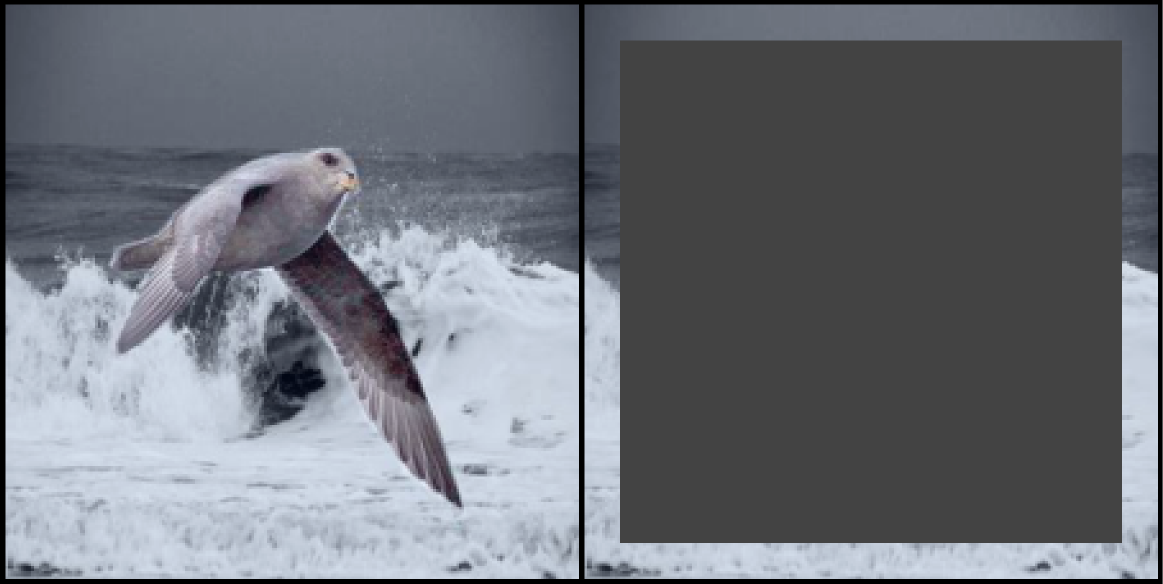}
            \vspace{-36pt}
          \end{minipage}
          \caption{Table of results and figure showing an example of the nuisance for Waterbirds. On the left are the accuracies of \gls{nurd} and \gls{erm} on Waterbirds. Gold standard accuracy is $90\%$ (see the results paragraph below).
          The figure shows an image and the corresponding border nuisance.
          \gls{nurd} has both images during training and only the left image at test time.}
          \label{fig:waterbirds-results}
            \vspace{-5pt}
          \end{figure}

         \paragraph{Results.}
         \Cref{fig:waterbirds-results} reports results. 
        We construct a gold standard model on data where waterbirds and landbirds have equal chances of appearing on backgrounds with water and land; this model achieves a test accuracy of $90\%$.
        \gls{erm} uses the background to predict the label, as evidenced by its test accuracy of $66\%$.
        Reweighting-\gls{nurd} uses the background patches to adjust for the spurious correlation to achieve an average accuracy close to the gold standard, $83\%$.
        We do not report generative-\gls{nurd}'s performance as training on the generated images resulted in classifiers that predict as poorly as chance on real images.
        This may be due to the small training dataset.

    \subsection{Learning to label pneumonia from X-rays}
        In many problems such as classifying cows versus penguins in natural images, the background, which is a nuisance, predicts the label.
        Medical imaging datasets have a similar property, where factors like the device used to take the measurement are predictive of the label but also leave a signature on the whole image.
        Here, we construct a dataset by mixing two chest x-ray datasets, CheXpert and MIMIC, that have different factors that affect the whole image, with or without pneumonia.
        The training data has $90\%$ pneumonia images from MIMIC and $90\%$ healthy images from CheXpert.
        The test data has the flipped relationship, with $90\%$ of the pneumonia images from CheXpert and $90\%$ of the healthy images from MIMIC.
        We resize the X-ray images to $32\times 32$.
        Healthy cases are downsampled to make sure that in the training and test sets, healthy and pneumonia cases are equally probable.
        Thus, predicting the most frequent class achieves an accuracy of $0.5$.
        The training and test datasets consist of $12446$ and $400$ samples respectively.
        In chest X-rays, image segmentation cannot remove all the nuisances because nuisances like scanners alter the entire image \citep{pooch2019can,zech2018variable,badgeley2019deep}.
        However, non-lung patches, i.e. pixels outside the central patches which contain the lungs, are \textit{a nuisance} because they do not contain physiological signals of pneumonia.
        We use the non-lung patches ($4$-pixel border) as a nuisance in \gls{nurd}.
        This is a high-dimensional nuisance which impacts existing methods negatively; see~\cref{sec:related}.
        The covariates are the whole image; see~\cref{fig:xray-results}.

        \begin{figure}[t]
            \begin{minipage}[b]{0.65\linewidth}
              \centering
                     \begin{tabular}{llll}
                \toprule
                Method     &     $\ptr$ & $\hatpind$  &  $\pte$ \\
                \midrule
                \gls{erm}           & $89 \pm 0\%$ & $-$& $37 \pm 1\%$    \\
                generative-NURD  &  $70 \pm 3\% $  & $90 \pm 2$ & $41 \pm 2\%$     \\
                reweighting-\gls{nurd} &  $ 75 \pm 1\%$  & $68\pm 1\%$ & $61 \pm 1 \%$     \\
                \bottomrule
                \end{tabular}
          \end{minipage}
          \begin{minipage}[b]{0.33\linewidth}
            \includegraphics[width=\textwidth]{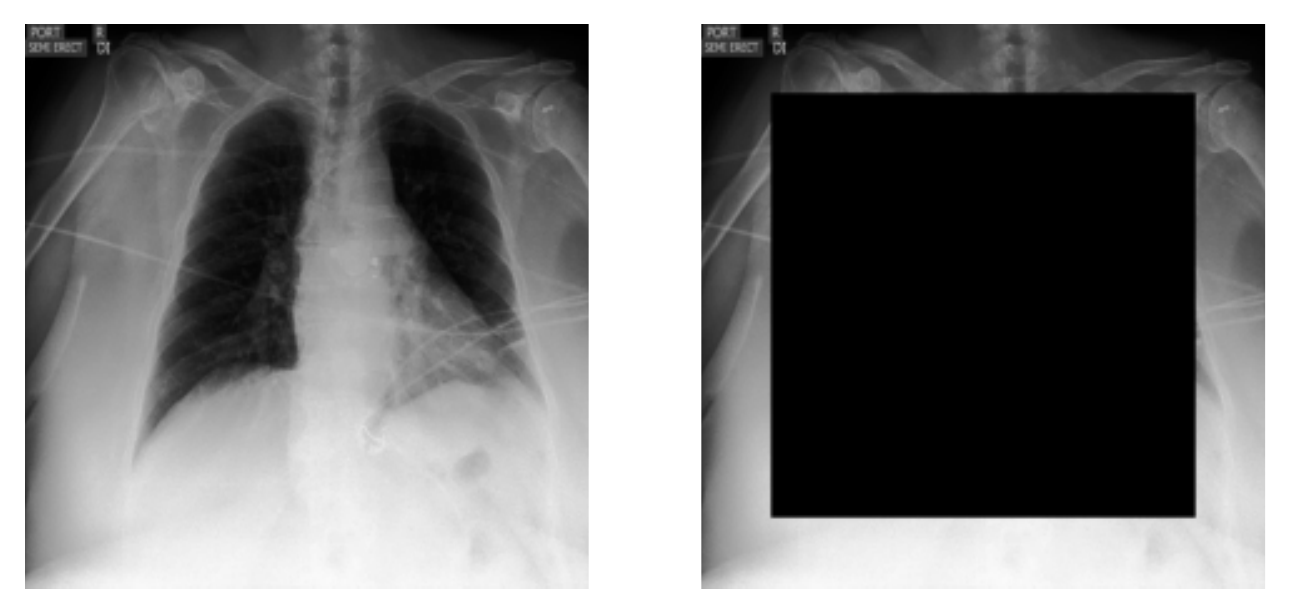}
            \vspace{-41pt}
          \end{minipage}
          \caption{Table of results and figure showing an example of the nuisance for chest X-rays.
          The figure shows an example of a chest X-ray and the corresponding non-lung patches (right). \gls{nurd} has both images during training and only the left image at test time.}
          \label{fig:xray-results}
          \end{figure}
        \paragraph{Results.}
        \Cref{fig:xray-results} reports results.
        Building an oracle model in this experiment requires knowledge of all factors that correlate the label with all the parts of the X-ray. Such factors also exist within each hospital but are not recorded in MIMIC and CheXpert; for example, different departments in the same hospital can have different scanners which correlate the non-lung patches of the X-ray with the label \citep{zech2018variable}.
         \gls{erm} uses the nuisance to predict pneumonia, as evidenced by its test accuracy of $37\%$.
        Reweighting-\gls{nurd} uses the non-lung patches to adjust for the spurious correlation and achieves an accuracy of $61\%$, a large improvement over \gls{erm}.

        Generative-\gls{nurd} also outperforms \gls{erm}'s performance on average.
        Unlike reweighting-\gls{nurd} which outperforms predicting without covariates, generative-\gls{nurd} performs similar to predicting without covariates on average.
        The few poor test accuracies may be due to two ways generative nuisance-randomization can be imperfect: 
        1) little reliance of $\mbx$ on $\mbz$ with $\mby$ fixed,
        2) insufficient quality of generation which leads to poor generalization from generated to real images.

\section{Discussion}

    We develop an algorithm for \gls{ood} generalization in the presence of spurious correlations induced by a nuisance variable.
    We formalize nuisance-induced spurious correlations in a nuisance-varying family, where changing nuisance-label relationships make predictive models built
    from samples of one member unsuitable for other members.
    To identify conditional distributions that have performance guarantees on all members of the nuisance-varying family, we introduce the \nrd{} and uncorrelating representations.
    We characterize one uncorrelating representation that is \emph{simultaneously} optimal for all members.
    Then, we show that uncorrelating representations most informative of the label under the \nrd{} also achieve the same optimal performance.
    Following this result, we propose to estimate the \nrd{} and, under this distribution, construct the uncorrelating representation that is most informative of the label.
    We develop an algorithm called \gls{nurd} and show that it
    outperforms~\gls{erm} on synthetic and real data by adjusting for nuisance-induced spurious correlations.
    Our experiments show that \gls{nurd} can use easy-to-acquire nuisances (like the border of an image) to do this adjustment; therefore, our work suggests that the need for expensive manual segmentation, even if it does help exclude all the nuisances, could be mitigated.

    \paragraph{Limitations and the future.}
    Given groups based on pairs of nuisance-label values, \citet{sagawa2020investigation} suggest that subsampling equally from each group produces models more robust to spurious correlations than reweighting ~\citep{byrd2019effect,sagawa2019distributionally}; however, subsampling is ineffective when the nuisance is high-dimensional.
    Instead, as sufficient statistics of the conditional $\ptr(\mby \g \mbz)$ render $\mby, \mbz$ independent, grouping based on values of sufficient statistics could be promising.
    The nuisance-randomization steps in generative-\gls{nurd} and reweighting-\gls{nurd} model different distributions in the training distribution: $\ptr(\mbx\g \mby, \mbz)$ and $\ptr(\mby\g \mbz)$ respectively.
    Methods that combine the two approaches to produce better estimates of the \nrd would be interesting. 
    The first step in reweighting-\gls{nurd} is to estimate $\ptr(\mby\g \mbz)$.
    As deep networks tend to produce inflated probabilities~\citep{guo2017calibration}, one must take care to build calibrated models for $p(\mby\g \mbz)$.
    Adapting either calibration-focused losses~\citep{kumar2018trainable,goldstein2020x} or ensembling~\citep{lakshminarayanan2016simple} may produce calibrated probabilities.

    In our experiments, the training data contains a single environment.
    Methods for invariant representation learning~\citep{arjovsky2019invariant,krueger2020out,mahajan2020domain,guo2021out} 
    typically require data from multiple different environments.
    Nuisance-randomized data has a different nuisance-label relationship from the training data, meaning it is a different environment from the training data.
    Following this insight, using nuisance-randomization to produce samples from different environments using data from only a single environment would a fruitful direction.
    The absolute performance for both \gls{erm} which exploits spurious correlations and \gls{nurd} which does not, is too low to be of use in the clinic. Absolute performance could be improved with larger models, more data, using pretrained models, and multi-task learning over multiple lung conditions, all techniques that could be incorporated into learning procedures in general, including \gls{nurd}.


\subsection*{Acknowledgements}

The authors were partly supported by NIH/NHLBI Award R01HL148248, and by NSF Award 1922658 NRT-HDR: FUTURE Foundations, Translation, and Responsibility for Data Science. The authors would like to thank Mark Goldstein for helpful comments.

\bibliographystyle{unsrtnat}
\bibliography{ms}

\newpage
\appendix

\section{Further details about \gls{nurd} and proofs}

\allowdisplaybreaks

\setcounter{prop}{0}
\setcounter{thm}{0}
\setcounter{lemma}{0}

\subsection{Details about \gls{nurd}}\label{appsec:algorithms}

The algorithm boxes for reweighting-\gls{nurd} and generative-\gls{nurd} are given in algorithms \ref{alg:rw-nurd} and \ref{alg:gen-nurd}.

\paragraph{Estimating and using the weights in reweighting-\gls{nurd}.}
In learning $\ptr(\mby\g \mbz)$ for a high-dimensional $\mbz$, flexible models like deep neural networks can have zero training loss when the model memorizes the training data.
For a discrete $\mby$, such a model would output $\hatptr(\mby \g \mbz)=1$ for every sample in the training data.
Then, the model's weight estimates on the training data are $\nicefrac{p(\mby)}{\hatptr(\mby \g \mbz)}= p(\mby)$.
Weighting the training data with such estimates fails to break the nuisance-label relationship because
$\ptr(\mby, \mbz, \mbx) \frac{p(\mby)}{\hatptr(\mby \g \mbz)} = \ptr(\mby, \mbz, \mbx) {p(\mby)}  \propto \ptr(\mbz\g \mby)p(\mbx\g \mby, \mbz).$
To avoid such poor weight estimation, we employ a cross-fitting procedure: split the data into $K$ disjoint folds, and the weights for each fold are produced by a model trained and validated on the rest of the folds.
See algorithm \ref{alg:rw-nurd} for details.
In estimating loss for each batch under $\hatpind$ during training, one can either weight the per-sample loss or produce the batches themselves via weighted sampling from the data with replacement.

\paragraph{Density-ratio trick in Distillation.}

The density-ratio trick for estimating mutual information \citep{sugiyama2012density} involves Monte Carlo estimating the mutual information using a binary classifier. Let $\ell=1$ be the pseudolabel for samples from $ {\pind(\mby, r_\gamma(\mbx), \mbz)}$, and $\ell=0$ for samples from ${\pind(\mby, r_\gamma(\mbx)) \pind(\mbz)}$. Then,
\begin{align*}
\mbI_\hatpind([r_\gamma(\mbx), \mby]; \mbz) = \E_{\hatpind(\mby, \mbz, \mbx)} \log\frac{\hatpind(\mby, r_\gamma(\mbx), \mbz)}{\hatpind(\mby,  r_\gamma(\mbx)) \hatpind(\mbz)} = \E_{\hatpind(\mby, \mbz, \mbx)} \log \frac{p(\ell=1 \g \mby, \mbz, r_\gamma(\mbx))}{1 - p(\ell=1 \g \mby, \mbz, r_\gamma(\mbx))}.
\end{align*}
With parameters $\phi$, we estimate the conditional probability with a \textit{critic model}, denoted $p_\phi$.

\paragraph{Accounting for shifts in the marginal label distribution.}
\gls{nurd} relies on the assumption in \cref{eq:data-gen} that distributions in the nuisance-varying family $\cF$ have the same marginal $p(\mby)$.
What happens if $\pte$ comes from a nuisance-varying family with a different marginal? 
Formally, with $\ptr\in \cF$, let $\pte$ belong to a nuisance-varying family $\cF^\prime = \{\nicefrac{\pte(\mby)}{\ptr(\mby)}\pd(\mby, \mbz, \mbx) = \pte(\mby)\pd(\mbz\g \mby) p(\mbx\g \mby, \mbz)\}$ where $\pd\in \cF$.
Given knowledge of the marginal distribution $\pte(\mby)$, note that the weighted training distribution $\ptr^\prime=\nicefrac{\pte(\mby)}{\ptr(\mby)}\ptr(\mby, \mbz, \mbx)$ lives in $\cF^\prime$.
Running~\gls{nurd} on $\ptr^\prime$ produces predictive models that generalize to $\pte$.
To see this, note $\pindprime(\mby, \mbz, \mbx)  = p^\prime_{tr}(\mby)p^\prime_{tr}(\mbz)p(\mbx\g \mby, \mbz)$ is a \nrd{} in $\cF^\prime$.
With $\cR(\pindprime)$ as the uncorrelating set of representations defined with respect to $\pindprime$, i.e.\ $r^\prime(\mbx) \in\cR(\pindprime) \implies \mby \indep_{\pindprime} \mbz\g r^\prime$,~\cref{thm:indep-conditional-rx} and~\cref{thm:optimal-reps} hold.
It follows that running \gls{nurd} on samples from $\nicefrac{\pte(\mby)}{\ptr(\mby)}\ptr(\mby, \mbz, \mbx)$ produces an estimate of $\pindprime(\mby\g r^\prime(\mbx))$ ($r^\prime(\mbx)\in \cR(\pindprime)$) with the maximal performance on every $\pte\in\cF^\prime$ if a maximally blocking $r^*(\mbx)\in \cR(\pindprime)$.

\subsection{Extended related work}\label{appsec:related-work}

Domain generalization methods aim to build models with the goal of generalizing to unseen test data different from the training data~\citep{gulrajani2020search}.
Recent work uses multiple \textit{sufficiently different} environments to generalize to unseen test data that lies in the support of the given environments or subgroups~\citep{li2018learning,arjovsky2019invariant,akuzawa2019adversarial,mahajan2020domain,krueger2020out,bellot2020accounting,li2018domain,goel2020model,wald2021calibration}.
\citet{chang2020invariant} develop a multi-environment objective to interpret neural network predictions that are robust to spurious correlations.
Similarly, domain-invariant learning and related methods build representations that are independent of the domain \citep{ganin2016domain,xie2017controllable,zhang2018mitigating,ghimire2020learning,adeli2021representation,zhou2020domain}.

Due to its focus on nuisances, \gls{nurd} works with data from a single environment.
As in \cref{sec:related}, to split the data into multiple environments, one can split the data into groups based on the value of the nuisance.
Then, domain-invariant methods build representations that are independent of the nuisance and under nuisance-induced spurious correlations these representations may ignore semantic features because they are correlated with the nuisance.
Domain adaptation \citep{daume2009frustratingly,ben2007analysis,farahani2021brief} methods assume access to unlabelled test data which \gls{nurd} does not require.
We do not assume access to the test data because nuisance-label relationships can change over time or geography which, in turn, changes the the target distribution.

\begin{algorithm}[H]
    \label{alg:rw-nurd}
    \SetAlgoLined
    \KwIn{ Training data $D$, specification of the weight model $p_\alpha(\mby\g \mbz)$ which estimates $\ptr(\mby\g \mbz)$, representation model $r_\gamma(\mbx)$, predictive model $p_\theta(\mby\g r_\gamma(\mbx))$ and critic model $ p_\phi(\ell\g \mby, \mbz, r_\gamma(\mbx))$; regularization coefficient $\lambda$, number of iterations for the weight model $N_w$, for the predictive model and representation $N_p$, and the number of critic model steps $N_c$.
    Number of folds $K$.
    }
    \KwResult{Return estimate of $\pind(\mby\g r_\gamma(\mbx)  )$ for $r_\gamma \in \cR(\pind)$ with maximal information with $\mby$.}
    \textbf{Nuisance Randomization step}\;
    Estimate the marginal distribution over the label $\hatp(\mby)$\;
    Split data into $K$ equal disjoint folds, $D = \{F_i\}_{i\leq K}$, for cross-fitting\;
    \For(// \(\text{(cross-fitting)}\)){each fold $F_i$, $i\leq K$}{
        Initialize $p_\alpha(\mby\g \mbz)$\;
        \For(){$N_w$ iterations}{
            Sample training batch from the rest of the folds $(F_{-i})$ : $B\sim F_{-i}$\; 
            Compute likelihood $\sum_{(\mby_i, \mbz_i)\in B} \log p_\alpha(\mby_i \g \mbz_i)$\;
            Update $\alpha$ to maximize this likelihood (via Adam for example)\;
        }
        Produce weights $w_i = \nicefrac{\hatp(\mby_i)}{p_\alpha(\mby_i\g \mbz_i)}$ for each $(\mby_i, \mbz_i, \mbx_i) \in F_i$\;
    }
    \textbf{Distillation step}\;
      Initialize $r_\gamma,p_\theta,p_\phi$\;
     \For{$N_p$ iterations}{
      \For{$N_c$ iterations}{
        Sample training batch $B\sim D$ and sample independent copies of $\mbz$ marginally: $\tilde{\mbz_i}\sim D$\; 
        Construct batch $\tilde{B}=\{\mby_i, \tilde{\mbz_i}, \mbx_i\}$\;
        Compute likelihood 
        \[\sum_{(\mby_i, \mbz_i, \mbx_i)\in B} w_i \log p_\phi(\ell = 1\g \mby_i, \mbz_i, r_\gamma(\mbx_i)) + \sum_{(\mby_i, \tilde{\mbz_i}, \mbx_i)\in \tilde{B}} w_i \log p_\phi(\ell = 0\g \mby_i, \tilde{\mbz_i}, r_\gamma(\mbx_i)).\]
        Update $\phi$ to maximize likelihood (via Adam for example)\;
       }
        Sample training batch $B\sim D$\;
        Compute distillation objective using the density-ratio trick \[\frac{1}{|B|}\sum_{(\mbx_i, \mby_i, \mbz_i)\in B } w_i \left[ \log p_\theta(\mby_i \g r_\gamma(\mbx_i)) - \lambda \log \frac{p_\phi(\ell=1 \g \mby_i, \mbz_i, r_\gamma(\mbx_i))}{1 - p_\phi(\ell=1 \g \mby_i, \mbz_i, r_\gamma(\mbx_i))} \right].\]
        Update $\theta, \gamma$ to maximize objective (via Adam for example).
     }
     Return $p_\theta(\mby \g r_\gamma(\mbx))$.
     \caption{Reweighting-\gls{nurd}}
    \end{algorithm}

Taking a distributional robustness~\citep{duchi2021statistics} approach, \citet{sagawa2019distributionally} applied group-DRO to training data where the relative size of certain groups in the training data results in spurious correlations.
Given these groups, group-DRO optimizes the worst error across distributions formed by weighted combinations of the groups.
With high dimensional $\mbz$ as in our experiments, defining groups based on the value of the nuisance typically results in groups with at most one sample; with such groups, group-DRO will encourage memorizing the training data.
Other work aims to minimize worst subgroup error with a
finite number of fixed but unknown subgroups~\citep{lahoti2020fairness,martinez2021blind}; as subgroups are unknown, they only find an approximate minimizer of the worst subgroup error in general even with infinite data.

In contrast, \gls{nurd} builds predictive models with performance guarantees across all test distributions (that factorize as \cref{eq:data-gen}) using knowledge of the nuisance.
Given the nuisance, existence of a finite number of subgroups maps to an additional discreteness assumption on the nuisance variable; \gls{nurd} works with general high-dimensional nuisances.
\citet{wang2020robustness} focus on sentiment analysis of reviews and build a dataset where the nuisance label relationship is destroyed by swapping words known to be associated with sentiment of the review, with their antonyms.
This is equivalent to using domain-specific knowledge to sample from $p(\mbx\g \mby, \mbz)$ in generative~\gls{nurd}.
\gls{nurd} requires no domain-specific knowledge about the generative model $p(\mbx\g \mby, \mbz)$.

\begin{algorithm}[H]
    \label{alg:gen-nurd}
    \SetAlgoLined
    \KwIn{Training data $D$, specification of the generative model $p_\beta(\mbx\g \mby, \mbz)$ that estimates $\ptr(\mbx\g \mby, \mbz)$, representation model $r_\gamma(\mbx)$, predictive model $p_\theta(\mby\g r_\gamma(\mbx))$, and critic model $p_\phi(\ell\g \mby, \mbz, r_\gamma(\mbx))$; regularization coefficient $\lambda$, number of iterations for the weight model $N_w$, number of iterations for the predictive model and representation $N_p$, number of critic steps $N_c$.
    }
    \KwResult{Return estimate of $\pind(\mby\g r_\gamma(\mbx)  )$ for $r_\gamma \in \cR(\pind)$ with maximal information with $\mby$.}
    \textbf{Nuisance Randomization step}\;
    \For(){$N_w$ iterations}{
        Sample training batch $B\sim D$\; 
        Compute likelihood $\sum_{(\mby_i, \mbz_i, \mbx_i)\in B} \log p_\beta(\mbx_i \g \mbz_i, \mby_i)$ (or some generative objective)\;
        Update $\beta$ to maximize objective above\;
    }
    Estimate the marginal distribution over the label $\hatp(\mby)$\;
    Sample independent label and nuisance $\mby_i\sim D, \mbz_j\sim D$, and then sample $\tilde{\mbx} \sim p_\beta(\mbx \g \mby_i, \mbz_j)$\;
    Construct dataset $\hat{D}$ using triples $\{\mby_k = \mby_i, \mbz_k = \mbz_j, \mbx_k=\hat{\mbx}\}$\;
    \textbf{Distillation step}\;
      Initialize $r_\gamma,p_\theta,p_\phi$\;
      \For{$N_p$ iterations}{
        \For{$N_c$ iterations}{
        Sample training batch $B\sim D$ and sample independent copies of $\mbz$ marginally: $\tilde{\mbz_i}\sim D$\; 
        Construct batch $\tilde{B}=\{\mby_i, \tilde{\mbz_i}, \mbx_i\}$\ using $B$; \\
        Compute likelihood \[\sum_{(\mby_i, \mbz_i, \mbx_i)\in B} w_i \log p_\phi(\ell = 1\g \mby_i, \mbz_i, r_\gamma(\mbx_i)) + \sum_{(\mby_i, \tilde{\mbz_i}, \mbx_i)\in \tilde{B}} w_i \log p_\phi(\ell = 0\g \mby_i, \tilde{\mbz_i}, r_\gamma(\mbx_i)).\]
        Update $\phi$ to maximize likelihood (via Adam for example)\;
       }
       Sample batch from generated training data $B\sim \tilde{D}$\;
       Compute distillation objective using the density-ratio trick \[\frac{1}{|B|}\sum_{(\mbx_k, \mby_k, \mbz_k)\in B } \left[ \log p_\theta(\mby_k \g r_\gamma(\mbx_k)) - \lambda \log \frac{p_\phi(\ell=1 \g \mby_k, \mbz_k, r_\gamma(\mbx_k))}{1 - p_\phi(\ell=1 \g \mby_k, \mbz_k, r_\gamma(\mbx_k))} \right];\]

        Update $\theta, \gamma$ to maximize objective (via Adam for example).
     }
     Return $p_\theta(\mby \g r_\gamma(\mbx))$.
     \caption{Generative-\gls{nurd}}
    \end{algorithm}

\newpage

\subsection{Key lemmas for uncorrelating representations $r\in \cR(\pind)$}\label{appsec:main-lemma}
In this lemma, we derive the performance of the \nrc{} $\pind(\mby\g r(\mbx))$ for any $r\in \cR(\pind)$ and show that it is at least as good as predicting without covariates on any $\pte\in \cF$.

\begin{lemma}
    \label{thm:indep-conditional-rx}
    Let $\cF$ be a nuisance-varying family (\cref{eq:data-gen}) and $\pind=p(\mby)\ptr(\mbz)p(\mbx\g \mby, \mbz)$ for some $\ptr\in \cF$.
    Assume $\forall \pd \in \cF$, $\pd(\mbz \g \mby)$ is bounded.
    If $r(\mbx)\in \cR(\pind)$, then
     $\forall\pte\in \cF$, the performance of $\pind(\mby\g r(\mbx)) $ is
    \begin{align}
       \perfte(\pind(\mby \g r(\mbx)) ) = \perfte(p(\mby) ) + \mathop{\E}_{\pte(\mby, \mbz)} \KL{ p( r(\mbx) \g \mby, \mbz) }{\E_{p(\mby)} p( r(\mbx) \g \mby, \mbz)}.
    \end{align}
    As the KL-divergence is non-negative,
     $\perfte(\pind(\mby \g r(\mbx)) ) \geq \perfte(p(\mby) )$.
\end{lemma}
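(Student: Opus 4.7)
The plan is to reduce the difference $\perfte(\pind(\mby\g r(\mbx))) - \perfte(p(\mby))$ to an expected KL divergence by leveraging the factorization of $\pind(\mby \g r(\mbx))$ recorded in \cref{eq:cond-equal-main}, then averaging over a judiciously chosen distribution over $\mbz$.

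First, I would use the log-likelihood form of $\perfte$. Because $\pind(\mby \g r(\mbx))$ depends on $\mbx$ only through $r(\mbx)$, the constant $C_\pte$ cancels upon subtraction, giving
\[\perfte(\pind(\mby\g r(\mbx))) - \perfte(p(\mby)) = \E_{\pte(\mby, r(\mbx))} \log \frac{\pind(\mby\g r(\mbx))}{p(\mby)}.\]
Next, since $r \in \cR(\pind)$ means $\mby \indep_\pind \mbz \g r(\mbx)$, \cref{eq:cond-equal-main} gives, for every $\mbz'$ with $\pind(\mbz' \g r(\mbx))>0$,
\[\pind(\mby\g r(\mbx)) \;=\; \frac{p(\mby)\,p(r(\mbx)\g \mby,\mbz')}{\E_{p(\mby)}\,p(r(\mbx)\g \mby,\mbz')}.\]
The left-hand side does not depend on $\mbz'$, so I would integrate this identity against $\pte(\mbz \g \mby, r(\mbx))$. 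The support of $\pte(\mbz\g \mby, r(\mbx))$ lies inside $\{\mbz : \pind(\mbz\g r(\mbx))>0\}$ because the boundedness and positivity of $\pd(\mbz\g \mby)$ on $S_\cF$, together with the shared $p(\mbx\g \mby, \mbz)$, force $\pte$ and $\pind$ to have matching supports in $\mbz$ given $(\mby, r(\mbx))$. This yields
\[\log \frac{\pind(\mby\g r(\mbx))}{p(\mby)} \;=\; \E_{\pte(\mbz\g \mby, r(\mbx))} \log \frac{p(r(\mbx)\g \mby,\mbz)}{\E_{p(\mby')}\,p(r(\mbx)\g \mby',\mbz)}.\]

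Plugging back and re-factoring the joint expectation as $\pte(\mby, \mbz)\,\pte(r(\mbx)\g \mby, \mbz)$, I would then use that $\pte(r(\mbx)\g \mby, \mbz) = p(r(\mbx)\g \mby, \mbz)$, which follows from the family structure in \cref{eq:data-gen} where $p(\mbx \g \mby, \mbz)$ is shared across $\cF$ and $r$ is a deterministic function of $\mbx$. The inner expectation then reduces exactly to the KL divergence
\[\KL{p(r(\mbx)\g \mby, \mbz)}{\E_{p(\mby)}\,p(r(\mbx)\g \mby, \mbz)},\]
after which taking $\E_{\pte(\mby, \mbz)}$ produces the claimed identity. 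Non-negativity of KL then gives $\perfte(\pind(\mby\g r(\mbx))) \ge \perfte(p(\mby))$.

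The main obstacle is the bookkeeping around supports: \cref{eq:cond-equal-main} only holds pointwise in $\mbz'$ where $\pind(\mbz'\g r(\mbx))>0$, so the trick of integrating against $\pte(\mbz\g \mby, r(\mbx))$ must be justified by the shared-$p(\mbx\g \mby, \mbz)$ and boundedness-of-$\pd(\mbz\g \mby)$ assumptions. Once that support-matching is in place, the rest is direct rearrangement of logs and expectations.
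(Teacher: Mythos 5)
Your proposal is correct and follows essentially the same route as the paper's proof: both reduce the performance gap to $\E_{\pte(\mby,\mbz, r(\mbx))}\log\frac{\pind(\mby\g r(\mbx))}{p(\mby)}$, introduce $\mbz$ via the uncorrelating property (your use of \cref{eq:cond-equal-main} is the same step the paper performs by replacing $\pind(\mby\g r(\mbx))$ with $\pind(\mby\g r(\mbx),\mbz)$ and $p(\mby)$ with $\pind(\mby\g\mbz)$), invoke the shared conditional $p(\mbx\g\mby,\mbz)$ to identify $\pte(r(\mbx)\g\mby,\mbz)$ with $p(r(\mbx)\g\mby,\mbz)$, and land on the same KL term. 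Your support-matching caveat is handled in the paper by \cref{lemma:multiplication}, so the bookkeeping you flag is exactly what the paper formalizes.
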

\begin{proof} (of~\cref{thm:indep-conditional-rx})
Note that the identity $\E_{p(\mbx)} g\circ f(\mbx) = \E_{p(f(\mbx))}g\circ f(\mbx)$ implies that \[\E_{\pte(\mby, \mbx)} \log \frac{\pte(\mby)}{\pind(\mby \g r(\mbx))} = \E_{\pte(\mby, r(\mbx))} \log \frac{\pte(\mby)}{\pind(\mby \g r(\mbx))}.\]

As $\pind(\mbz \g\mby) = \ptr(\mbz) > 0$ on $\mbz\in S_{\cF}$ and $\mby$ s.t. $p(\mby)>0$ is bounded,~\cref{lemma:multiplication} implies that $\pind(\mby, \mbz, \mbx) > 0 \Leftrightarrow \pte(\mby, \mbz, \mbx)>0$.
This fact implies the following $\kld$ terms and expectations of log-ratios are all well-defined:
\begin{align*} 
    -\perfte(\pind(\mby\g r(\mbx)))        & = \E_{\pte(\mbx)}  \KL{\pte(\mby \g \mbx)}{\pind(\mby \g r(\mbx))}  
    \\
        & = \E_{\pte(\mby, \mbx)} \log \frac{\pte(\mby \g \mbx)\pte(\mby)}{\pind(\mby\g r(\mbx))\pte(\mby)}
    \\
        & = \E_{\pte(\mby, \mbx)} \log \frac{\pte(\mby \g \mbx)}{\pte(\mby)} + \E_{\pte(\mby, \mbx)} \log \frac{\pte(\mby)}{\pind(\mby \g r(\mbx))}
    \\
            & = \E_{\pte(\mby, \mbx)} \log \frac{\pte(\mby \g \mbx)}{\pte(\mby)} + \E_{\pte(\mby, r( \mbx ))} \log \frac{p(\mby)}{\pind(\mby \g r(\mbx))}
    \\
                & = \E_{\pte(\mbx)}\KL{\pte(\mby \g \mbx)}{\pind(\mby)}
                + \E_{\pte(\mby, \mbz, r( \mbx ))} \log \frac{p(\mby)}{\pind(\mby \g r( \mbx ))}
    \\
                & =  \E_{\pte(\mbx)}\KL{\pte(\mby \g \mbx)}{\pind(\mby)}
                 +
                 \E_{\pte(\mby, \mbz)} \E_{\pind(r( \mbx )\g \mby, \mbz)} \log \frac{\pind(\mby)}{\pind(\mby \g r( \mbx ))}
    \\
                & = \E_{\pte(\mbx)}\KL{\pte(\mby \g \mbx)}{\pind(\mby)}
                 + \E_{\pte(\mby, \mbz)} \E_{\pind(r( \mbx )\g \mby, \mbz)} \log \frac{\pind(\mby \g \mbz)}{\pind(\mby \g r( \mbx ), \mbz)}
    \\
                & = \E_{\pte(\mbx)}\KL{\pte(\mby \g \mbx)}{\pind(\mby)} + \E_{\pte(\mby, \mbz)} \E_{\pind(r( \mbx )\g \mby, \mbz)} \log \frac{\pind(\mby \g \mbz)\pind(r( \mbx ) \g \mbz)}{\pind(\mby, r( \mbx ) \g \mbz)}
    \\
                & = \E_{\pte(\mbx)}\KL{\pte(\mby \g \mbx)}{\pind(\mby)} + \E_{\pte(\mby, \mbz)} \E_{\pind(\mbx\g \mby, \mbz)}  \log \frac{\pind(r( \mbx ) \g \mbz)}{\pind(r( \mbx ) \g \mby, \mbz)}
    \\
                & = \E_{\pte(\mbx)}\KL{\pte(\mby \g \mbx)}{\pind(\mby)} - \E_{\pte(\mby, \mbz)} \E_{\pind(r( \mbx )\g \mby, \mbz)}  \log \frac{\pind(r( \mbx ) \g \mby, \mbz)}{\pind(r( \mbx ) \g \mbz)}
    \\
                & = \E_{\pte(\mbx)}\KL{\pte(\mby \g \mbx)}{\pind(\mby)} -
                 \E_{\pte(\mby, \mbz)} \KL{\pind(r( \mbx ) \g \mby, \mbz)}{\pind(r( \mbx ) \g \mbz)}
\end{align*}

Here, $\pind(r(\mbx)\g \mby, \mbz) = p(r(\mbx)\g \mby, \mbz)$ as $\pind(\mbx\g \mby, \mbz) = p(\mbx\g \mby, \mbz)$ by definition of the nuisance-varying family.
The proof follows by noting that the gap in performance of $\pind(\mby\g r(\mbx))$ and $p(\mby)$ equals an expected $\kld$ term: 
\begin{align}
    \begin{split}
    \label{eq:indep-cond-perf}
    -\perfte(p(\mby))
     + \perfte(\pind(\mby\g r(\mbx)))
                & = 
                    \E_{\pte(\mby, \mbz)} \KL{\pind(r( \mbx ) \g \mby, \mbz)}{\pind(r( \mbx ) \g \mbz)}
    \\
        & = \E_{\pte(\mby, \mbz)} \KL{ p( r(\mbx) \g \mby, \mbz) }{\E_{p(\mby)} p( r(\mbx) \g \mby, \mbz)}.
    \end{split}
\end{align}
Rearranging these terms completes the proof.
\end{proof}

\Cref{lemma:cond-match} shows that uncorrelating sets are the same for any \nrd{} and that the conditional distribution of the label given an uncorrelating representations is the same for all \nrd{s}.

\begin{lemma}\label{lemma:cond-match}

    Let $\cF$ be a nuisance-varying family with $p(\mby)$ and $p(\mbx\g \mby, \mbz)$ and nuisance space $S_{\cF}$.
    Consider distributions $\pinda(\mby, \mbz, \mbx) = p(\mby) \pinda(\mbz) p(\mbx\g \mby, \mbz)$ and $\pindb(\mby, \mbz, \mbx) = p(\mby) \pindb(\mbz) p(\mbx\g \mby, \mbz)$ such that $\pinda(\mbz) > 0, \pindb(\mbz)>0$ for $\mbz\in S_{\cF}$, and $\pinda(\mby, \mbz, \mbx)>0 \Longleftrightarrow \pindb(\mby, \mbz, \mbx)>0$.
    Then, the uncorrelating sets are equal $\cR(\pinda)=\cR(\pindb)$ and for any $r(\mbx) \in \cR(\pinda)$,
    \begin{align*}
        \pinda(\mby\g r(\mbx)) = \pindb(\mby\g r(\mbx)).
    \end{align*}
\end{lemma}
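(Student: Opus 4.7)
The plan hinges on a single observation: in any nuisance-randomized distribution $\pind = p(\mby)\pind(\mbz)p(\mbx\g\mby,\mbz)$, the marginal $\pind(\mbz)$ cancels out of the conditional $\pind(\mby\g \mbz, r(\mbx))$. A one-line Bayes-rule computation gives
\begin{align*}
\pind(\mby\g \mbz, r(\mbx)) = \frac{p(\mby)\,\pind(\mbz)\,p(r(\mbx)\g\mby,\mbz)}{\pind(\mbz)\,\E_{p(\mby')} p(r(\mbx)\g\mby',\mbz)} = \frac{p(\mby)\,p(r(\mbx)\g\mby,\mbz)}{\E_{p(\mby')} p(r(\mbx)\g\mby',\mbz)},
\end{align*}
and the final expression depends only on $p(\mby)$ and $p(r(\mbx)\g\mby,\mbz)$, which are shared between $\pinda$ and $\pindb$. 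Because the hypothesis that the joint supports of $\pinda$ and $\pindb$ agree, this identity shows that $\pind(\mby\g \mbz, r(\mbx))$ is the \emph{same} function of $(\mby,\mbz, r(\mbx))$ under both distributions.

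For the first claim, $\cR(\pinda) = \cR(\pindb)$, I will use the characterization that $\mby\indep_\pind \mbz \g r(\mbx)$ is equivalent to the map $\mbz\mapsto \pind(\mby\g \mbz, r(\mbx))$ being constant in $\mbz$ on the conditional support given $r(\mbx)$. Since the map above is identical under $\pinda$ and $\pindb$, and since both have the same conditional support on $(\mbz, r(\mbx))$, the constancy property holds for one iff it holds for the other; hence $r\in \cR(\pinda) \Longleftrightarrow r\in \cR(\pindb)$. For the second claim, once $r\in \cR(\pinda) = \cR(\pindb)$, I invoke the derivation in~\cref{eq:cond-equal-main}: for any $\mbz'$ with $\pind(\mbz'\g r(\mbx))>0$,
\begin{align*}
\pind(\mby\g r(\mbx)) = \pind(\mby\g r(\mbx), \mbz') = \frac{p(\mby)\,p(r(\mbx)\g\mby,\mbz')}{\E_{p(\mby)} p(r(\mbx)\g\mby,\mbz')}.
\end{align*}
The right-hand side is free of $\pind(\mbz)$, so choosing the same $\mbz'\in S_\cF$ (which lies in the support of both $\pinda$ and $\pindb$ by the positivity assumption on the nuisance marginals) gives $\pinda(\mby\g r(\mbx)) = \pindb(\mby\g r(\mbx))$.

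The main subtlety is not conceptual but bookkeeping on supports: every Bayes-rule division above must be legitimate under both $\pinda$ and $\pindb$. This is exactly what the two support-level hypotheses (strict positivity of $\pinda(\mbz), \pindb(\mbz)$ on $S_\cF$, and the biconditional on joint positivity) are designed to guarantee, so once those are invoked upfront, the rest of the proof is essentially the two displays above.
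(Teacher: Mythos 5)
Your proof is correct and rests on exactly the same computation as the paper's: the Bayes-rule cancellation of $\pind(\mbz)$ showing $\pind(\mby\g \mbz, r(\mbx)) = p(\mby)p(r(\mbx)\g\mby,\mbz)/\E_{p(\mby)}p(r(\mbx)\g\mby,\mbz)$ is shared across nuisance-randomized distributions. The only difference is organizational --- you establish this $\mbz$-conditional identity for arbitrary $r$ first and then read off both claims, whereas the paper runs the chain of equalities for $r\in\cR(\pinda)$ and extracts the set equality at the end --- and your support bookkeeping matches the role the paper assigns to its positivity hypotheses.
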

\begin{proof}

By the assumption that $\pinda(\mby, \mbz, \mbx)>0 \Leftrightarrow \pindb(\mby, \mbz, \mbx)>0$, there exist some $\mbz$ such that
$\pinda(\mbz\g r(\mbx)) > 0$ and $\pindb(\mbz\g r(\mbx)) > 0$.
With such $\mbz$, for any $r\in \cR(\pinda)$,     
\begin{align*}
    \pinda(\mby\g r(\mbx)) &= \pinda(\mby\g r(\mbx), \mbz) 
    \\
    &= p(\mby)\frac{p(r(\mbx)\g \mby, \mbz)}{\pinda(r(\mbx) \g \mbz)}
    \\
    &= p(\mby)\frac{p(r(\mbx)\g \mby, \mbz)}{\E_{\pinda(\mby \g \mbz)}[\pinda(r(\mbx) \g \mbz, \mby)]}
    \\
    &= p(\mby)\frac{p(r(\mbx)\g \mby, \mbz)}{\E_{p(\mby)}p(r(\mbx) \g \mbz, \mby)}
    \\
    &= p(\mby)\frac{p(r(\mbx)\g \mby, \mbz)}{\E_{\pindb(\mby\g \mbz)}p(r(\mbx) \g \mbz, \mby)} 
    \\
    & = p(\mby)\frac{p(r(\mbx)\g \mby, \mbz)}{\pindb(r(\mbx) \g \mbz)} 
    \\
    & = \pindb(\mby\g r(\mbx), \mbz) 
\end{align*}

Taking expectation on both sides with respect to $\pindb(\mbz\g r(\mbx))$,
\begin{align}
    \E_{\pindb(\mbz\g r(\mbx)) } \pinda(\mby\g r(\mbx)) = \E_{\pindb(\mbz\g r(\mbx)) } \pindb(\mby\g r(\mbx), \mbz) = \pindb(\mby\g r(\mbx)). 
\end{align}
Note that $\E_{\pindb(\mbz\g r(\mbx)) }\pinda(\mby\g r(\mbx))=\pinda(\mby\g r(\mbx))$, which implies
\[ \pinda(\mby\g r(\mbx)) = \pinda(\mby\g r(\mbx), \mbz) = \pindb(\mby\g r(\mbx), \mbz)  = \pindb(\mby\g r(\mbx)),\]
completing one part of the proof, $\pinda(\mby\g r(\mbx))= \pindb(\mby\g r(\mbx))$.

Further, we showed $\mby\indep_\pinda \mbz \g r(\mbx) \implies \mby\indep_\pindb \mbz \g r(\mbx)$ which means $r(\mbx)\in \cR(\pindb)$.
As the above proof holds with $\pinda,\pindb$ swapped with each other, $r(\mbx) \in \cR(\pinda)\Longleftrightarrow r(\mbx) \in \cR(\pindb)$.

\end{proof}

The next lemma shows that every member of the \nvf{} is positive over the same set of $\mby, \mbz, \mbx$ and is used in \cref{prop:minimax-opt} and \cref{thm:indep-conditional-rx}.
\begin{lemma}
    \label{lemma:multiplication}
    Let the nuisance-varying family $\cF$ be defined with $p(\mby), p(\mbx\g \mby, \mbz)$ and nuisance space $S_{\cF}$.
    Let distributions $\pd = p(\mby) \pd(\mbz\g \mby) p(\mbx\g \mbz, \mby)$ and $\pdp = p(\mby) \pdp(\mbz\g \mby) p(\mbx\g \mbz, \mby)$ be such that $ \pd(\mbz\g \mby),  \pdp(\mbz\g \mby)>0$ for all $\mby$ such that $p(\mby) > 0$ and $\mbz\in S_{\cF}$.
    Further assume $ \pd(\mbz\g \mby),  \pdp(\mbz\g \mby)$ are bounded.
    Then, $\pd(\mby, \mbz, \mbx) > 0\Leftrightarrow \pdp(\mby, \mbz, \mbx) > 0$.
    \end{lemma}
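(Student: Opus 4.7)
The strategy is to exploit the factorization directly. By the structure of the nuisance-varying family, both joints share the label marginal $p(\mby)$ and the covariate conditional $p(\mbx\g\mby,\mbz)$; only the middle factor $\pd(\mbz\g\mby)$ vs.\ $\pdp(\mbz\g\mby)$ differs. Since a product of nonnegative factors is positive iff each factor is positive, positivity of $\pd(\mby,\mbz,\mbx)$ reduces to the conjunction $p(\mby)>0$, $\pd(\mbz\g\mby)>0$, and $p(\mbx\g\mby,\mbz)>0$, and analogously for $\pdp$. Thus, the lemma reduces to showing that, on the set where $p(\mby)>0$, the positivity sets of $\pd(\cdot\g\mby)$ and $\pdp(\cdot\g\mby)$ coincide.

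For the forward direction, I would assume $\pd(\mby,\mbz,\mbx)>0$, peel off the three positive factors, and in particular read off that $\pd(\mbz\g\mby)>0$ together with $p(\mby)>0$. By the definition of the family's nuisance space $S_{\cF}$ (the common support of the nuisance conditionals across all $\pd\in\cF$, for $\mby$ with $p(\mby)>0$), positivity of $\pd(\mbz\g\mby)$ forces $\mbz\in S_{\cF}$. Then the hypothesis that $\pdp(\mbz\g\mby)>0$ on $\{\mby:p(\mby)>0\}\times S_{\cF}$ gives $\pdp(\mbz\g\mby)>0$, and multiplying by the two identical factors $p(\mby)>0$ and $p(\mbx\g\mby,\mbz)>0$ yields $\pdp(\mby,\mbz,\mbx)>0$. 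The reverse direction is symmetric, swapping the roles of $\pd$ and $\pdp$.

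The only step requiring care is the claim that $\pd(\mbz\g\mby)>0$ forces $\mbz\in S_{\cF}$, since the lemma's hypothesis only asserts positivity on $S_{\cF}$, not that the conditionals vanish outside it. I would handle this by invoking the paper's standing convention that $S_{\cF}$ is exactly the (common) nuisance support of the family, so that $\pd(\mbz\g\mby),\pdp(\mbz\g\mby)$ are both zero for $\mbz\notin S_{\cF}$; this makes the equivalence of positivity sets immediate. The boundedness assumption is not needed for this pointwise positivity statement — it plays no role in the argument — and is presumably included because the same hypotheses are re-used in \cref{thm:indep-conditional-rx} where ratios like $\pte(\mbz\g\mby)/\pind(\mbz\g\mby)$ must be integrable; I would note this in passing but not invoke it.
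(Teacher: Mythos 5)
Your proposal is correct and follows essentially the same route as the paper: both arguments rest on the fact that the two joints share the factors $p(\mby)$ and $p(\mbx\g\mby,\mbz)$ and differ only in the $\mbz$-given-$\mby$ factor, which is positive for $\mbz\in S_{\cF}$ and $p(\mby)>0$ (the paper phrases this by multiplying $\pd$ by the positive ratio $\pdp(\mbz\g\mby)/\pd(\mbz\g\mby)$, while you match positivity factor by factor, and both rely on the same convention that $\mbz$ ranges only over $S_{\cF}$). Your side remark that boundedness is not used here is also accurate.
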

\begin{proof}
    For any $\mbz\in S_{\cF}$ and any $\mby$ such that $p(\mby) > 0$, $\pd(\mbz\g \mby)>0$ and $\frac{\pdp(\mbz\g \mby)}{\pd(\mbz\g \mby)} > 0 $,
    \begin{align}
        \pdp(\mby, \mbz, \mbx) = p(\mbx \g \mby, \mbz) \pdp(\mbz\g \mby) p(\mby) = p(\mbx \g \mby, \mbz) \pd(\mbz\g \mby)  p(\mby) \frac{\pdp(\mbz\g \mby)}{\pd(\mbz\g \mby)} = \pd(\mby, \mbz, \mbx)\frac{\pdp(\mbz\g \mby)}{\pd(\mbz\g \mby)}.
    \end{align}
    Thus, for all $\mbz$ in the nuisance space $S_{\cF}$ and any $\mby$ such that $p(\mby) > 0$,
    \[\pdp(\mby, \mbz, \mbx) >0 \Longleftrightarrow \pd(\mby, \mbz, \mbx)>0.\]
    As $\mbz$ only takes values in the nuisance space $S_{\cF}$, when $p(\mby)=0$,
    \[\pdp(\mby, \mbz, \mbx) = \pd(\mby, \mbz, \mbx) = 0.\]
    Together, the two statements above imply
    \[\pd(\mby, \mbz, \mbx) > 0  \Leftrightarrow  \pdp(\mby, \mbz, \mbx) > 0.\]
\end{proof}

\subsection{Optimal uncorrelating representations}\label{appsec:optimal-representations}

\begin{thm}
    \perfgapthm{}
\end{thm}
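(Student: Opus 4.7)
The plan is to deduce all three parts from a single structural fact: for any $r\in\cR(\pind)$, the pair $(r(\mbx),r^*(\mbx))$ is itself in $\cR(\pind)$, and moreover $\mby\indep_\pind r(\mbx)\g r^*(\mbx)$. I would obtain both by combining $\mby\indep_\pind r(\mbx)\g \mbz, r^*(\mbx)$ (maximal blocking of $r^*$) with $\mby\indep_\pind \mbz\g r^*(\mbx)$ (from $r^*\in\cR(\pind)$) and applying the contraction axiom to conclude $\mby\indep_\pind (r(\mbx),\mbz)\g r^*(\mbx)$; decomposition and weak union of this joint conditional independence then give the two desired projections.

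Given this, Part 1 reduces to the chain rule for KL-divergence. With the shorthand
$\Delta(\pte,\rho)=\perfte(\rho)-\perfte(p(\mby))=\E_{\pte(\mby,\mbz)}\KL{\pind(\rho\g \mby,\mbz)}{\pind(\rho\g \mbz)}$
from \cref{thm:indep-conditional-rx}, expanding the joint KL for $\rho=(r,r^*)$ by conditioning on $r^*$ first gives
\begin{align*}
\Delta(\pte,(r,r^*))=\Delta(\pte,r^*)+\E_{\pte(\mby,\mbz)}\E_{\pind(r^*\g \mby,\mbz)}\KL{\pind(r\g r^*,\mby,\mbz)}{\pind(r\g r^*,\mbz)},
\end{align*}
and the inner KL vanishes by maximal blocking, yielding $\Delta(\pte,(r,r^*))=\Delta(\pte,r^*)$. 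Expanding the same joint KL by conditioning on $r$ first and dropping the resulting non-negative conditional KL term produces $\Delta(\pte,(r,r^*))\geq \Delta(\pte,r)$. Chaining the two gives $\perfte(r^*)\geq \perfte(r)$.

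Part 2 follows from the mutual-information chain rule
\begin{align*}
\mbI_\pind(\mby;(r,r^*))=\mbI_\pind(\mby;r^*)+\mbI_\pind(\mby;r\g r^*)=\mbI_\pind(\mby;r)+\mbI_\pind(\mby;r^*\g r).
\end{align*}
Since $\mby\indep_\pind r\g r^*$ forces $\mbI_\pind(\mby;r\g r^*)=0$, the first expression collapses to $\mbI_\pind(\mby;r^*)$, and non-negativity of $\mbI_\pind(\mby;r^*\g r)$ in the second gives $\mbI_\pind(\mby;r^*)\geq \mbI_\pind(\mby;r)$.

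For Part 3, assuming $\mbI_\pind(\mby;r'(\mbx))=\mbI_\pind(\mby;r^*(\mbx))$ for some $r'\in\cR(\pind)$, the same MI chain rule forces $\mbI_\pind(\mby;r^*\g r')=0$, so $\mby\indep_\pind r^*\g r'$. Combining with $\mby\indep_\pind \mbz\g r',r^*$ (from $(r',r^*)\in\cR(\pind)$) via contraction yields $\mby\indep_\pind (r^*,\mbz)\g r'$, whose weak-union consequence is $\mby\indep_\pind r^*\g r',\mbz$. Now expanding $\Delta(\pte,(r',r^*))$ by conditioning on $r'$ first, the inner KL vanishes as well, giving $\Delta(\pte,(r',r^*))=\Delta(\pte,r')$; combined with $\Delta(\pte,(r',r^*))=\Delta(\pte,r^*)$ from Part 1's computation, this gives $\perfte(r')=\perfte(r^*)$ for every $\pte\in\cF$. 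The main obstacle throughout is bookkeeping: making sure each iterated conditional independence and each KL chain-rule decomposition is well-defined on a common support, which is guaranteed by \cref{lemma:multiplication} together with positivity of $\ptr(\mbz)$ on $S_{\cF}$.
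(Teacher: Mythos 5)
Your proof is correct, and all three parts land on the same underlying mechanism as the paper's: a chain-rule decomposition in which the blocking condition annihilates one term and non-negativity of the other yields the inequality. The organization differs in two ways worth noting. First, for Part 1 the paper directly telescopes $\E_{\pte(\mbx)}\KL{\pte(\mby\g \mbx)}{\pind(\mby\g r(\mbx))}$ into $\ell(r_2)$ plus a KL over conditionals of $r_2(\mbx)$, manipulating conditionals of $\mby$ throughout; you instead pass through the joint representation $(r,r^*)$ (after verifying via weak union that it lies in $\cR(\pind)$) and apply \cref{thm:indep-conditional-rx}'s performance identity twice, once conditioning on $r^*$ first (exact, by blocking) and once on $r$ first (an inequality). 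The two computations are arithmetically equivalent — the paper's identity is exactly the difference of your two decompositions — but yours never needs to reason about $\pind(\mby\g r(\mbx))$ directly, which makes the support bookkeeping lighter. Second, for Part 2 the paper keeps $\mbz$ in the conditioning and removes it from the entropies using the uncorrelating property of both representations, whereas you first derive the unconditional statement $\mby\indep_\pind r(\mbx)\g r^*(\mbx)$ by contraction and then run a $\mbz$-free mutual-information chain rule; this is a clean simplification and also hands you Part 3's key independence ($\mby\indep_\pind r^*(\mbx)\g r'(\mbx),\mbz$) by contraction and weak union rather than by reading it off the paper's identity $\mbI_\pind(\mby;r^*(\mbx))-\mbI_\pind(\mby;r(\mbx))=\mbI_\pind(r^*(\mbx);\mby\g \mbz,r(\mbx))$. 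Both routes are sound; the paper's has the minor advantage of exhibiting the performance and information gaps as explicit non-negative divergences, while yours buys a more modular argument in which every step is either a graphoid axiom or a one-line chain rule.
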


\begin{proof} (proof for~\cref{thm:performance-sep})

We first prove that for any pair $r, r_2\in \cR(\pind)$ such that $r_2$ blocks $r$, $r(\mbx) \indep_\pind \mby \g \mbz, r_2(\mbx)$, $r_2$ dominates the performance of $r$ on every $\pte\in \cF$.
The simultaneously optimality of the maximally blocking representation will follow.
For readability, let $\ell(r_2) = \E_{\pte(\mbx)} \KL{\pte(\mby \g \mbx)}{\pind(\mby \g r_2(\mbx))}$.
We will show that
\[\E_{\pte(\mbx)} \KL{\pte(\mby \g \mbx)}{\pind(\mby \g r(\mbx))} \geq \ell(r_2).\]

We will use the following identity which follows from the fact that $p(\mbx\g \mby, \mbz)$ does not change between distributions in the data generating process~\cref{eq:data-gen}:
\begin{align*}
    p_D(r_2(\mbx) \g \mby, \mbz, r(\mbx)) & = \frac{p_D(r_2(\mbx), r(\mbx) \g \mby, \mbz)}{p_D(r(\mbx) \g \mby, \mbz)} 
    \\
        & = \frac{p(r_2(\mbx), r(\mbx) \g \mby, \mbz)}{p(r(\mbx) \g \mby, \mbz)} 
    \\
        & = p(r_2(\mbx) \g \mby, \mbz, r(\mbx)).
\end{align*}

Next, we will show that
\begin{align*}
    \E_{\pte(\mbx)} & \KL{\pte(\mby \g \mbx)}{\pind(\mby \g r(\mbx))} 
    \\
        &= \ell(r_2)  + \E_{\pte(\mby, \mbz, r(\mbx))}\KL{\pind(r_2(\mbx) \g \mby, \mbz, r(\mbx))}{\pind(r_2(\mbx) \g r(\mbx) ,\mbz)}.
\end{align*}

The steps are similar to~\cref{thm:indep-conditional-rx}'s proof
\begin{align*}
    \E_{\pte(\mbx)} & \KL{\pte(\mby \g \mbx)}{\pind(\mby \g r(\mbx))} = \E_{\pte(\mby, \mbx)} \log \frac{\pte(\mby \g \mbx)}{\pind(\mby \g r_2(\mbx))} + \E_{\pte(\mby, \mbx)} \log \frac{\pind(\mby \g r_2(\mbx))}{\pind(\mby \g r(\mbx))}
    \\
            & = \ell(r_2) 
            + \E_{\pte(\mby, r(\mbx), r_2( \mbx ))} \log\frac{\pind(\mby \g r_2(\mbx))}{\pind(\mby \g r(\mbx))}
    \\
            & = \ell(r_2) 
            + \E_{\pte(\mby, \mbz, r(\mbx), r_2( \mbx ))} \log\frac{\pind(\mby \g r_2(\mbx), \mbz)}{\pind(\mby \g r(\mbx), \mbz)}    \qquad \{\text{as } r, r_2\in \cR(\pind) \}
    \\
            & = \ell(r_2) 
            + \E_{\pte(\mby, \mbz, r(\mbx), r_2(\mbx))} \log\frac{\pind(\mby \g r_2(\mbx), r(\mbx),\mbz)}{\pind(\mby \g r(\mbx), \mbz)} \qquad  \{\mby\indep_\pind r(\mbx) \g \mbz, r_2(\mbx) \}
    \\
            & = \ell(r_2) 
            +\E_{\pte(\mby, \mbz, r(\mbx), r_2(\mbx))}\log\frac{\pind(\mby, r_2(\mbx) \g r(\mbx) ,\mbz)}{\pind(\mby \g  r(\mbx),  \mbz) \pind(r_2(\mbx) \g r(\mbx) ,\mbz)}
    \\
            & = \ell(r_2) 
            +\E_{\pte(\mby, \mbz, r(\mbx), r_2(\mbx))}\log\frac{\pind(r_2(\mbx) \g \mby,  r(\mbx) ,\mbz)}{\pind(r_2(\mbx) \g r(\mbx) ,\mbz)}
    \\
        & = \ell(r_2)  + 
                        \E_{\pte(\mby, \mbz, r(\mbx))}\E_{\pte(r_2(\mbx) \g \mby, \mbz, r(\mbx))} \log\frac{\pind(r_2(\mbx) \g \mby,  r(\mbx) ,\mbz)}{\pind(r_2(\mbx) \g r(\mbx) ,\mbz)}
    \\
        & = \ell(r_2)  + \E_{\pte(\mby, \mbz, r(\mbx))}\KL{\pind(r_2(\mbx) \g \mby, \mbz, r(\mbx))}{\pind(r_2(\mbx) \g r(\mbx) ,\mbz)}
\end{align*}

Noting that $\kld$ is non-negative and that $\perf$ is negative-$\kld$ proves the theorem:
\begin{align}
    \label{eq:perf-imp-rep}
        \E_{\pte(\mbx)}\KL{\pte(\mby \g \mbx)}{\pind(\mby\g r_2(\mbx))} \leq \E_{\pte(\mbx)} \KL{\pte(\mby \g \mbx)}{\pind(\mby \g r(\mbx))}.
\end{align}
It follows that for a maximally blocking $r^*$
\begin{align*}
    \forall r\in \cR(\pind) \quad 
        \E_{\pte(\mbx)}\KL{\pte(\mby \g \mbx)}{\pind(\mby\g r^*(\mbx))} \leq \E_{\pte(\mbx)} \KL{\pte(\mby \g \mbx)}{\pind(\mby \g r(\mbx))}.
\end{align*}
As performance is negative $\kld$, the proof follows that $r^*$ dominates $r$ in performance. This concludes the first part of the proof. \\

For the second part, we prove information maximality of a maximally blocking $r^*(\mbx) \in \cR(\pind)$.
The proof above shows that the model $\pind(\mby \g  r^*(\mbx))$ performs at least as well as $\pind(\mby\g r(\mbx)) $ for any $r(\mbx) \in \cR(\pind)$ on any $\pte\in \cF$.
We characterize the gap in performance between $\pind(\mby \g r^*(\mbx))$ and $\pind(\mby\g r(\mbx))$ for any $r(\mbx) \in \cR(\pind)$ as the following conditional mutual information term:
 \[\E_{\pind(\mby, \mbz, r(\mbx))} \KL{\pind(r^*(\mbx) \g \mby, \mbz, r(\mbx))}{\pind(r^*(\mbx) \g r(\mbx) ,\mbz)}
        = \mbI_\pind(r^*(\mbx) ; \mby \g \mbz, r(\mbx)).\]
    The entropy decomposition of conditional mutual information (with $\mbH_q(\cdot)$ as the entropy under a distribution $q$) gives two mutual information terms.
    \begin{align*}
        \E_{\pind(\mby, \mbz, r(\mbx))}& \KL{\pind(r^*(\mbx) \g \mby, \mbz, r(\mbx))}{\pind(r^*(\mbx) \g r(\mbx) ,\mbz)}
        = \mbI_\pind(r^*(\mbx) ; \mby \g \mbz, r(\mbx)),
    \\
        & = \mbH_\pind(\mby \g \mbz, r(\mbx))  - \mbH_\pind(\mby \g \mbz, r(\mbx) ,r^*(\mbx)) 
    \\
        & = \mbH_\pind(\mby \g \mbz, r(\mbx))  - \mbH_\pind(\mby \g \mbz, r^*(\mbx))
        \qquad  \{\mby\indep_\pind r(\mbx) \g \mbz, r^*(\mbx) \}
    \\
        & = \mbH_\pind(\mby \g r(\mbx)) - \mbH_\pind(\mby \g r^*(\mbx))
        \qquad  \{r, r^*\in \cR(\pind)\}
    \\
        & = \mbH_\pind(\mby \g r(\mbx)) - \mbH_\pind(\mby) + \mbH_\pind(\mby) - \mbH_\pind(\mby \g r^*(\mbx))
    \\
        & = \mbI_\pind(\mby; r^*(\mbx)) - \mbI_\pind(\mby , r(\mbx)) .
    \end{align*}
This difference is non-negative for any $r\in \cR(\pind)$ which proves the second part of the theorem: \[\mbI_\pind(\mby; r^*(\mbx)) - \mbI_\pind(\mby , r(\mbx)) = \mbI_\pind(r^*(\mbx) ; \mby \g \mbz, r(\mbx)) \geq 0.\]
For the third part, note that any representation $r^\prime$ which satisfies $\mbI_\pind(\mby; r^*(\mbx)) = \mbI_\pind(\mby , r^\prime(\mbx))$ (information-equivalence) also satisfies
\[\mbI_\pind(\mby; r^*(\mbx) \g \mbz, r'(\mbx)) = 0 \implies \mby \indep_\pind r^*(\mbx) \g \mbz, r'(\mbx).\]
Under this condition,  \cref{eq:perf-imp-rep} implies
\[ \E_{\pte(\mbx)}\KL{\pte(\mby \g \mbx)}{\pind(\mby\g r^*(\mbx))} \geq \E_{\pte(\mbx)} \KL{\pte(\mby \g \mbx)}{\pind(\mby \g r^\prime(\mbx))}.\]
However, as $r^\prime\in \cR(\pind)$ and that $r^*(\mbx)$ is maximally blocking, which (by the proof above) implies
\[ \E_{\pte(\mbx)}\KL{\pte(\mby \g \mbx)}{\pind(\mby\g r^*(\mbx))} \leq \E_{\pte(\mbx)} \KL{\pte(\mby \g \mbx)}{\pind(\mby \g r^\prime(\mbx))}.\]
The only way both these conditions hold is if
\begin{align*}
    \E_{\pte(\mbx)}\KL{\pte(\mby \g \mbx)}{\pind(\mby\g r^*(\mbx))} = \E_{\pte(\mbx)} \KL{\pte(\mby \g \mbx)}{\pind(\mby \g r^\prime(\mbx))}.
\end{align*}
This completes the proof that for any $r^\prime \in\cR(\pind)$ that is information-equivalent to $r^*(\mbx)$ under $\pind$, the model $\pind(\mby\g r^\prime(\mbx) )$ has the same performance as $\pind(\mby\g r^*(\mbx))$ for every $\pte \in \cF$, and consequently, $r^\prime$ is also optimal. 

\end{proof}

\subsection{Minimax optimality}\label{appsec:minimax-optimality}

\setcounter{prop}{0}
\propminimax{} 
\begin{proof} (of~\cref{prop:minimax-opt})
    By \cref{lemma:multiplication}, as $\pd(\mby, \mbz, \mbx)>0\Leftrightarrow \pdp(\mby,\mbz, \mbx)>0$, performance is well defined for any $\pd(\mby\g \mbx)$ on any $\pdp\in \cF$.
    First,~\cref{thm:indep-conditional-rx} with $\pte = \pdp$ and $r(\mbx) = \mbx$ gives 
     \begin{align}
        \begin{split}
            \mbI_\pdp(\mbx;\mby)-  \E_{\pdp(\mbx)}  & \KL{\pdp(\mby \g \mbx)}{\pind(\mby \g \mbx)} 
            \\
                & = \E_{\pdp(\mbx)}\KL{\pdp(\mby \g \mbx)}{\pind(\mby)}-  \E_{\pdp(\mbx)}  \KL{\pdp(\mby \g \mbx)}{\pind(\mby \g \mbx)}
            \\
            & = \E_{\pdp(\mby, \mbz)} \KL{ p(\mbx \g \mby, \mbz) }{\E_{p(\mby)} p( \mbx \g \mby, \mbz)}.
        \\ 
                    & =
                        \E_{\pdp(\mby, \mbz)} \KL{\pind(\mbx  \g \mby, \mbz)}{\pind(\mbx \g \mbz)} 
                        \geq 0.
        \end{split}
    \end{align}
    Thus,  unlike any $\pd\in \cF$ such that $\mby \nindep_\pd \mbz$,
    \begin{align}\label{eq:indep-cond-sup}
        \max_{\pdp \in \cF} \left[\E_{\pdp(\mbx)}\KL{\pdp(\mby\g \mbx) }{\pind(\mby \g \mbx)} - \mbI_\pdp(\mbx; \mby)\right] \leq 0.
    \end{align}
    For any $\pd$ such that $\mby \nindep_\pd \mbz$, let $\pdp$ be such that $\E_{\pdp(\mbx)}\KL{\pdp(\mby\g \mbx) }{\pd(\mby \g \mbx)} - \mbI_\pdp(\mbx; \mby) > 0$.
    As~\cref{eq:indep-cond-sup} implies $\E_{\pdp(\mbx)}\KL{\pdp(\mby\g \mbx) }{\pind(\mby \g \mbx)} - \mbI_\pdp(\mbx; \mby) \leq 0$,
    it follows that $\forall \pd$ such that $\mby \nindep_\pd \mbz$,
    \[\max_{\pdp\in \cF} \E_{\pdp(\mbx)}\KL{\pdp(\mby\g \mbx) }{\pd(\mby \g \mbx)}  > \max_{\pdp\in \cF} \E_{\pdp(\mbx)}\KL{\pdp(\mby\g \mbx) }{\pind(\mby \g \mbx)}.\]
    By~\cref{lemma:multiplication}, any $\pd\in \cF$ is positive over the same set of $\mby, \mbz, \mbx$ and if $\mby\indep_\pd \mbz$, then $\pd(\mby \g \mbx) = \pind(\mby \g \mbx)$ (see~\cref{lemma:cond-match} for proof with instantiation $r(\mbx)=\mbx$). 
    This means
    \[\pind(\mby\g \mbx) = \argmin_{\pd\in \cF}\max_{\pdp\in \cF} \E_{\pdp(\mbx)}\KL{\pdp(\mby\g \mbx) }{\pd(\mby \g \mbx)}.\]
    
    \end{proof}

	See \cref{prop:gauss-minimax} for an example nuisance-varying family where the information criterion in \cref{eq:arbitrary} holds.

\setcounter{prop}{2}
\newcommand{\propgaussminimax}{
    \begin{prop}\label{prop:gauss-minimax}
        Consider the following family of distributions $q_a$ indexed by $a \in \mathbb{R}$,
        \begin{align*}
            \epsilon_y, & \epsilon_{z} \sim \cN(0,1)  \quad 
            \mby \sim \cN(0,1) \qquad \mbz \sim \cN(a\mby, \nicefrac{1}{2}) \qquad \mbx = [ \mby + \epsilon_y, \mbz + \sqrt{\nicefrac{1}{2}}\epsilon_{z}]
        \end{align*}
        In this family,
        for any $\pd = q_b(\mby\g \mbz)$ where $\mby \nindep_\pd \mbz$,  there exists a $\pdp = q_a(\mby \g \mbz)$  such that
        \[\left[\E_{\pdp(\mbx)}\KL{\pdp(\mby\g \mbx) }{\pd(\mby \g \mbx)} - \mbI_\pdp(\mbx; \mby) \right] > 0.\]
    \end{prop}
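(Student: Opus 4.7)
The plan is to exploit the linear-Gaussian structure of the family: under each $q_a$, the triple $(\mby, \mbx_1, \mbx_2)$ is jointly Gaussian, so both $q_a(\mby\g\mbx)$ and $q_b(\mby\g\mbx)$ are Gaussian with closed-form parameters, and the target quantity becomes an explicit function of $a$ and $b$ that I can analyze asymptotically in $a$. First I would rewrite $\mbx_2 = a\mby + \eta$ with $\eta := \xi + \sqrt{1/2}\,\epsilon_z \sim \cN(0,1)$ independent of $(\mby,\epsilon_y)$, and then solve the two-dimensional linear regression of $\mby$ on $(\mbx_1,\mbx_2)$ to obtain
\begin{equation*}
q_a(\mby\g\mbx) \;=\; \cN\!\left(\tfrac{\mbx_1 + a\mbx_2}{2+a^2},\ \tfrac{1}{2+a^2}\right).
\end{equation*}

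Next, I would use that $q_a(\mby) = q_b(\mby) = \cN(0,1)$ for every index to rewrite the target:
\begin{equation*}
\E_{q_a(\mbx)}\KL{q_a(\mby\g\mbx)}{q_b(\mby\g\mbx)} - \mbI_{q_a}(\mbx;\mby) \;=\; -\E_{q_a(\mby,\mbx)}\log q_b(\mby\g\mbx) - \mbH_{q_a}(\mby).
\end{equation*}
This converts the quantity of interest into the expected negative log-likelihood of the Gaussian $q_b(\mby\g\mbx)$ on $q_a$-samples, minus a constant, which is tractable.

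Then I would compute the expected squared error of $\mu_b(\mbx) = (\mbx_1 + b\mbx_2)/(2+b^2)$ under $q_a$. Substituting $\mbx_1 = \mby + \epsilon_y$, $\mbx_2 = a\mby + \eta$ gives $\mby - \mu_b(\mbx) = \bigl((1+b^2-ab)\mby - \epsilon_y - b\eta\bigr)/(2+b^2)$, so by independence of $\mby,\epsilon_y,\eta$,
\begin{equation*}
\E_{q_a}\!\bigl[(\mby - \mu_b(\mbx))^2\bigr] \;=\; \frac{(1+b^2-ab)^2 + 1 + b^2}{(2+b^2)^2}.
\end{equation*}
Plugging this into the Gaussian density of $q_b(\mby\g\mbx)$ and simplifying reduces the target inequality to showing positivity of
\begin{equation*}
D(a,b) \;=\; \frac{(1+b^2-ab)^2 + 1 + b^2}{2(2+b^2)} \;-\; \frac{1}{2} \;-\; \frac{1}{2}\log(2+b^2).
\end{equation*}

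Finally, for any fixed $b\neq 0$ the leading term of $D(a,b)$ in $a$ is $a^2 b^2 / (2(2+b^2))$, which is quadratic with strictly positive coefficient, while the remaining $a$-dependence is linear and the $b$-dependent constant and logarithm are bounded. Hence $D(a,b) \to +\infty$ as $|a|\to\infty$, so taking $\pdp = q_a$ with $|a|$ sufficiently large produces the desired strict inequality. The main obstacle is not the sign analysis — which is immediate once the quadratic form is in hand — but keeping the Gaussian bookkeeping clean through the regression computation and the identity above; these steps are mechanical but must be executed without error.
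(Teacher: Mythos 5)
Your proposal is correct and follows essentially the same route as the paper's proof: the same Gaussian regression giving $q_a(\mby\g\mbx)=\cN\bigl((\mbx_1+a\mbx_2)/(2+a^2),\,1/(2+a^2)\bigr)$, the same reduction of the objective to $\E_{q_a}[\log q(\mby)-\log q_b(\mby\g\mbx)]$, and the same quadratic form $\bigl((1+b^2-ab)^2-1\bigr)/(2(2+b^2))-\tfrac12\log(2+b^2)$. The only (immaterial) difference is the last step: you let $|a|\to\infty$ and use that the $a^2b^2$ term dominates for $b\neq 0$, whereas the paper exhibits an explicit choice $a=b+(1+\nu)/b$ with $|\nu|$ large enough.
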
}

\subsection{Distillation details and a local optima example for~\cref{eq:lagrange}}\label{appsec:distillation}
\begin{figure}[ht]
    \centering    
    \includegraphics[width=0.5\textwidth]{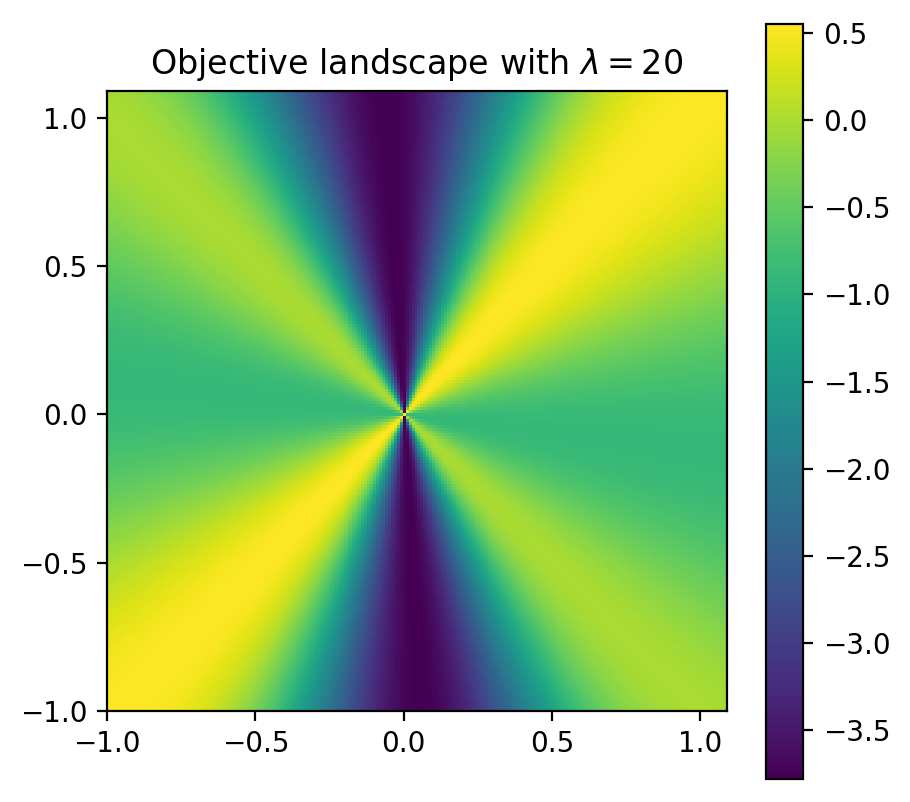}
    \caption{Landscape of the objective in~\cref{eq:lagrange} for the example in~\cref{eq:explaining-away} for linear representations $\ruv(\mbx) = u\mbx_1 + v\mbx_2$. Local maxima correspond to representations $r_{-u,u}$ and global maxima to representations $r_{u,u}$.}
    \label{fig:local-min} 
\end{figure}
The objective in \cref{eq:lagrange} can have local optima when the representation is a function of the nuisance and the exogenous noise in the generation of the covariates given the nuisance and the label. 
Formally, let the exogenous noise $\mbepsilon$ satisfy $(\mbepsilon, \mbz) \indep_\pind \mby$. Then,
\[(\mbepsilon, \mbz) \indep_\pind \mby \implies (f(\mbepsilon, \mbz), \mbz) \indep_\pind \mby \implies \mbz\indep_\pind \mby \g f(\mbz, \mbepsilon).\]
Such a representation $r(\mbx) = f(\mbepsilon, \mbz)$ is
both in the uncorrelating set and 
independent of the label $\mby$ under the \nrd $\pind$ meaning it does not predict the label.

\paragraph{Local optima example for conditional information regularization  ~\cref{eq:lagrange}.}
\Cref{fig:local-min} plots the value of the objective in~\cref{eq:lagrange} computed analytically for $\lambda=20$, over the class of linear representations indexed by $u,v\in \mathbf{R}$, $\ruv(\mbx) = u\mbx_1 + v\mbx_2$, under the data generating process in~\cref{eq:explaining-away}.
Representations of the kind $r_{-u, u}(\mbx) = u(\mbx_2 - \mbx_1)$ are functions of $\mbz$ and some noise independent of the label and, as~\cref{fig:local-min} shows, are local maxima on the landscape of the maximization objective in~\cref{eq:lagrange}.
Global maxima correspond to representations $r_{u,u}$.

\paragraph{Performance characterization for jointly independent representations}
\begin{lemma}\label{lemma:joint-indep-decomp}
    Let $\cF$ be a nuisance varying family. For any jointly independent representation $r$, i.e. $[r(\mbx), \mby] \indep_\pind \mbz$,
        \[ \forall \pte \in \cF\qquad \perfte(\pind(\mby \g r(\mbx)) ) = C_\pte +  \mbI_\pind(r(\mbx); \mby),
\]
where $C_\pte$ is a $\pte$-dependent constant that does not vary with $r(\mbx)$.

\gls{nurd} maximizes the information term $\mbI_\pind(\mby; r_\gamma(\mbx))$ and, therefore, maximizes performance on every member of $\cF$ simultaneously.
It follows that within the set of jointly independent representations, \gls{nurd}, at optimality, produces a representation that is simultaneously optimal on every $\pte\in \cF$.
\end{lemma}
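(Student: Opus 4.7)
The plan is to invoke \cref{thm:indep-conditional-rx}, which already gives a clean expression for $\perfte(\pind(\mby \g r(\mbx)))$ whenever $r \in \cR(\pind)$, and then to show that the joint independence assumption collapses the residual KL term into a mutual information. The first observation is that joint independence $[r(\mbx), \mby] \indep_\pind \mbz$ immediately implies both (i) $\mby \indep_\pind \mbz \g r(\mbx)$, so $r \in \cR(\pind)$, and (ii) the marginal independence $r(\mbx) \indep_\pind \mbz$, and also (iii) the by-label independence $r(\mbx) \indep_\pind \mbz \g \mby$. These three facts will let me simplify every conditional of $r(\mbx)$ that appears below.

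Applying \cref{thm:indep-conditional-rx} (using that $r\in \cR(\pind)$ from (i)) yields
\begin{align*}
    \perfte(\pind(\mby \g r(\mbx))) = \perfte(p(\mby)) + \E_{\pte(\mby, \mbz)} \KL{\pind(r(\mbx) \g \mby, \mbz)}{\pind(r(\mbx) \g \mbz)}.
\end{align*}
Now (iii) gives $\pind(r(\mbx) \g \mby, \mbz) = \pind(r(\mbx) \g \mby)$, and (ii) gives $\pind(r(\mbx) \g \mbz) = \pind(r(\mbx))$. So the integrand in the KL no longer depends on $\mbz$, and the outer expectation over $\pte(\mbz \g \mby)$ drops out, leaving $\E_{\pte(\mby)} \KL{\pind(r(\mbx) \g \mby)}{\pind(r(\mbx))}$.

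The last step uses the assumption of the nuisance-varying family (\cref{eq:data-gen}): $\pte(\mby) = p(\mby) = \pind(\mby)$, so
\begin{align*}
    \E_{\pte(\mby)} \KL{\pind(r(\mbx) \g \mby)}{\pind(r(\mbx))} = \E_{\pind(\mby)} \KL{\pind(r(\mbx) \g \mby)}{\pind(r(\mbx))} = \mbI_\pind(r(\mbx); \mby).
\end{align*}
Setting $C_\pte := \perfte(p(\mby))$, which depends only on $\pte$, completes the identity. There is no real obstacle here; the only subtlety is to use the correct form of independence at the right step (in particular, $r(\mbx) \indep_\pind \mbz \g \mby$ rather than merely $r(\mbx) \indep_\pind \mbz$, since the inner conditional is on both $\mby$ and $\mbz$), and to invoke the family assumption $\pte(\mby) = \pind(\mby)$ so that the outer expectation matches the distribution under which mutual information is defined.
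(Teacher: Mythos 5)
Your proof is correct and follows essentially the same route as the paper: invoke \cref{thm:indep-conditional-rx}, use the conditional independence $r(\mbx)\indep_\pind \mbz \g \mby$ (and marginal independence) implied by joint independence to strip $\mbz$ out of the KL term, and then use $\pte(\mby)=\pind(\mby)$ to identify the result with $\mbI_\pind(r(\mbx);\mby)$. The only cosmetic difference is that the paper reaches $\pind(r(\mbx))$ by marginalizing $\E_{\pind(\mby)}\pind(r(\mbx)\g\mby)$ rather than by explicitly invoking $r(\mbx)\indep_\pind\mbz$; both are valid.
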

\begin{proof}
\Cref{thm:indep-conditional-rx} says that for any uncorrelating representation $r\in \cR(\pind)$ and $\forall \pte \in \cF$,
\[ \perfte(\pind(\mby \g r(\mbx)) ) = \perfte(p(\mby) ) +    \mathop{\E}_{\pte(\mby, \mbz)}  \KL{ p( r(\mbx) \g \mby, \mbz) }{\E_{p(\mby)} p( r(\mbx) \g \mby, \mbz)}.
\]
However, as the joint independence $[r(\mbx), \mby] \indep_\pind \mbz$ implies both the uncorrelating property and $r(\mbx)\indep_\pind \mbz \g \mby$, the second term in the RHS above can be expressed as $\mbI_\pind(r(\mbx); \mby)$: 
\begin{align*}
    \mathop{\E}_{\pte(\mby, \mbz)} & \KL{ p( r(\mbx) \g \mby, \mbz) }{\E_{p(\mby)} p( r(\mbx) \g \mby, \mbz)}
    \\
         & =
                \mathop{\E}_{\pte(\mby, \mbz)} \KL{ \pind( r(\mbx) \g \mby, \mbz) }{\E_{\pind(\mby)} \pind( r(\mbx) \g \mby, \mbz)}
        \\ & =
                \mathop{\E}_{\pte(\mby, \mbz)} \KL{ \pind( r(\mbx) \g \mby) }{\E_{\pind(\mby)} \pind( r(\mbx) \g \mby)}
        \\ & =
                \mathop{\E}_{\pte(\mby, \mbz)} \KL{ \pind( r(\mbx) \g \mby) }{\pind(r(\mbx))}
        \\ & =
                \mathop{\E}_{\pte(\mby)} \KL{ \pind( r(\mbx) \g \mby) }{\pind(r(\mbx))}
        \\ & =
                \mathop{\E}_{\pind(\mby)} \KL{ \pind( r(\mbx) \g \mby) }{\pind(r(\mbx))}
        \\ & =
                \mbI_\pind(r(\mbx); \mby).
\end{align*}
Noting $C_\pte =  \perfte(p(\mby) ) $ does not vary with $r(\mbx)$ completes the proof.
\end{proof}

\paragraph{Performance gaps between jointly independent representations and uncorrelating representations.}
The joint independence $[r(\mbx), \mby] \indep_\pind \mbz$ implies the uncorrelating property but uncorrelating representations only satisfy this joint independence when they are independent of the nuisance.
Thus, representations that satisfy joint independence form a subset of uncorrelating representations.
This begs a question: is there a loss in performance by restricting \gls{nurd} to representations that satisfy said joint independence?
In \cref{appsec:gap-theory}, we use the theory of minimal sufficient statistics \citep{lehmann2006theory} to show that there exists a \nvf{} where the best uncorrelating representation dominates every representation that satisfies joint independence on every member distribution, and is strictly better in at least one.

\subsection{Counterfactual invariance vs. the uncorrelating property}\label{appsec:gap}
We A) show that counterfactually invariant representations are a subset of uncorrelating representations by reducing counterfactual invariance to the joint independence $[r(\mbx), \mby]\indep_\pind \mbz$ and B) give an example \nvf{} $\cF$ where the best uncorrelating representation strictly dominates every jointly independent representation in performance on every test distribution $\pte\in \cF$: at least as good on all $\pte \in \cF$ and strictly better on at least one.

We show A by proving counterfactually invariant representations satisfy joint independence $[r(\mbx), \mby] \indep_\pind \mbz$ which implies the uncorrelating property, but not vice versa.
Counterfactual invariance implies that for all $\pd\in \cF$, the conditional independence $r(\mbx) \indep_\pd \mbz\g \mby$ holds by theorem 3.2 in~\cite{veitch2021counterfactual}.
As $\mby\indep_\pind \mbz$, it follows that $ [r(\mbx), \mby] \indep_\pind \mbz$; this joint independence implies the uncorrelating property, $\mby \indep_\pind \mbz \g r(\mbx)$.
But, uncorrelating representations only satisfy the said joint independence when they are independent of the nuisance.

We show B in~\cref{appsec:gap-theory} by constructing a \nvf{} where the optimal performance is achieved by an uncorrelating representation that is \textit{dependent} on the nuisance.

\subsubsection{Joint independence vs. the uncorrelating property}\label{appsec:gap-theory}
Here, we discuss the performance gap between representations that are uncorrelating ($\mby\indep_\pind \mbz\g r(\mbx)$) and those that satisfy the joint independence $(\mby, r(\mbx))\indep_\pind \mbz$.
We construct a data generating process where optimal performance on every member of $\cF$ is achieved only by uncorrelating representations that do not satisfy joint independence.

\begin{thm}
    \label{thm:gap-example}
    Define a nuisance-varying family $\cF = \{\pd(\mby, \mbz, \mbx) =  p(\mby) \pd(\mbz\g \mby) p(\mbx \g \mby, \mbz)\}$.
    Let $\cR_J = \{r(\mbx) ; [r(\mbx), \mby] \indep_\pind \mbz\}$ and $\cR_C = \{r(\mbx); \mby\indep_\pind \mbz \g r(\mbx)\}$ be the set of representations that, under the \nrd{}, satisfy joint independence and conditional independence respectively.
    Then there exists a nuisance-varying family $\cF$ such that 
    \begin{align}
           \forall \pte \in \cF \qquad \max_{r\in \cR_J}\perfte(r(\mbx)) \leq \max_{r\in \cR_C}\perfte(r(\mbx)),
    \end{align}
    and $\exists \pte \in \cF$ for which the inequality is strict
    \begin{align}
            \max_{r\in \cR_J}\perfte(r(\mbx)) < \max_{r\in \cR_C}\perfte(r(\mbx)),
    \end{align}
    \end{thm}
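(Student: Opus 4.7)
My plan is to first dispense with the non-strict inequality by observing the general set containment $\cR_J \subseteq \cR_C$. If $(r(\mbx), \mby) \indep_\pind \mbz$, then the joint factors as $\pind(r, \mby, \mbz) = \pind(r, \mby)\pind(\mbz)$, from which $\pind(\mbz \g r, \mby) = \pind(\mbz \g r)$ and hence $\mby \indep_\pind \mbz \g r(\mbx)$. This containment holds in every \nvf{}, so $\sup_{r \in \cR_J} \perfte(r(\mbx)) \leq \sup_{r \in \cR_C} \perfte(r(\mbx))$ automatically for all $\pte \in \cF$.

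For the strict part I would exhibit a simple discrete family. Take $\mby, \mbz \in \{0,1\}$, $p(\mby) = \tfrac{1}{2}$, parametrize $\cF$ by varying $\pd(\mbz \g \mby)$, and fix the emission $p(\mbx \g \mby, \mbz)$ by $\mbx = (\mbx_1, \mbx_2)$ with $\mbx_2 = \mbz$ deterministic and $\mbx_1 = \mby$ with probability $1 - \eta_\mbz$, where $\eta_0 = 0$ and $\eta_1 \in (0, 1/2)$. Let $\pind$ be the member with $\pind(\mbz \g \mby) = \pind(\mbz)$ putting positive mass on each value.

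The steps then are: (i) verify $\mbx \in \cR_C$, since $\mbx_2$ pins down $\mbz$ and so $\mby \indep_\pind \mbz \g \mbx$ trivially; (ii) apply \cref{thm:indep-conditional-rx} to obtain $\perfte(\pind(\mby \g \mbx)) > \perfte(p(\mby))$ for every $\pte \in \cF$, since the KL residual there is strictly positive whenever $p(\mbx \g \mby, \mbz)$ depends on $\mby$ at some $\mbz$ in the support of $\pte$; (iii) characterize $\cR_J$ by a case analysis on functions $r : \{0,1\}^2 \to V$. Writing $r_{ab} := r(a, b)$, the joint-independence constraint $(r(\mbx), \mby) \indep_\pind \mbz$ reduces to
\begin{equation*}
\mathbf{1}[r_{y0} = v] \;=\; (1-\eta_1)\,\mathbf{1}[r_{y1} = v] + \eta_1\,\mathbf{1}[r_{(1-y)1} = v] \quad \text{for all } y \in \{0,1\},\, v \in V.
\end{equation*}
Because the right-hand side takes values in $\{0, \eta_1, 1-\eta_1, 1\}$ while the left is $\{0,1\}$-valued, equality forces both sides to be $0$ or both to be $1$; picking successively $v = r_{00}$ and then $v = r_{10}$ forces all four values of $r$ to coincide, so $\cR_J$ contains only constant representations. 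Finally, (iv) constants achieve $\perfte(r) = \perfte(p(\mby))$ while $\mbx \in \cR_C$ strictly beats this on every $\pte \in \cF$, yielding the required witness.

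The main obstacle is step (iii): ensuring that non-trivial $r \in \cR_J$ are ruled out for arbitrary range $V$. The degenerate choice $\eta_0 = 0$ (one nuisance value making the emission deterministic) is what makes the indicator equation collapse to constancy; with both $\eta$'s in the interior of $(0,1/2)$ there could be richer jointly-independent representations, but for the present construction the finite indicator argument suffices. The remaining pieces — computing $\pind(\mby \g \mbx)$ by Bayes, and turning the residual KL of \cref{thm:indep-conditional-rx} into a strictly positive gap — are routine.
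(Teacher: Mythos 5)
Your proof is correct, and it reaches the conclusion by a genuinely different route from the paper's. The paper also constructs a binary-$\mby$ family in which one coordinate of $\mbx$ reveals a function of $\mbz$, but its strictness argument goes through minimal sufficient statistics: it shows that any representation matching the performance of $\mbx$ on $\pind$ must render $\mby\indep_\pind \mbx\g T(\mbx)$, invokes a sufficiency lemma (\cref{lemma:min-suff}) to conclude that every such $T(\mbx)$ determines $f(\mbx)=\pind(\mby=1\g\mbx)$, and then picks emission probabilities making $f(\mbx)\nindep_\pind\mbz$, so that no member of $\cR_J$ (all of whose members satisfy $r(\mbx)\indep_\pind\mbz$) can be optimal. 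You instead engineer the emission so that $\cR_J$ itself collapses: the degenerate choice $\eta_0=0$ turns the joint-independence constraint into your indicator identity, whose only solutions are constant representations, after which the strict gap is immediate from the positive KL residual in \cref{thm:indep-conditional-rx}. Your route is more elementary (no sufficiency theory) and actually yields a stronger conclusion --- the inequality is strict for \emph{every} $\pte\in\cF$, not just one --- whereas the paper's argument is more robust: it never needs an exact characterization of $\cR_J$, only that no jointly independent representation can be optimal, so it survives perturbing the emission probabilities away from the degenerate corner on which your indicator argument depends. One small point worth making explicit in a final write-up: the maximum over $\cR_C$ is attained by $\mbx$ itself (it is maximally blocking here since $\mbx$ determines $r(\mbx)$ for every $r$), so the comparison of maxima, not just suprema, is legitimate.
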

    \begin{proof} 
    In this proof we will build a \nvf{} $\cF$ such that $\pind \in \cF$ and $\mby\indep_\pind \mbz \g \mbx$.
    This makes $\mbx$ a maximally blocking uncorrelating representation because $r(\mbx)\indep_\pind \mby \g \mbx, \mbz$.
    Thus it has optimal performance on every $\pte \in \cF$ within the class of uncorrelating representations.
    We let $\mby$ be binary, $\pind \in \cF$.
    The structure of the rest of the proof is as follows:
    \begin{enumerate}
        \item The representation $f(\mbx) = \pind(\mby = 1 \g \mbx)$ is optimal in that it performs exactly as well as $\mbx$ on every member of the family $\cF$.
        \item Any representation $T(\mbx)$ that matches the performance of $\mbx$ on every $\pte\in \cF$ satisfies $\mby \indep_\pind \mbx \g T(\mbx)$.
        \item All functions $T(\mbx)$ such that $\mby \indep_\pind \mbx \g T(\mbx)$ determine $f(\mbx)$.
        This is shown in \cref{lemma:min-suff}.
        \item We construct a family where $f(\mbx)\nindep_\pind \mbz$ which, by the point above, means that every optimal representation $T(\mbx)$ is dependent on $\mbz$:  $T(\mbx)\nindep_\pind \mbz$.
        But every representation $r \in \cR_J$ satisfies $r(\mbx) \indep_\pind \mbz$ and, therefore, is strictly worse in performance than $f(\mbx)$ on $\pind$, meaning that they perform also strictly worse than $\mbx$ (because $\perf_\pte(f(\mbx)) = \perf_\pte(\mbx)$).
        Noting $\mbx\in \cR_C$ completes the proof.
    \end{enumerate}

    For 1, let $f(\mbx) = \pind(\mby =1\mid \mbx)$.
    However, we show here that $\pind(\mby \g f(\mbx)) = \pind(\mby\g \mbx)$.
    \begin{align*}
        \pind(\mby=1 \g f(\mbx)) & = \E_{\pind(\mbx\g f(\mbx))} \pind(\mby = 1 \g \mbx, f(\mbx))
        \\
            &  = \E_{\pind(\mbx\g f(\mbx))}  \pind(\mby=1 \g \mbx) 
        \\
            & = \E_{\pind(\mbx\g f(\mbx))}  f(\mbx)
        \\
            & = f(\mbx)
        \\
            & = \pind(\mby = 1 \g \mbx) \quad ( = \pind(\mby\g \mbx, f(\mbx)))
    \end{align*}
    This means $f(\mbx)$ performs exactly as well as $\mbx$ on every $\pte \in \cF$ and $\mbx \indep_\pind \mby \g f(\mbx)$.

    For 2, recall $\perfte(r(\mbx)) = -\E_{\pte(\mbx)}\KL{\pte(\mby\g \mbx)}{\pind(\mby \g r(\mbx))}$
    and note that 1 implies 
    $$\perf_\pind(\mbx) = \perf_\pind(f(\mbx)) = 0.$$
    Let $T(\mbx)$ be any function that performs as well as $f(\mbx)$ on every $\pte \in \cF$.
    As $\pind \in \cF$,
    \begin{align*}
        0 = \perf_\pind(f(\mbx)) & = \perf_\pind(T(\mbx)) 
        \\
        & = - \E_{\pind(\mbx)}\KL{\pind(\mby\g \mbx)}{\pind(\mby\g T(\mbx)} 
        \\
        & =  - \E_{\pind(\mbx)}\KL{\pind(\mby\g \mbx, T(\mbx) )}{\pind(\mby\g T(\mbx)}\
        \\
        & = - \mbI_\pind(\mby; \mbx \g T(\mbx)) 
        \\
        &\implies \mby\indep_\pind \mbx \g T(\mbx).
    \end{align*}

    We leave 3 to \cref{lemma:min-suff} and show 4 here.
    \paragraph{The example data generating process.}
    We give a data generating process where $f(\mbx) = p(\mby = 1 \g \mbx)$ is dependent on $\mbz$ :  $f(\mbx)\nindep_\pind \mbz$.
    We assume $\pind \in \cF$.
    With a binary $\mby$ and a normal $\mbz$, let $\pind(\mby, \mbz, \mbx) = p(\mby)p(\mbz)p(\mbx\mid \mby, \mbz)$ be generated as follows: with  $\rho : \{0,1\}\times \{0, 1\} \rightarrow (0,1)$, let
    \[ p(\mby= 1) = 0.5, \quad \mbz\sim \cN(0,1), \quad p ( \mbb = 1  \mid \mby = y, \mbz = z) = \rho(y, \ind[z\geq 0]), \quad \mbx = [\mbb, \ind[\mbz\geq 0]].\]
    We will drop the subscript in $\indep_p$ for readability next.
    Throughout the next part, we use a key property of independence: $[a,b]\indep c \Longleftrightarrow b\indep c \g a, a\indep c$.

    As $\mbz$ is a standard normal random variable $\ind[\mbz\geq 0] \indep |\mbz| \g \mby $, meaning we can write $(\mby, \ind[z\geq 0]) \indep |\mbz| $ because $\mbz$ is generated independently of $\mby$.
    Thus, as the distribution of $\mbb$ only depends on $\ind[z\geq 0]$ and $\mby$ due to the data generating process, it holds that $(\mbb, \mby, \ind[\mbz\geq 0]) \indep |\mbz|$.
    Then
    \[ (\mbb, \mby, \ind[\mbz\geq 0]) \indep |\mbz|\implies  \mby \indep |\mbz| \g \mbb, \ind[\mbz\geq 0] \implies \mby \indep \mbz \g \mbb, \ind[\mbz\geq 0] \implies  \mby \indep \mbz \g \mbx\]

    As $\mbx$ only depends on $\ind[\mbz\geq 0]$ and $\mbb$, for readability, we define $\mba=\ind[\mbz\geq 0]$.
    Then $\pind(\mby, \mba, \mbb) = \pind(\mby)\pind(\mba) p(\mbb \g \mby, \mba)$, where $p(\mbb= 1 \g \mby=y, \mba=a) = \rho(y, a)$ and $\mbx = [\mbb, \mba]$. 

    We overload the notation for $f$: expanding $\vx = [b,a]$, we let $f(\vx) = f(b, a) = p(\mby=1 \mid \mbx = [b,a])$.
    We write $f(b, a)$ for different values of $b$ here,
    \begin{align*}
        f(1, a) & = p(\mby = 1 \mid \mbx = [1,a]) = p(\mby = 1 \mid \mbb = 1, \mba=a)
                \\
                    & = \frac{p(\mbb = 1, \mby=1 \g \mba=a)}{p(\mbb =1 \g \mba=a)}
                \\
                    & = \frac{p(\mby=1) p(\mbb = 1\g  \mby=1, \mba=a)}{\sumybin p(\mby=y) p(\mbb =1 \g  \mby=y, \mba=a) }
                \\
                    & =  \frac{0.5 p(\mbb =1 \mid \mby = 1, \mba=a)}{0.5 \sumybin \left(p(\mbb =1\mid \mby = y, \mba=a)\right)}
                \\ 
                    & = \frac{\rho(1,a)}{\rho(0,a) +  \rho(1,a)}.
        \\
        f(0, a) & = p(\mby = 1 \mid \mbx =[0,a]) =  p(\mby = 1 \mid \mbb = 0, \mba=a)
                \\
                    & = \frac{p(\mbb = 0, \mby=1 \g \mba=a)}{p(\mbb=0 \g \mba=a)}
                \\
                & = \frac{p(\mby=1) p(\mbb =0 \g  \mby=1, \mba=a)}{\sumybin p(\mby=y) p(\mbb =0 \g  \mby=y, \mba=a) }
                \\
                  & =  \frac{0.5 (1 - p(\mbb = 1 \mid \mby = 1, \mba=a))}{0.5 \left(\sumybin 1-  p(\mbb = 1\mid \mby = y, \mba=a)\right)}
                \\ 
                  & = \frac{1 - \rho(1,a)}{2   - \rho(0,a) -  \rho(1,a)}.
    \end{align*}

    We let $\rho(y, 1) = 0.5$ for $y\in \{0,1\}$, $\rho(0,0) = 0.1$, and $\rho(1,0)=0.9$.
    Then, with $a=1$,
    \begin{align}
        f( 1, a) & =  \frac{\rho(1,1)}{\rho(0,1) +  \rho(1,1)} = \frac{0.5}{0.5 + 0.5} = 0.5,
        \\
        f(0, a) &= \frac{1 - \rho(1,1)}{2 - \rho(0,1) - \rho(1,a)} = \frac{1 - 0.5}{2 - 0.5 - 0.5} = 0.5,
    \end{align}
    and with $a=0$,
    \begin{align}
        f(1, a) & =  \frac{\rho(1,0)}{\rho(0,0) +  \rho(1,0)} = \frac{0.9}{0.1 + 0.9} = 0.9,
        \\
        f(0, a) & =  \frac{1 - \rho(1,0)}{2 - \rho(0,0) -  \rho(1,0)} = \frac{1 - 0.9}{2 - 0.1 - 0.9} = 0.1.
    \end{align}
    Thus, the distribution $f(\mbx) \g \mba=a$ changes with $a$ meaning that $f(\mbx) \nindep \mba$ which implies $f(\mbx) \nindep \mbz$ as $\mba = \ind[\mbz \geq a]$ is a function of $\mbz$.

    Note that $f(\mbx) \nindep \mbz$, then $f(\mbx)\not\in \cR_J$ as $f(\mbx) \indep \mbz$ is an implication of joint independence.
    Any function $T(\mbx)$  that achieves the same performance as $\pind(\mby\g f(\mbx))$ (by 3 and \cref{lemma:min-suff}), determines $f(\mbx)$.
    It follows that, $T(\mbx)\not\in \cR_J$ because
    \[f(\mbx)\nindep_\pind \mbz \implies T(\mbx)\nindep_\pind \mbz.\]
    So every $r \in \cR_J$ must perform worse than $f(\mbx)$, and consequently $\mbx$, on $\pind$.
    Finally, the independence
     $\mbx \nindep_\pind \mbz$ implies $\mbx\not \in \cR_J$ but $\mby \indep_\pind \mbz\g \mbx$ and so $\mbx \in \cR_C$.
    In this example we constructed, $\mbx$ is the maximally blocking uncorrelating representation which means that it is optimal in $\cR_C$ on every $\pte\in \cF$.
    As $\cR_J$ is a subset, any $r\in \cR_J$ can at best match the performance of $\mbx$ and we already showed that every $r\in \cR_J$ is worse than $f(\mbx)$ and consequently $\mbx$ on $\pind.$
    This completes the proof.
    
    \end{proof}

    \begin{lemma}
        \label{lemma:min-suff}
        Consider a joint distribution $p(\mby, \mbx)$ with binary $\mby$. Assume that $p(\mbx \mid \mby = y)$ has the same support for $y\in \{0,1\}$.
        Then, for any function $T(\mbx)$ such that $\mby\indep \mbx \mid T(\mbx)$, the function $f(\mbx) = p(\mby = 1 \mid \mbx)$ is $T(\mbx)$-measurable ($T(\mbx)$ determines $f(\mbx)$).
    \end{lemma}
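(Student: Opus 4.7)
The plan is to show directly that the conditional independence hypothesis forces $f(\mbx)$ to depend on $\mbx$ only through $T(\mbx)$, which is exactly what $T(\mbx)$-measurability means. The proof is essentially a one-line computation, with the support assumption doing nothing more than ensuring that all conditional probabilities in the argument are well-defined.

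First I would observe that because $T$ is a deterministic function of $\mbx$, conditioning on $\mbx$ already contains the information in $T(\mbx)$; formally, for every $\mbx$ in the support of $p$,
\[
    p(\mby = 1 \mid \mbx) \;=\; p(\mby = 1 \mid \mbx, T(\mbx)).
\]
Next, I would invoke the hypothesis $\mby \indep \mbx \mid T(\mbx)$, which by definition says $p(\mby \mid \mbx, T(\mbx)) = p(\mby \mid T(\mbx))$. Combining these two equalities gives
\[
    f(\mbx) \;=\; p(\mby = 1 \mid \mbx) \;=\; p(\mby = 1 \mid T(\mbx)),
\]
and the right-hand side is by construction a function of $T(\mbx)$, so $f$ is $T(\mbx)$-measurable.

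The role of the equal-support assumption is purely to guarantee that the conditional probabilities above are well-defined and unambiguous wherever we need them: if $p(\mbx \mid \mby = 0)$ and $p(\mbx \mid \mby = 1)$ share a common support, then $p(\mbx) > 0$ on this common support and both $p(\mby = 0 \mid \mbx)$ and $p(\mby = 1 \mid \mbx)$ lie in $(0, 1)$, so $f(\mbx)$ is unambiguous and the chain of equalities above holds everywhere relevant (rather than merely almost-everywhere, which would require a null-set caveat). I do not anticipate any genuine obstacle; the only subtlety worth flagging is the measure-theoretic one of ensuring that the identification of $p(\mby=1\mid \mbx)$ with $p(\mby=1\mid \mbx, T(\mbx))$ is valid, and the support hypothesis is precisely what rules out the degenerate cases that would otherwise require care.
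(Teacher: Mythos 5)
Your proof is correct, and it takes a genuinely different and more elementary route than the paper. You argue directly from the definition of conditional independence: since $T(\mbx)$ is a function of $\mbx$, $p(\mby=1\g \mbx)=p(\mby=1\g \mbx, T(\mbx))$, and the hypothesis $\mby\indep \mbx \g T(\mbx)$ collapses the right-hand side to $p(\mby=1\g T(\mbx))$, which is $T(\mbx)$-measurable by construction. The paper instead reframes the conditional independence as sufficiency of $T(\mbx)$ for the two-element family $\{p(\mbx\g \mby=y)\}_{y\in\{0,1\}}$, invokes Theorem 6.12 of Lehmann and Casella to exhibit the likelihood ratio $p_1(\mbx)/p_0(\mbx)$ as a \emph{minimal} sufficient statistic, and observes that this ratio is a bijective transform of $f(\mbx)=p(\mby=1\g \mbx)$ (which is where the equal-support assumption is actually used, to keep $f(\mbx)\in(0,1)$ so the transform is bijective); minimality then says every sufficient $T$ determines $f$. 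Your argument reaches the same conclusion with no external theorem and is, if anything, cleaner for the lemma as stated; what the paper's detour buys is the explicit identification of $f(\mbx)$ as a minimal sufficient statistic, which ties into the surrounding discussion of minimal sufficiency in the comparison with counterfactual invariance. The only caveat worth keeping in mind is that both arguments establish the identity $f(\mbx)=p(\mby=1\g T(\mbx))$ only almost surely with respect to $p(\mbx)$, so ``$T(\mbx)$ determines $f(\mbx)$'' should be read modulo null sets --- a point the paper also glosses over, and which your remark on the support assumption already acknowledges.
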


    \begin{proof}
    We use the notion of sufficient statistics from estimation theory, which are defined for a family of distributions, to define the set of functions $T(\mbx)$ for the joint distribution $p(\mby, \mbx)$.
    
    \paragraph{Sufficient statistics in estimation theory.}
    
    Consider a family of distributions $\cP = \{p_\theta(\mbx); \theta \in \Omega\}$. Assume $\theta$ is discrete and that $\Omega$ is finite. 
    A function $T(\mbx)$ is a sufficient statistic of a family of distributions $\cP$ if the conditional distribution $p_\theta( \mbx \mid T(\mbx) = t)$ does not vary with $\theta$ for (almost) any value of $t$.
    A minimal sufficient statistic is a sufficient statistic $M(x)$ such that  for any sufficient statistic $T(X)$
    $
    T(x) = T(\xp) \implies M(x) = M(\xp)
    $.
    Any bijective transform of $M(\mbx)$ is also a minimal sufficient statistic.

    The rest of the proof will follow from relying on theorem 6.12 from \cite{lehmann2006theory} which constructs a minimal sufficient statistic for a finite family of distributions $\cP= \{p_i; i\in \{0, K-1\}\}$ as
    $$
    M(\mbx) = \left\{\frac{p_1(\mbx)}{p_0(\mbx)}, \frac{p_2(\mbx)}{p_0(\mbx)}, \cdots, \frac{p_{K-1}(\mbx)}{p_0(\mbx)}\right\}
    $$
    
    \paragraph{Defining the family with conditionals.}
    
    Now let the family $\cP = \{p_y(\mbx)); y\in \{0, 1\}\}$ where $p_y(\mbx) = p(\mbx \mid \mby = y)$ which are conditionals of the joint distribution $p(\mbx,\mby)$ we were given in the theorem statement.
    Next, we show that the set of functions $T(\mbx)$ such that $\mby\indep_p \mbx \g T(\mbx)$ is exactly the set of the sufficient statistics for the family $\{p(\mbx\g \mby=\vy) ; \vy \in \{0,1\}\}$.

    By definition of sufficiency where $p_y(\mbx\mid T(\mbx) =t )$ does not vary with $y$ for any value of $t$, 
    $$
    \forall t, \quad p_1(\mbx\mid T(\mbx) =t ) = p_0(\mbx \mid T(\mbx) = t) 
    \\  \Longleftrightarrow 
    \\ \forall t, \quad p(\mbx \mid T(\mbx) = t, \mby=1) = p(\mbx \mid T(\mbx) = t, \mby=0),
    $$
    where the last statement is equivalent to the conditional independence $\mbx \indep \mby \mid T(\mbx)$.
    
    \paragraph{Minimality of $p(\mby= 1 \mid \mbx)$.}
    
    By definition  $p_y(\mbx) = p(\mbx \mid \mby = y)$. 
    As this family contains only two elements, the minimal sufficient statistic is
    $$
    M(\mbx) = \frac{p_1(\mbx)}{p_0(\mbx)} = \frac{p(\mbx \mid \mby = 1)}{p(\mbx\mid \mby = 0)} = \frac{p(\mby=0)}{p(\mby=1)} \frac{p(\mby = 1\mid \mbx)}{1 - p(\mby = 1 \mid \mbx)}.
    $$
    Thus, $M(\mbx)$ is a bijective transformation of the function $p(\mby = 1 \mid \mbx)$ (when $p(\mby = 1 \mid \mbx)\in (0,1)$) which in turn implies that $p(\mby = 1 \mid \mbx)$ is a minimal sufficient statistic for the family $\cP$.

    \paragraph{Conclusion.}
    
    We showed that the set of functions that satisfy $\mby \indep \mbx \mid T(\mbx)$ are sufficient statistics for the family $\cP$.
    In turn, because only sufficient statistics $T(\mbx)$ of the family $\cP$ satisfy  $\mby\indep_p \mbx \g T(\mbx)$, it follows by definition that $p(\mby=1\g \mbx)$ is determined by every $T(\mbx)$, completing the proof.
    
    \end{proof}

\subsection{Gaussian example of the information criterion}\label{appsec:info-criterion}
\propgaussminimax{}

\begin{proof}(of~\cref{prop:gauss-minimax})
First, write $\mbz = a\mby + \sqrt{\nicefrac{1}{2}}\delta$ where $\delta \sim \cN(0,1)$.
Let $\epsilon = \sqrt{  \nicefrac{1}{2}  }\left(\delta + \epsilon_z\right)$; 
this is a normal variable with mean 0 and variance 1.
Then, write $\mbx = [\mby + \epsilon_y, a\mby + \epsilon]$ where $\epsilon_y, \epsilon$ are Gaussian random variables with joint distribution $q(\epsilon_y) q(\epsilon)$.
Therefore, $q_a(\mby, \mbx)$ is a multivariate Gaussian distribution, with the following covariance matrix (over $\mby, \mbx_1, \mbx_2$):
\begin{align*}
    \Sigma = \begin{pmatrix}
        1 & 1 & a\\
        1 & 2 & a\\
        a & a & a^2 + 1
        \end{pmatrix} 
       \qquad  \implies \Sigma_{1,2} = [1,a], \quad \Sigma_{2,2,}^{-1} = \frac{1}{a^2 + 2} \begin{pmatrix}
        a^2 + 1 & -a\\
        -a & 2
        \end{pmatrix},
\end{align*}
The conditional mean and variance are:
\begin{align*}
    \E_{q_a}[\mby \g \mbx=\vx] & = \Sigma_{1,2}\Sigma_{2,2}^{-1}\vx = \frac{1}{a^2 + 2}  [1, a]\vx
\\
    \varqa(\mby\g \mbx) & = \Sigma_{1,1} - \Sigma_{1,2}\Sigma_{2,2}^{-1}\Sigma_{2,1} = 1 - \frac{a^2 + 1}{a^2 + 2} = \frac{1}{a^2 + 2}.
\end{align*}
Rewrite the quantity in the theorem statement as a single expression:
\begin{align}\label{eq:expand-KL}
    \begin{split}
        \E_{\qa(\mbx)}& \KL{\qa(\mby\g \mbx) }{\qb(\mby \g \mbx)} - \mbI_\qa(\mbx; \mby)
        \\
            & = \E_{\qa(\mbx)}\KL{\qa(\mby\g \mbx) }{\qb(\mby \g \mbx)} - \E_{\qa(\mbx)}\KL{\qa(\mby\g \mbx)}{q(\mby)}.
        \\
            & = \E_{\qa(\mbx, \mby)}\log\frac{\qa(\mby\g \mbx) }{\qb(\mby \g \mbx)} - \E_{\qa(\mbx, \mby)}\log \frac{\qa(\mby\g \mbx)}{q(\mby)}.
        \\
            & = \E_{\qa(\mbx, \mby)}\left(\log{q(\mby)} - \log{\qb(\mby \g \mbx)}\right).
    \end{split}
\end{align}
Expand $\left(\log{q(\mby)} - \log{\qb(\mby \g \mbx)}\right)$ in terms of quantities that vary with $\mby, \mbx$ and those that do not:
\begin{align*}
    \log{q(\mby=\vy)} - \log{\qb(\mby =\vy \g \mbx)} 
        & = -\frac{\vy^2}{2} - \log \sqrt{2\pi}  + \frac{\left(\vy - \meanqb[\mby\g \mbx]\right)^2}{2\varqb(\mby\g \mbx)} +  \log \sqrt{2\pi \varqb(\mby\g \mbx)}
    \\
        & = -\frac{\vy^2}{2} - \log \sqrt{2\pi}  + (b^2 + 2)\frac{\left(\vy - \frac{1}{b^2 + 2}  [1, b]\mbx\right)^2}{2} +  \log \sqrt{\frac{2\pi}{b^2 + 2}}
    \\
        & = -\frac{\vy^2}{2} + (b^2 + 2)\frac{\left(\vy - \frac{1}{b^2 + 2}  [1, b]\mbx\right)^2}{2} +  \log \sqrt{\frac{1}{b^2 + 2}}
\end{align*}
As only the first two terms vary with $\mby, \mbx$, compute the expectations $\E_{\qa}$ over these:
\begin{align*}
    \E_{\qa(\mbx) \qa(\mby\g \mbx)} \left(-\frac{\mby^2}{2} + (b^2 + 2)\frac{\left(\mby - \frac{1}{b^2 + 2}  [1, b]\mbx\right)^2}{2}\right)
    & 
    = \E_{q(\mby)} \left( -\frac{\mby^2}{2} \right) + (b^2 + 2)\E_{q(\mby) \qa(\mbx\g \mby)} \frac{\left(\mby - \frac{1}{b^2 + 2}  [1, b]\mbx\right)^2}{2}
\\
    & = -\frac{1}{2} + (b^2 + 2)\E_{q(\mby) \qa(\mbx\g \mby)} \frac{\left((b^2 + 2)\mby -[1, b]\mbx\right)^2}{2(b^2 + 2)^2}
\\
    & = -\frac{1}{2} + \E_{q(\mby) \qa(\mbx\g \mby)} \frac{\left((b^2 + 2)\mby -[1, b]\mbx\right)^2}{2(b^2 + 2)}
\\
    & = -\frac{1}{2} + \E_{q(\mby) q(\epsilon_y)q(\epsilon)} \frac{\left((b^2 + 2)\mby - \mby - \epsilon_y - ab \mby - b\epsilon \right)^2}{2(b^2 + 2)}
\\
    & = -\frac{1}{2} + \E_{q(\mby) q(\epsilon_y)q(\epsilon)} \frac{\left((b^2 + 1 - ab)\mby  - \epsilon_y - b\epsilon \right)^2}{2(b^2 + 2)}
\\
    & = -\frac{1}{2} + \frac{\var\left((b^2 + 1 - ab)\mby\right) + \var(\epsilon_y) + \var(b\epsilon)}{2(b^2 + 2)}
\\
    & = -\frac{1}{2} + \frac{(b^2 + 1 - ab)^2\var\left(\mby\right) + \var(\epsilon_y) + b^2\var(\epsilon)}{2(b^2 + 2)}
\\
    & = -\frac{1}{2} + \frac{(b^2 + 1 - ab)^2 + 1 + b^2}{2(b^2 + 2)}
\\
    & = \frac{(b^2 + 1 - ab)^2 -1}{2(b^2 + 2)}
\end{align*}
The proof follows for any $a$ such that
\begin{align*}
    \frac{(b^2 + 1 - ab)^2 -1}{2(b^2 + 2)} + \log \sqrt{\nicefrac{1}{b^2 + 2}}   = \frac{(b^2 + 1 - ab)^2 -1}{2(b^2 + 2)} - \frac{1}{2}\log \left(b^2 + 2\right) >0
\end{align*}
Let $a = b + \frac{1 + \nu}{b} $ for some scalar $\nu$. Then, if $|\nu| > 1 + (b^2 + 2)\log (b^2 + 2)$,
    \begin{align*}
        \frac{(b^2 + 1 - ab)^2 -1}{2(b^2 + 2)} - \frac{1}{2}\log \left(b^2 + 2\right)  =  \frac{\nu^2 -1}{2(b^2 + 2)} - \frac{1}{2}\log \left(b^2 + 2\right)  > 0.
    \end{align*}
\end{proof}

\subsection{Example where $\ptr(\mby\g \mbx),\pind(\mby \g \mbx)$ perform worse than $p(\mby)$ under $\pte$}\label{appsec:failure-modes}

In this section, we motivate nuisance-randomization and the uncorrelating property.
Consider the following data generating process for a family  $\{q_a\}_{a \in \mathbb{R}}$ and fixed positive scalar $\variance$:
\begin{align}
    \mby \sim \cN(0,1) \quad \mbz \sim \cN(a\mby, 0.5) \quad \mbx = \left[\mbx_1 \sim \cN(\mby - \mbz, \variance - 0.5) , \mbx_2 \sim \cN(\mby + \mbz, 0.5)\right].
\end{align}
Letting $\variance=2$ recovers the example in \cref{eq:explaining-away}.
We keep $\sigma^2$ for ease of readability.
We first derive the performance of $p(\mby) = \qb(\mby) = q(\mby)$ relative to $q_b(\mby\g \mbx)$ under $q_a$.

\paragraph{Performance of $q(\mby)$ relative to $\qb(\mby\g \mbx)$ under $\qa$}
    Rewrite $\mbx =  \left[ (1 -a )*\mby + \sqrt{\variance} \epsilon_1, (1 + a)*\mby + \epsilon_2)\right]$, where $\epsilon_1, \epsilon_2\sim \cN(0,1)$.
    Let the joint distribution over $\mby, \epsilon_1, \epsilon_2$ be $q(\mby)q(\epsilon_1)q(\epsilon_2)$.
    Then, $q_a(\mby, \mbx)$ is a multivariate Gaussian distribution
     with the following covariance matrix (over $\mby, \mbx_1, \mbx_2$):
    \begin{align*}
        & \Sigma = \begin{pmatrix}
            1 & (1-a) & (1 +a)\\
            (1-a) & (1 - a)^2 + \variance & (1 - a^2)\\
            (1 +a) & (1 - a^2) & (1 +a)^2 + 1
        \end{pmatrix},
            \\
         &   \implies \Sigma_{1,2} = [1-a, 1 + a], \quad \Sigma_{2,2,}^{-1} = \frac{1}{\variance(1 +a)^2  + (1-a)^2 + \variance} \begin{pmatrix}
            (1 +a)^2 + 1 & -(1-a^2)\\
            -(1 - a^2) & (1 -a)^2 + \variance
            \end{pmatrix},
        \\
          &  \implies 
            \E_{q_a}[\mby \g \mbx=\vx] = \Sigma_{1,2}\Sigma_{2,2}^{-1}\vx = \frac{1}{\variance\aprime^2  + \aminus^2 + \variance}  [\aminus, \variance\aprime]\vx
        \\
        & \implies \varqa(\mby\g \mbx) = \Sigma_{1,1} - \Sigma_{1,2}\Sigma_{2,2}^{-1}\Sigma_{2,1} = 1 - \frac{\variance\aprime^2 + \aminus^2}{\variance\aprime^2  + \aminus^2 + \variance} = \frac{\variance}{\variance\aprime^2  + \aminus^2 + \variance}.
    \end{align*}
    The performance of $q(\mby)$ (recall $\perf$ is negative $\kld$) relative to $\qb(\mby\g \mbx)$ on $q_a(\mby, \mbx)$ can be written as
    \begin{align*}
       \E_{\qa(\mbx)} \KL{\qa(\mby\g \mbx) }{\qb(\mby \g \mbx)} - \E_{\qa(\mbx)} \KL{\qa(\mby\g \mbx) }{q(\mby)} 
         = \E_{\qa(\mbx, \mby)}\left(\log{q(\mby)} - \log{\qb(\mby \g \mbx)}\right).
    \end{align*}
    Expand $\left(\log{q(\mby)} - \log{\qb(\mby \g \mbx)}\right)$ in terms that vary with $\mby, \mbx$ and those that do not:
    \begin{align*}
        \log& {q(\mby=\vy)} -\log{\qb(\mby =\vy \g \mbx)} 
        \\
            & = -\frac{\vy^2}{2} - \log \sqrt{2\pi}  + \frac{\left(\vy - \meanqb[\mby\g \mbx]\right)^2}{2\varqb(\mby\g \mbx)} +  \log \sqrt{2\pi \varqb(\mby\g \mbx)}
        \\
            & = -\frac{\vy^2}{2} - \log \sqrt{2\pi}  + (\variance \bprime^2  + \bminus^2 + \variance)\frac{\left(\vy - \frac{ [\bminus, \variance \bprime]\mbx}{\variance \bprime^2  + \bminus^2 + \variance}\right)^2}{2\variance} 
                 + \log \sqrt{2\pi \varqb(\mby\g \mbx)}
        \\
            & = -\frac{\vy^2}{2}  + (\variance \bprime^2  + \bminus^2 + \variance)\frac{\left(\vy - \frac{ [\bminus, \variance \bprime]\mbx}{\variance \bprime^2  + \bminus^2 + \variance} \right)^2}{2\variance} 
             + \log \sqrt{\varqb(\mby\g \mbx)}
    \end{align*}
    As $\varqb(\mby\g \mbx)$ does not vary with $\mbx, \mby$, we will compute the expectation over the first two terms:
    \allowdisplaybreaks
    \begin{align*}
        & \E_{\qa(\mbx, \mby)} \left(\log{q(\mby)} - \log{\qb(\mby \g \mbx)}\right)
    \\
        & 
        = \E_{\qa(\mbx) \qa(\mby\g \mbx)} \left(-\frac{\mby^2}{2} + (\variance \bprime^2  + \bminus^2 + \variance)\frac{\left(\mby - \frac{1}{\variance \bprime^2  + \bminus^2 + \variance}  [\bminus, \variance \bprime]\mbx\right)^2}{2\variance}\right)
  \\
        & = \E_{q(\mby)} \left( -\frac{\mby^2}{2} \right) + (\variance \bprime^2  + \bminus^2 + \variance)\E_{q(\mby) \qa(\mbx\g \mby)} \frac{\left(\mby - \frac{1}{\variance \bprime^2  + \bminus^2 + \variance}  [\bminus, \variance \bprime]\mbx\right)^2}{2\variance}
    \\
        & = -\frac{1}{2} + (\variance \bprime^2  + \bminus^2 + \variance)\E_{q(\mby) \qa(\mbx\g \mby)} \frac{\left((\variance \bprime^2  + \bminus^2 + \variance)\mby - [\bminus, \variance \bprime]\mbx\right)^2}{2\variance(\variance \bprime^2  + \bminus^2 + \variance)^2}
    \\
        & = -\frac{1}{2} + \E_{q(\mby) \qa(\mbx\g \mby)} \frac{\left((\variance \bprime^2  + \bminus^2 + \variance)\mby - [\bminus, \variance \bprime]\mbx\right)^2}{2\variance(\variance \bprime^2  + \bminus^2 + \variance)}
    \\
            & \qquad \qquad \left(\text{recall } \mbx = [(1 - a)\mby + \sqrt{\variance} \epsilon_1, (1 + a) \mby  + \epsilon_2]\right)
      \\
        & = -\frac{1}{2} + 
        \\
         & \E_{q(\mby) q(\epsilon_y)q(\epsilon)} \frac{\left((\variance \bprime^2  + \bminus^2 + \variance)\mby  -  \bminus \sqrt{\variance}\epsilon_1  - \aminus \bminus \mby - \variance\aprime \bprime \mby - \variance  \bprime \epsilon_2 \right)^2}{2\variance(\variance \bprime^2  + \bminus^2 + \variance)}
    \\
        & = -\frac{1}{2} + \E_{q(\mby) q(\epsilon_y)q(\epsilon)} \frac{\left(\left(\variance\bprime^2 + \bminus^2 - \variance (a + b + ab) - (1-a)(1-b) \right)\mby  - \bminus \sqrt{\variance} \epsilon_1 - \variance \bprime
         \epsilon_2 \right)^2}{2\variance(\variance \bprime^2  + \bminus^2 + \variance)}
    \\
        & \qquad \qquad \{\text{let } C(a,b) = \left(\variance\bprime^2 + \bminus^2 - \variance (a + b + ab) - (1-a)(1-b) \right) \}
    \\
        & = -\frac{1}{2} + \frac{\var\left(C(a,b)\mby\right) + \var(\bminus \sqrt{\variance}\epsilon_1) + \var(\variance \bprime \epsilon_2)}{2\variance(\variance \bprime^2  + \bminus^2 + \variance)}
    \\
        & = -\frac{1}{2} + \frac{\left(C(a,b) \right)^2\var\left(\mby\right) + \variance\bminus^2\var(\epsilon_1) + (\variance)^2 \bprime^2\var(\epsilon_2)}{2\variance(\variance \bprime^2  + \bminus^2 + \variance)}
    \\
        & = -\frac{1}{2} + \frac{\left(C(a,b)\right)^2+ \variance\bminus^2 + (\variance)^2\bprime^2}{2\variance(\variance \bprime^2  + \bminus^2 + \variance)}
    \\
        & = \frac{C(a,b)^2- (\variance) ^2}{2\variance(\variance \bprime^2  + \bminus^2 + \variance)}
    \end{align*}
    
     Note that $ \log \sqrt{\varqb(\mby\g \mbx)} - \log \sqrt{\frac{2}{\variance \bprime^2  + \bminus^2 + \variance}}  
   = \frac{1}{2}\log \frac{\left(\variance \bprime^2  + \bminus^2 + \variance\right)}{\variance}$.
   With this, we have the ability to bound the performance of $q(\mby)$ relative to $q_b(\mby\g \mbx)$ under $q_a$.

\paragraph{The conditional $q_1(\mby \g \mbx)$ performs worse than $q_1(\mby) = p(\mby)$ on $q_{-1}$.}
    We show here that $\ptr(\mby\g \mbx) = q_1(\mby\g \mbx)$ does worse than $q_1(\mby) = p(\mby)$ on $\pte = q_{-1}$.
    Letting $a = -1\implies \aprime =0, \aminus=2$ and $b=1 \implies \bprime = 2, \bminus=0$ then
        \begin{align*}
        C(a,b) & = \variance\bprime^2 + \bminus^2 - \variance (a + b + ab) - (1-a)(1-b) = \variance *2^2 +\variance = 5\variance.
        \\
         & \implies \frac{C(a,b)^2- (\variance)^2}{2 \sigma^2 (\variance \bprime^2  + \bminus^2 + \variance)}  - \frac{1}{2}\log \frac{\left(\variance \bprime^2  + \bminus^2 + \variance\right)}{\variance}  
        \\
            & \qquad =  \frac{25 (\variance)^2  - (\variance)^2}{2 * \variance (5 * \variance )}  - \frac{1}{2}\log \frac{5 *  \variance}{\variance}
        \\
            & \qquad = \frac{12}{5}  - \frac{1}{2}\log 5 > 0.
        \end{align*}

\paragraph{Predictive failure when $\mbx$ is not uncorrelating}

Note that $q_0(\mby \g \mbx) = \pind(\mby \g \mbx)$ corresponds to the nuisance randomized conditional of the label given the covariates.
We show that $q_0(\mby\g \mbx)$ performs worse than $p(\mby)$ on infinitely many $q_a$.
Letting $b=0$ in $C(a,b)$ helps compute the performance of $q(\mby)$ over $q_0(\mby\g \mbx)$ for every $\qa$:
    \[C(a,b) = \variance\bprime^2 + \bminus^2 - \variance (a + b + ab) - (1-a)(1-b) = \variance + 1 - a \variance -  (1 - a) = \variance + a(1 - \variance).\]
Then, the performance of $q(\mby)$ relative to $\pind(\mby\g \mbx)$ is
        \begin{align*}
            & \frac{C(a,b)^2- (\variance)^2}{2\variance(\variance \bprime^2  + \bminus^2 + \variance)}  - \frac{1}{2}\log \frac{\left(\variance \bprime^2  + \bminus^2 + \variance\right)}{\variance}  
        \\
            & \qquad =  \frac{\left(a^2 (1 - \variance)^2 + (\variance)^2 + 2 \variance ( 1 - \variance) a \right)-(\variance)^2}{2*\variance (2*\variance + 1)}  - \frac{1}{2}\log \frac{\left(2* \variance+ 1\right)}{2}
        \\
            & \qquad \qquad \{\text{letting } \variance = 2.\}
        \\
            & \qquad = \frac{a^2 - 4a}{20}  - \frac{1}{2}\log \frac{5}{2}.
        \end{align*}
This performance difference between $q(\mby)$ and $\pind(\mby\g \mbx)$ is positive for any $a > 5$ or $a < -1$.
Thus, for every test distribution $q_a$ such that $a>5$ or $a < -1$, $\pind(\mby \g \mbx)$ performs worse than marginal prediction.

\section{Further experimental details}\label{appsec:exps}

\paragraph{Implementation details}
In~\cref{sec:exps}, the label $\mby$ is a binary variable and, consequently, we use the Bernoulli likelihood in the predictive model and the weight model.
In reweighting-\gls{nurd} in practice, the estimate of the \nrd{} $\hatpind(\mby, \mbz, \mbx) \propto \nicefrac{\ptr(\mby)}{\hatptr(\mby\g \mbz)} \ptr(\mby, \mbz, \mbx)$ with an estimated $\hatptr(\mby\g \mbz)$ may have a different marginal distribution $\hatpind(\mby) \not=\ptr(\mby)$.
To ensure that $\ptr(\mby) = \hatpind(\mby)$, we weight our preliminary estimate $\hatpind$ again as $\frac{\ptr(\mby)}{\hatpind(\mby)}\hatpind(\mby, \mbz, \mbx)$.

In all the experiments, the distribution $p_\theta(\mby \g r_\gamma(\mbx))$ is a Bernoulli distribution parameterized by $r_\gamma$ and a scaling parameter $\theta$.
In general, when the family of $\pind(\mby\g r_\gamma(\mbx))$ is unknown, learning predictive models requires a parameterization $p_\theta(\mby\g r_\gamma(\mbx))$.
When the family is known, for example when $\mby$ is categorical, the parameters $\theta$ are not needed because the distribution
$p(\mby\g r_\gamma(\mbx))$ can be parameterized by the representation itself.
For the critic model $p_\phi(\ell \g \mby, \mbz, r_\gamma(\mbx))$  
 in the distillation step, we use a two layer neural network with $16$ hidden units and ReLU activations that takes as input $\mby, r_\gamma(\mbx)$, and a scalar representation $s_\psi(\mbz)$; the critic model's parameters are $\phi, \psi$.
 The representation $s_\psi(\mbz)$ is different in the different experiments and we give these details below.

In generative-\gls{nurd}, we select models for $p(\mbx \g \mby, \mbz)$ by using the generative objective's value on a heldout subset of the training data.
For model selection, we use Gaussian likelihood in the class-conditional Gaussian experiment, binary likelihood in the colored-MNIST experiment, and squared-loss reconstruction error in the Waterbirds and chest X-ray experiments.
In reweighting-\gls{nurd}, we use a cross-fitting procedure where the training data is split into $K$ folds, and $K$ models are trained: for each fold, we produce weights using a model trained and validated on the other $K-1$ folds.
Hyperparameter selection for the distillation step is done using the distillation loss from \cref{eq:joint-indep-distillation-unconstrained} evaluated on a heldout validation subset of the nuisance-randomized data from the first step.

In all experiments, we report results with the distillation step optimized with a fixed $\lambda=1$ and with $1$ or $2$ epochs worth of critic model updates per every representation update.
In setting the hyperparameter $\lambda$, a practitioner should choose the largest $\lambda$ such that optimization is still stable for different seeds and the validation loss is bounded away from that of marginal prediction.
Next, we give details about each individual experiment.

\paragraph{Optimal linear uncorrelating representations in Class Conditional Gaussians.}
Here, we show that $r^*(\mbx) = \mbx_1  + \mbx_2$ is the best linear uncorrelating representation in terms of performance.
First let the Gaussian noises in the two coordinates of $\mbx$ (given $\mby, \mbz$) be $\mbeps_1 \sim \cN(0,9)$ and $\mbeps_2\sim \cN(0, 0.01)$ respectively.
Define $\ruv(\mbx) = u \mbx_1 + v \mbx_2 = (u + v) \mby + (v - u) \mbz + u\mbeps_1 + v \mbeps_2$.
We will show that $q_0(\mbz \g \ruv(\mbx), \mby=1)\not=q_0(\mbz \g \ruv(\mbx), \mby=0)$ when $u\not=v$ and $u\not=-v$. 
First, $q_0(\mbz, \ruv(\mbx) \g \mby=y)$ is a bivariate Gaussian with the following covariance matrix:
\begin{align*}
    \Sigma_y = \begin{pmatrix}
       1 & (v-u) \\
       (v-u) & (v - u)^2 + 9u^2 + 0.01v^2\\
   \end{pmatrix}
      \implies \Sigma_{y;1,2} =  v - u, \quad \Sigma_{y;2,2}^{-1} = \frac{1}{(v - u)^2 + 9u^2 + 0.01v^2}
\end{align*}
The conditional mean is:
\begin{align*}
    \E_{q_a}[\mbz \g \ruv(\mbx)=r, \mby] & = \E[\mbz \g \mby=1]  + \Sigma_{1,2}\Sigma_{2,2}^{-1}\left(r- \E[\ruv(\mbx) \g \mby=1]\right)
    \\  
        & = \E[\mbz]  + \Sigma_{1,2}\Sigma_{2,2}^{-1}\left(r - (u + v)\mby\right)
    \\
        & =\frac{(v - u)(r - (u +v)\mby)}{(v - u)^2 + 9u^2 + 0.01v^2}
\end{align*}
which is independent of $\mby$ if and only if $u + v = 0$ or $u - v = 0$.
The conditional variance does not change with $y$ because it is determined by $\Sigma_y$ which does not change with $y$.
Thus $q_0(\mbz \g \ruv(\mbx), \mby) = q_0(\mbz \g \ruv(\mbx))$ if and only if $u = v$ or $u = -v$.
When $u = v$, $\ruv = 2u \mby + \text{noise}$ and $r_{u,u} \nindep_{q_0} \mby$ meaning that $r_{u,u}$ helps predict $\mby$.
In contrast, when $u = -v$, $\ruv = 2v\mbz + \text{noise}$ and so $r_{-v,v}\indep_{q_0} \mby \implies q_0(\mby\g r_{-v,v}) = q_0(\mby)$, meaning that $q_0(\mby\g r_{-v,v}) $ has the same performance as the marginal.
However, for all $u\not=0$, $r_{u,u} = u r_{1,1}$ is a bijective transform of $r_{1,1}$ and, therefore, $q_0(\mby\g r_{u,u}(\mbx)) = q_0(\mby \g r_{1,1}(\mbx))$.
Thus, within the set of linear uncorrelating representations, $r_{1,1}$ is the best because its performance dominates all others on every $\pte\in \cF$.

\paragraph{Implementation details for Class Conditional Gaussians.}

In reweighting-\gls{nurd}, the model for $\ptr(\mby\g \mbz)$ is a Bernoulli distribution parameterized by a neural network with 1 hidden layer with $16$ units  and ReLU activations.
In generative-\gls{nurd}, the model for $p(\mbx \g \mby, \mbz)$ is an isotropic Gaussian whose mean and covariance are parameterized with a neural network with one layer with $16$ units and ReLU activations.
We use $5$ cross-fitting folds in estimating the weights in reweighting-\gls{nurd}.
We use weighted sampling with replacement in computing the distillation objective.

In the distillation step in both reweighting and generative-\gls{nurd}, the representation $r_\gamma(\mbx)$ is a neural network with one hidden layer with $16$ units and ReLU activations.
The critic model $p_\phi(\ell \g \mby, \mbz, r_\gamma(\mbx))$ consists of a neural network with $2$ hidden layers with $16$ units each and ReLU activations that takes as input $\mby, r_\gamma(\mbx)$, and a scalar representation $s_\psi(\mbz)$ which is again a neural network with a single hidden layer of $16$ units and ReLU activations.

We use cross entropy to train $\hatptr(\mby\g \mbz)$, $\hatpind(\mby \g r_\gamma(\mbx))$, and $p_\phi(\ell \g \mby, \mbz, r_\gamma(\mbx))$ using the Adam~\citep{kingma2014adam} optimizer with a learning rate of $10^{-2}$.
We optimized the model for $\hatptr(\mby\g \mbz)$ for $100$ epochs and the model for $\hatptr (\mbx\g \mby, \mbz)$ for $300$ epochs.
We ran the distillation step for $150$ epochs with the Adam optimizer with the default learning rate.
We use a batch size of $1000$ in both stages of \gls{nurd}.
We run the distillation step with a fixed $\lambda=1$ and two epoch's worth of gradient steps ($16$) for the critic model for each gradient step of the predictive model and the representation.
In this experiment, we do not re-initialize $\phi,\psi$ after a predictive model update.

\paragraph{Implementation details for Colored-MNIST.}

For reweighting-\gls{nurd}, to use the same architecture for the representation $r_\gamma(\mbx)$ and for $\ptr(\mby\g \mbz)$, we construct the nuisance as a $28\times 28$ image with each pixel being equal to the most intense pixel in the original image.
In generative-\gls{nurd}, we use a PixelCNN model for $p(\mbx\g \mby, \mbz)$ with $10$ masked convolutional layers each with $64$ filters.
The model was trained using a Bernoulli likelihood with the Adam optimizer and a fixed learning rate of $10^{-3}$ and batch size $128$.
We parameterize multiple models in this experiment with the following neural network: 4 convolutional layers (with $32,64,128,256$ channels respectively) with ReLU activations followed by a fully connected linear layer into a single unit.
Both $r_\gamma(\mbx), s_\psi(\mbz)$ are parameterized by this network.
Both $\hatptr(\mby\g \mbz)$ in reweighting-\gls{nurd} and $\hatptr(\mby\g \mbx)$ for \gls{erm} are Bernoulli distributions parameterized by the network described above.
    We use $5$ cross-fitting folds in estimating the weights in reweighting-\gls{nurd}.

For the critic model $p_\phi(\ell \g \mby, \mbz, r_\gamma(\mbx))$ in the distillation step, we use a two-hidden-layer neural network with $16$ hidden units and ReLU activations that takes as input $\mby, r_\gamma(\mbx)$, and the scalar representation $s_\psi(\mbz)$; the parameters $\phi$ contain $\psi$ and the parameters for the two hidden-layer neural network.
    The predictive model $p_\theta(\mby\g r_\gamma(\mbx))$ is a Bernoulli distribution parameterized by $r_\gamma(\mbx)$ multiplied by a scalar $\theta$.

    We use cross entropy to train $\hatptr(\mby\g \mbz)$, $\hatpind(\mby \g r_\gamma(\mbx))$, and $p_\phi(\ell \g \mby, \mbz, r_\gamma(\mbx))$ using the Adam~\citep{kingma2014adam} optimizer with a learning rate of $10^{-3}$.
    We optimized the model for $\hatptr(\mby\g \mbz)$ for $20$ epochs and ran the distillation step for $20$ epochs with the Adam optimizer with the default learning rate.
    We use a batch size of $300$ in both stages of \gls{nurd}.
    We run the distillation step with a fixed $\lambda=1$ and one epoch's worth of gradient steps ($14$) for the critic model for each gradient step of the predictive model and the representation.
In this experiment, we do not re-initialize $\phi,\psi$ after a predictive model update.

\paragraph{Implementation details for the Waterbirds experiment.}
For generative-\gls{nurd}, we use VQ-VAE 2~\citep{razavi2019generating} to model $\ptr(\mbx \g \mby, \mbz)$.
For multiple latent sizes and channels in the encoder and the decoder, we saw that the resulting generated images were insufficient to build classifiers that predict better than chance on real data.
This may be because of the small training dataset that consists of only $3000$ samples.
The model for $\hatpind(\mby \g r_\gamma (\mbx))$ is two feedforward layers stacked on top of the representation $r_\gamma(\mbx)$.
The model $\hatptr(\mby\g \mbz)$ in reweighting-\gls{nurd} is the same model as $\hatpind(\mby\g r_\gamma(\mbx))$ as a function of $\mbx.$ 
The model for $p_\phi(\ell \g \mby, \mbz, r_\gamma(\mbx))$ consists of a neural network with two feedforward layers that takes as input $\mby, r_\gamma(\mbx)$, and a representation $s_\psi(\mbz)$.
Both $r_\gamma$ and $s_\psi$ are Resnet-18 models initialized with weights pretrained on Imagenet; the parameters $\phi$ contain $\psi$ and the parameters for the two hidden-layer neural network.
The model in \gls{erm} for $\ptr(\mby\g \mbx)$ uses the same architecture as $\hatpind(\mby\g r_\gamma(\mbx))$ as a function of $\mbx$.
    We use $5$ cross-fitting folds in estimating the weights in reweighting-\gls{nurd}.

We use binary cross entropy as the loss in training $\hatptr(\mby\g \mbz)$, $p_\theta(\mby\g r_\gamma(\mbx))$, and $p_\phi(\ell \g \mby, \mbz, r_\gamma(\mbx)))$ using the Adam~\citep{kingma2014adam} optimizer with a learning rate of $10^{-3}$.
We optimized the model for $\hatptr(\mby\g \mbz)$ for $10$ epochs and ran the distillation step for $5$ epochs with the Adam optimizer with the default learning rate for all parameters except $\gamma$, which parameterizes the representation $r_\gamma$; for $\gamma$, we used $0.0005.$
The predictive model, the critic model, and the weight model are all optimized with a weight decay of $0.01$.
We use a batch size of $300$ for both stages of \gls{nurd}.
We run the distillation step with a fixed $\lambda=1$ and two epoch's worth of gradient steps ($16$) for the critic model for each gradient step of the predictive model and the representation.
To prevent the critic model from overfitting, we re-initialize $\phi,\psi$ after every gradient step of the predictive model.

\paragraph{Implementation details for the chest X-ray experiment.}
To help with generative modeling, when creating the dataset, we remove X-ray samples from MIMIC that had all white or all black borders.
We use a VQ-VAE2~\citep{razavi2019generating} to model $p(\mbx\g \mby, \mbz)$ using code from~\href{https://github.com/kamenbliznashki/generative_models}{here} to both train and sample.
The encoder takes the lung patch as input, and the decoder takes the quantized embeddings and the non-lung patch as input.
VQ-VAE2 is hierarchical with a top latent code and a bottom latent code which are both vector-quantized and fed into the decoder to reconstruct the image.
Both latents consist of $8\times 8$ embeddings each of dimension $64$.
The VQ-VAE is trained for $200$ epochs with Adam~\citep{kingma2014adam} with a batch size of $256$ and dropout rate of $0.1$.
Generating samples from the VQ-VAE2 involves sampling the top latent code conditioned on the label, followed by sampling the bottom latent code conditioned on the label and the top latent code, and passing both latent codes to the decoder.
To generate from the latent codes, we build a PixelSNAIL to generate the top latent code given the label and a PixelCNN to generate the bottom latent code given the label and the top latent code.
These models have $5$ residual layers with $128$ convolutional channels.
All other details were default as in ~\href{https://github.com/kamenbliznashki/generative_models}{here}.
We train these models for $450$ epochs with a batch size of $256$ with a learning rate of $5\times 10^{-5}$. 

For reweighting-\gls{nurd}, the model $\hatpind(\mby \g r_\gamma (\mbx))$ is two feedforward layers stacked on top of the representation $r_\gamma(\mbx)$.
The model in \gls{erm} for $\ptr(\mby\g \mbx)$ uses the same architecture as $\hatpind(\mby\g r_\gamma(\mbx))$ as a function of $\mbx$.
Next we use a single architecture to parameterize multiple parts in this experiment: 3 convolutional layers (each $64$ channels) each followed by batch norm, and dropout with a rate of $0.5$ and followed by a linear fully-connected layer into a single unit.
We parameterize the two representations $r_\gamma(\mbx), s_\psi(\mbz)$ with this network.
To build $p_\phi(\ell \g \mby, \mbz, r_\gamma(\mbx))$, we stack two feedforward layers of $16$ hidden units with ReLU activations on top of a concatenation of $\mby$, $r_\gamma(\mbx)$,and the scalar representation $s_\psi(\mbz)$ as described above;
    the parameters $\phi$ contain $\psi$ and the parameters for the two hidden-layer neural network.
    We use $5$ cross-fitting folds in estimating the weights in reweighting-\gls{nurd}.

        We use binary cross entropy as the loss in training $\hatptr(\mby\g \mbz)$, $p_\theta(\mby\g r_\gamma(\mbx))$, and $p_\phi(\ell \g \mby, \mbz, r_\gamma(\mbx)))$ using the Adam~\citep{kingma2014adam} optimizer with a learning rate of $10^{-3}$.
        We use a batch size of $1000$ for both stages of \gls{nurd}.
        We optimized the model for $\hatptr(\mby\g \mbz)$ for $150$ epochs and ran the distillation step for $100$ epochs with the Adam optimizer with the default learning rate.
        Only the optimization for $\hatptr(\mby\g \mbz)$ has a weight decay of $1e-2$.
        We run the distillation step with a fixed $\lambda=1$ and two epoch's worth of gradient steps ($20$) for the critic model for each gradient step of the predictive model and the representation.
        To prevent the critic model from overfitting, we re-initialize $\phi,\psi$ after every gradient step of the predictive model.

\subsection{Additional experiments}

\paragraph{Excluding the boundary from the images (covariates) does not improve \gls{erm} in general.}
In both Waterbirds and chest X-rays, we use the easy-to-acquire border as a nuisance in \gls{nurd}. 
Models trained on the central (non-border) regions of the image can exploit the nuisances in the center and consequently fail to generalize when the nuisance-label relationship changes.
In fact, classifiers produced by \gls{erm} on border-less images do not generalize well to the test data, producing test accuracies of $39 \pm 0.5 \%$ on chest X-rays and $65 \pm 2.3\%$ on Waterbirds averaged over 10 seeds.
However, as independence properties that hold for the border also hold for nuisances in the central region that are determined by the border,
\gls{nurd} can use the border to control for certain nuisances in the center of the image.

\paragraph{Additional experiments with \gls{nurd}.}
We evaluate reweighting-\gls{nurd} further in the following ways:
\begin{enumerate}
    \item Run \gls{nurd} on data from the training data distribution defined in \cref{sec:exps} and evaluate on data from test distributions $\pte$ with different nuisance-label relationships.
    \item Train \gls{nurd} with different-sized borders as nuisances.
    \item Train \gls{nurd} without a nuisance where the training and the test data have the same nuisance-label relationship; we implement this by setting the nuisance $\mbz=0$ wherever it is passed as input in the weight model or critic model.
    \item Run the distillation step with different $\lambda$.
\end{enumerate}

\paragraph{Different test distributions.}
For this experiment, we compute the test accuracies of the models trained in the experiments in \cref{sec:exps} on data with different nuisance-label relationships.
For both classifying Waterbirds and Pneumonia, a scalar parameter $\rho$ controls nuisance-label relationships in the data generating process.
In waterbirds, $\rho = p(\mby = waterbird \g \text{background = land}) = p(\mby = landbird \g \text{background = water})$.
In chest X-rays, $\rho$ corresponds to the fraction of Pneumonia cases that come from CheXpert and normal cases that come from MIMIC in the data; in this task, hospital differences are one source of nuisance-induced spurious correlations.
In both tasks, $\rho=0.1$ in the training data; as test $\rho$ increases, the nuisance-label relationship changes and becomes more different from the training data.
We plot the average and standard error of accuracies aggregated over $10$ seeds for different test $\rho \in \{0.5, 0.7, 0.9\}$ in \cref{fig:alt-corr}.

\begin{figure}[t]
    \begin{minipage}[b]{0.48\linewidth}
        \centering
        \includegraphics[width=\textwidth]{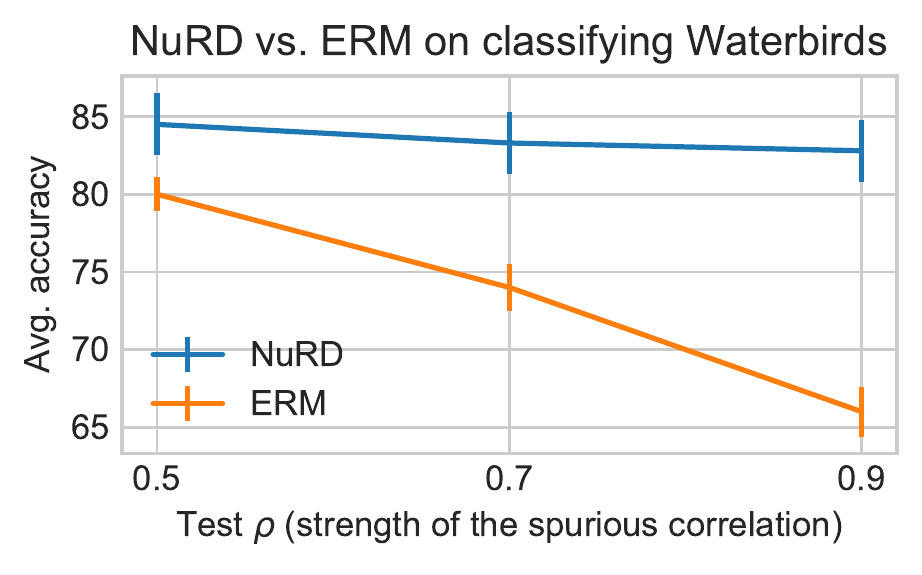}
  \end{minipage}
  \begin{minipage}[b]{0.48\linewidth}
    \centering
    \includegraphics[width=\textwidth]{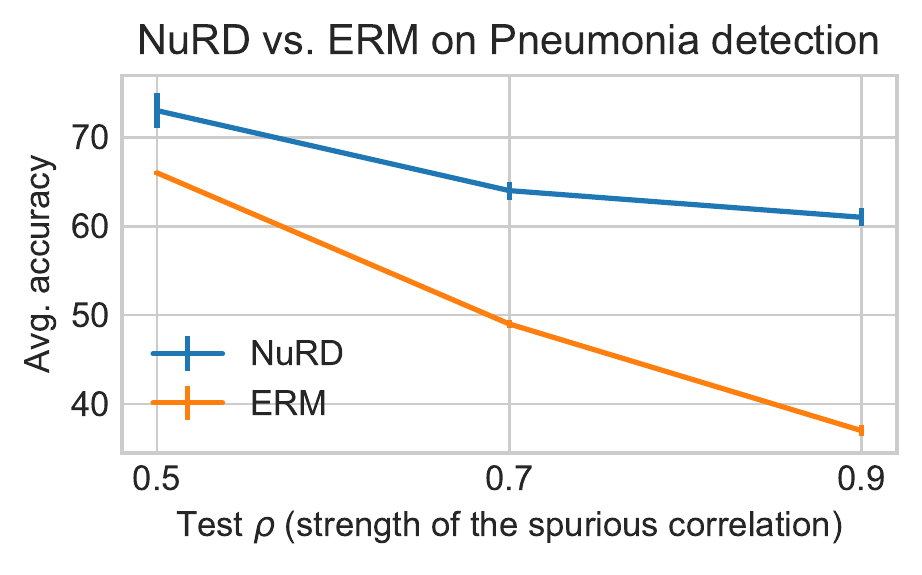}
  \end{minipage}
  \caption{Plots of average accuracy vs. test $\rho$ for classifying Waterbirds and Pneumonia. A larger $\rho$ implies a larger difference between the nuisance-label relationship in the test data used for evaluation and the training data, which has a $\rho=0.1$. Nuisance-randomized data corresponds to $\rho=0.5$. Unlike \gls{nurd}, \gls{erm}'s performance quickly degrades as the difference between the train and the test distributions increases.}
  \label{fig:alt-corr}
  \end{figure}

\paragraph{Nuisance specification with different borders.}
For Waterbirds, we ran NuRD with the pixels outside the central 168x168 patch (a 56 pixel border) as the nuisance. Averaged over 10 seeds, reweighting-NuRD produced a model with $81\%$ test accuracy which is similar to the accuracy achieved by \gls{nurd} using a 28-pixel border as the nuisance.
In comparison, \gls{erm} achieves an accuracy of $66\%$.

\paragraph{\gls{nurd} without a nuisance and no nuisance-induced spurious correlations in classifying Waterbirds.}

We performed an additional experiment on classifying Waterbirds where \gls{nurd} is given a constant nuisance which is equivalent to not using the nuisance.
We generated training and test data with independence between the nuisance and the label; the nuisance-label relationship does not change between training and test.
Averaged over 10 seeds, \gls{erm} achieved a test accuracy of $89\pm 0.4\%$ and \gls{nurd} achieved a test accuracy of $88\pm 1\%$.

\paragraph{Reweighting-\gls{nurd} with different $\lambda$.}
Large $\lambda$s may make optimization unstable by penalizing even small violations of joint independence. 
Such instabilities can lead \gls{nurd} to build predictive models that do not do better than marginal prediction, resulting in large distillation loss (log-likelihood + information loss) on the validation subset of the training data. However, a small $\lambda$ may result in \gls{nurd} learning non-uncorrelating representations which can also perform worse than chance.

We ran \gls{nurd} on the waterbirds and class-conditional Gaussians experiments with $\lambda=5$ (instead of $\lambda=1$ like in \cref{sec:exps}) and found that, on a few seeds, \gls{nurd} produces models with close to $50\%$ accuracy (which is the same as majority prediction) or large information loss or both.
Excluding seeds with large validation loss, reweighting-\gls{nurd} achieves an average test accuracy of $76\%$ on waterbirds and $61\%$ on class-conditional Gaussians.
Annealing the $\lambda$ during training could help stabilize optimization.

In setting the hyperparameter $\lambda$ in general, a practitioner should choose the largest $\lambda$ such that optimization is still stable over different seeds and the validation loss is bounded away from that of predicting without any features.

\end{document}